\title{On the regularization of Wasserstein GANs}
\author{Henning Petzka\thanks{Equal contributions} 
\\
Fraunhofer Institute IAIS, \\Sankt Augustin, Germany\\
\texttt{henning.petzka@gmail.com}
 \\
\And
Asja Fischer$^*$\& Denis Lukovnikov \\
Department of Computer Science,\\
University of Bonn, Germany \\
\texttt{asja.fischer@gmail.com} \\
\texttt{lukovnik@cs.uni-bonn.de}\\
}
\newcommand{\RR}[1][]{\mathbb{R}^{#1}}
\newcommand{\norm}[1]{||{#1}||_2}
\newcommand{\Li}[1]{\mathcal{L}ip_{#1}}
\newcommand{\supp}[1]{\text{supp}({#1})}
\newcommand{\relu}{\text{relu}}
\newcommand{\sgn}[1]{\text{sgn}({#1})}
\newcommand{\suppo}[1]{\text{supp}_\circ({#1})}
\newtheorem{theorem}{Theorem}
\newtheorem{definition}{Definition}
\newtheorem{observation}{Observation}
\newtheorem{proposition}{Proposition}
\newtheorem{lemma}{Lemma}
\newtheorem{remark}{Remark}
\begin{document}

\maketitle

\begin{abstract}Since their invention, generative adversarial networks (GANs) have become a popular approach for learning to model a distribution of real (unlabeled) data. Convergence problems during training are overcome by Wasserstein GANs which minimize the distance between the model and the empirical distribution in terms of a different metric, but thereby introduce a Lipschitz constraint into the optimization problem. A simple way to enforce the Lipschitz constraint on the class of functions, which can be modeled by the neural network, is weight clipping. Augmenting the loss by a regularization term that penalizes the deviation of the gradient norm of the critic (as a function of the network's input) from one, was proposed as an alternative that improves training. We present theoretical arguments why using a weaker regularization term enforcing the Lipschitz constraint is preferable. These arguments are supported by experimental results on several data sets.
\end{abstract}

\section{Introduction}

General adversarial networks (GANs) \citep{GAN} are a class of generative models that have recently gained a lot of attention. They are based on the idea of defining a game between two competing neural networks (NNs): a generator and a classifier (or discriminator). While the classifier aims at distinguishing generated from real data, the generator tries to generate samples which the classifier can not distinguish from the ones from the empirical distribution.  
Realizing the potential behind this new approach to generative models, more recent contributions focused on the stabilization of training, including ensemble methods \citep{adaGAN}, improved network structure \citep{UnsupervizedRepresentationLearningWithDeepConvolutionalGenerativeAdversarialNetworks,iGAN} and theoretical improvements \citep{fGAN,iGAN,ArjovskyGANs,infoGAN} that helped to successfully model complex distributions using GANs.

It was proposed by \citet{WGAN} to train generator and discriminator networks by minimizing the Wasserstein-1 distance, a distance with properties superior to the Jensen-Shannon distance (used in the original GAN) in terms of convergence. Accordingly, this version of GAN was called Wasserstein GAN (WGAN). The change of metric introduces a new minimization problem, which requires the discriminator function to lie in the space of 1-Lipschitz functions.  In the same paper, the Lipschitz constraint was guaranteed by performing weight clipping, i.e.,~by constraining the parameters of the discriminator NN to be smaller than a given value in magnitude. An improved training strategy was proposed by \citet{iWGAN} based on results from optimal transport theory  \citep[see][]{optimalTransport}. Here, instead of clipping weights, the loss gets augmented by a regularization term that penalizes any deviation of the norm of the gradient of the critic function (with respect to its input) from one.

We review these results and present both theoretical considerations and empirical results, leading to the proposal of a less restrictive regularization term for WGANs.\footnote{In the blog post \url{https://lernapparat.de/improved-wasserstein-gan/} which was written simultaneously to our work, the author presents some ideas that follow a similar intuition as the one underlying our arguments.}
More precisely, our contributions are as follows:
\begin{itemize}
\item We review the arguments that the regularization technique proposed by \citet{iWGAN} is based on and make the following two observations: 
(i) The regularization strategy 
requires training samples and generated samples to be drawn from a certain joint distribution. In practice, however, samples are drawn independently from their marginals. 
(ii) The arguments 
further assume the discriminator to be differentiable. 
We explain why both can be harmful for training.
\item We 
propose a less restrictive regularization term and present empirical results
strongly supporting our theoretical considerations.
\end{itemize}


\section{Optimal Transport}\label{sec:optimalTransport}

We will require the notion of a coupling of two probability distributions. Although a coupling can be defined more generally, we state the definition in the setting of our interest, i.e., we consider all spaces involved to equal $\RR[n]$.
\begin{definition}
Let $\mu$ and $\nu$ be two probability distributions on $\RR[n]$. A coupling $\pi$ of $\mu$ and $\nu$ is a probability distribution on $\RR[n]\times \RR[n]$ such that $\pi(A,\RR[n])=\mu(A)$ and $\pi(\RR[n],A)=\nu(A)$ for all measurable sets $A\subseteq \RR[n]$.
The set of all couplings of $\mu$ and $\nu$ is denoted by $\Pi(\mu,\nu)$.
\end{definition}
%
The following theorem plays a central role in the theory of optimal transport (OT) and is known as the Kantorovich duality.
 Note, that the presented theorem is a less general, but to our needs adapted version of Theorem~5.10 from \citet{optimalTransport}.\footnote{In 
the work of 
\citet{WGAN}, the theorem is called Kantorovich-Rubinstein, although this theorem only applies to compact metric spaces. There are several generalizations of the Kantorovich-duality. For a detailed account we refer the reader to
\citet{Kantorovich_Rubinstein_theorem_Edwards}.} A proof of how to derive our version from the referenced one can be found in Appendix~\ref{appendix:kantorovich}. We will denote by  $\Li{1}$ the set of all 1-Lipschitz functions, i.e., the set of all functions $f$ such that $f(y)-f(x)\leq \norm{x-y}$ for all $x,y$.

\begin{theorem}[Kantorovich]\label{thm:Kantorovich}
Let $\mu$ and $\nu$ be two probability distributions on $\RR[n]$ such that $\int_{\RR[n]}\norm{x}\ d\mu(x)<\infty\text{ and } \int_{\RR[n]}\norm{x}\ d\nu(x)<\infty.$ Then

(i)
\vspace{-0.5cm}
\begin{equation} \label{eq:duality}
\min_{\pi \in \Pi(\mu,\nu)} \int_{\RR[n]\times\RR[n]} \norm{x-y}\ d\pi(x,y)
= \max_{f \in \Li{1}} \left ( \int_{\RR[n]} f(x)\ d\mu(x) - \int_{\RR[n]} f(x)\ d\nu(x)\right ) \enspace.
\end{equation}
 In particular, both minimum and maximum exist.

(ii)
The following two statements are equivalent:
\begin{itemize}
\item[(a)] $\pi^*$ is an optimal coupling (minimizing 
the value on the left hand side of \eqref{eq:duality}).
\item[(b)] Any optimal function $f^*\in \Li{1}$ (at which the  maximum is attained for the right hand side of \eqref{eq:duality}) satisfies that for all $(x,y)$ in the support of $\pi^*$:  
$f^*(x)-f^*(y) = ||x-y||_2 $.
\end{itemize}
\end{theorem}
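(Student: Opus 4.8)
The plan is to derive both implications in part (ii) from a single ``weak duality'' computation, taking part (i) as given. Write $W$ for the common value of the two sides of \eqref{eq:duality}. The first step is the basic identity: since $\pi\in\Pi(\mu,\nu)$ has first marginal $\mu$ and second marginal $\nu$, for any $f\in\Li{1}$ we have $\int f\, d\mu - \int f\, d\nu = \int \big(f(x)-f(y)\big)\, d\pi(x,y)$, where finiteness of every integral is guaranteed by the moment hypotheses on $\mu,\nu$ (a $1$-Lipschitz $f$ grows at most linearly, and $\norm{x-y}\leq\norm{x}+\norm{y}$). Because $f\in\Li{1}$ gives $f(x)-f(y)\leq\norm{x-y}$ pointwise, the integrand
\[
g_f(x,y) := \norm{x-y} - \big(f(x)-f(y)\big) \geq 0
\]
is non-negative, and integrating yields the key inequality
\[
\int_{\RR[n]\times\RR[n]} \norm{x-y}\, d\pi(x,y) \;-\; \left(\int f\, d\mu - \int f\, d\nu\right) \;=\; \int g_f\, d\pi \;\geq\; 0 \,.
\]

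Next I would specialize to an optimal $f^*$, for which $\int f^*\, d\mu - \int f^*\, d\nu = W$ by part (i). The displayed identity then reads $\int \norm{x-y}\, d\pi = W + \int g_{f^*}\, d\pi$, so a coupling $\pi$ attains the minimum (which equals $W$, again by part (i)) if and only if $\int g_{f^*}\, d\pi = 0$. Since $g_{f^*}\geq 0$, this integral vanishes exactly when $g_{f^*}=0$ holds $\pi$-almost everywhere, i.e.\ $f^*(x)-f^*(y)=\norm{x-y}$ for $\pi$-a.e.\ $(x,y)$. To upgrade ``almost everywhere'' to ``everywhere on $\supp{\pi}$'' I would use that $f^*$ is continuous (being Lipschitz) and $\norm{\cdot}$ is continuous, so the zero set of $g_{f^*}$ is closed; a closed set of full $\pi$-measure necessarily contains $\supp{\pi}$, which is by definition the smallest such closed set.

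Putting these together gives both directions. For (a)$\Rightarrow$(b): if $\pi^*$ is optimal then $\int g_{f^*}\, d\pi^* = 0$ for every optimal $f^*$, and the continuity argument yields $f^*(x)-f^*(y)=\norm{x-y}$ on $\supp{\pi^*}$ for each of them. For (b)$\Rightarrow$(a): part (i) guarantees that at least one optimal $f^*$ exists; applying (b) to it gives $g_{f^*}=0$ on $\supp{\pi^*}$, hence $\pi^*$-a.e., hence $\int g_{f^*}\, d\pi^* = 0$, which forces $\int\norm{x-y}\, d\pi^* = W$ and so $\pi^*$ is optimal. I expect the only genuinely delicate point to be the ``a.e.\ $\Rightarrow$ on the support'' passage, where the Lipschitz-continuity of $f^*$ is exactly what is needed; the existence of some optimal $f^*$, used crucially in the converse, is already supplied by part (i) and so poses no additional obstacle.
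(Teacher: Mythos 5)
Your argument for part (ii) is correct, but it is a genuinely different route from the paper's. The paper does not prove part (ii) from first principles: it reduces the whole theorem to Theorem~5.10 of Villani's book, obtaining (i) from the identification of $c$-convexity with 1-Lipschitz continuity and (ii) from the set $\Gamma=\bigcap_{f^*\text{ optimal}}\Gamma_{f^*}$ appearing in Villani's statement, where $\Gamma_{f}=\{(x,y)\mid f(x)-f(y)=\norm{x-y}\}$. You instead run a self-contained complementary-slackness argument: the marginal identity turns the duality gap of a pair $(\pi,f)$ into the integral of the nonnegative function $g_f=\norm{x-y}-(f(x)-f(y))$, so optimality of $\pi$ against an optimal $f^*$ is equivalent to $\int g_{f^*}\,d\pi=0$, i.e.\ to $g_{f^*}=0$ $\pi$-a.e., and continuity of $g_{f^*}$ upgrades this to equality on $\supp{\pi}$. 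This buys several things: the integrability bookkeeping (linear growth of a Lipschitz $f$ against the first-moment hypotheses) and the a.e.-to-support passage are made explicit, and your converse needs only \emph{one} optimal $f^*$, whereas the paper's converse is phrased through the full intersection $\Gamma$ (and is slightly delicate there, since an uncountable intersection of full-measure sets need not have full measure unless one routes through the support as you do). What the paper's approach buys is part (i) itself --- existence of the optimal coupling and of the maximizer, and the duality identity --- which your proposal explicitly assumes and never establishes. That is a real omission if the goal is the whole theorem, but it puts you on essentially the same footing as the paper, which also obtains (i) only by citation; your contribution is a cleaner, reference-free proof of the equivalence in (ii) given (i).
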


The field of OT offers several approaches to the computation of optimal couplings. To speed up the computation of an optimal coupling, \citet{Cuturi} introduced a
regularized 
version of the primal problem 
in which 
an entropic term $E(\pi)$ is added leading to the minimization of  
$\int_{\RR[n]\times\RR[n]} \norm{x-y}\ d\pi(x,y)+ \epsilon E(\pi)$, with regularization parameter $\epsilon$. Regularized OT was generalized by \citet{regulOTRotMover} to a more general class of regularization terms $\Omega(\pi)$. 
As we will discuss in  Section~\ref{sec:proposal},
the learning algorithm we propose in this paper has connections to the approach using $\Omega(\pi)=\int \left ( \frac{ \text{d}\pi(x,y)}{\text{d}\mu(x)\ \text{d}\nu(y)} \right )^2 \text{d}\mu(x)\ \text{d}\nu(y)$. By \citet{Blondel}, this leads to the dual problem given by 
\begin{equation}\label{eq:dualRegOT}
\sup_{f,g} \left \{ \mathbb{E}_{x\sim \mu}[f(x)] - \mathbb{E}_{y\sim \nu}[g(y)]-\frac{4}{\epsilon} \int \int \max \left \{ 0 , \left ( f(x) -g(y) - \norm{x-y}  \right ) \right \}^2 \text{d}\mu(x)\text{d}\nu(y)\right \} \enspace.
\end{equation}



\section{Wasserstein GANs}\label{sec:WGANs}

Formally,  given an empirical distribution $\mu$, a class of generative distributions $\nu$ over some space $\mathcal{X}$, and a class of discriminators $d: \mathcal{X} \rightarrow [0,1]$, GAN training \citep{GAN} aims at solving the optimization problem given by $\min_{\nu} \max_{d}\mathbb{E}_{x \sim \mu} [\log(d(x))] + \mathbb{E}_{y \sim \nu} [\log (1-d(y))]$.\footnote{Usually, both the generative distribution and the discriminator are modeled by NNs, whose structure determine the two classes we are optimizing over.}
In practice, the parameters of the generator and the discriminator networks are updated in an alternating fashion based on (several steps) of stochastic gradient descent. The discriminator thereby tries to assign a value close to zero to generated data points and values close to one to real data points. As an opposing agent, the generator aims to produce data where the discriminator expects to see real data. Theorem~1 by \citet{GAN} shows that, if the optimal discriminator is found in each iteration, minimization of the resulting loss function of the generator leads to minimization of the \emph{Jensen-Shannon} (JS) divergence.
Instead of minimizing the JS divergence, \citet{WGAN} proposed to minimize the \emph{Wasserstein-1} distance, also known as \emph{Earth-Mover} (EM) distance, which is defined for any Polish space $(M,c)$ and probability distributions $\mu$ and $\nu$ on $M$ by
\begin{equation}\label{eq:EM}
W(\mu, \nu)=\inf_{\pi \in \Pi(\mu,\nu)} \int_{M\times M} c(x,y)\ d\pi(x,y) \enspace.
\end{equation}
From the Kantorovich duality (see Theorem~\ref{thm:Kantorovich}, (i))
it follows that, in the special case we are considering, the infimum is attained and, instead of computing this 
minimum 
in Equation~\eqref{eq:EM}, the {Wasserstein-1} distance can also be computed as
\begin{equation}\label{eq:EM-Dual}
W(\mu, \nu)= \max_{f \in  \Li{1}} \mathbb{E}_{x\sim \mu}[f(x)] - \mathbb{E}_{y\sim \nu}[f(y)] \enspace,
\end{equation}
where the maximum is taken over the set of all 1-Lipschitz functions $\Li{1}$. 

Thus, the WGAN objective is to solve 
\begin{equation}\label{eq:WGANGame} \min_\nu \max_{f\in \Li{1}} \mathbb{E}_{x\sim \mu} [f(x)] - \mathbb{E}_{y\sim \nu}[f(y)]\enspace, \end{equation}
which can be achieved by alternating gradient descent updates for the generating network  $\nu$ and the {1-Lipschitz} function $f$ (also modeled by a NN), just as in the case of the original GAN. The objective of the generator is still to generate real-looking data points and is led by function values of $f$ that plays the role of
an appraiser (or critic). The appraiser's goal is to assign a value of confidence to each data point, which is as low as possible on generated data points and as high as possible on real data. The confidence value it can assign is bounded by a constraint of similarity, where similarity is measured by the distance of data points. This can be motivated by the idea that similar points should have similar values of confidence for being real. The new role of the critic helps to solve convergence problems,
 but the interpretation of its value as classifying real (close to 1) and fake data (close to 0) is lost. We refer to Appendix~\ref{appendix:criticFunction} for a detailed discussion.




\section{Improved training of WGANs}\label{sec:iWGAN}

Modeling the WGAN critic function by a NN raises the question on how to enforce the 1-Lipschitz constraint of  the objective in Equation~\eqref{eq:WGANGame}. 
%
As proposed by \citet{WGAN} a simple way to restrict the class of functions $f$ that can be modeled by the NN to $\alpha$-Lipschitz continuous functions (for some $\alpha$) is to perform weight clipping, i.e.~to enforce the parameters of the network not to exceed a certain value $c_{\text{max}}>0$ in absolute value. As the authors note, this is not a good but simple choice. We further demonstrate this in Appendix~\ref{appendix:weightClipping} by proving (for a standard NN architecture) that, using weight clipping, the optimal function is in general not contained in the class of functions modeled by the network.

Recently, an alternative to weight clipping was proposed by \citet{iWGAN}.
The basic idea is to augment the WGAN loss by a regularization term that penalizes the deviation of the gradient norm of the critic with respect to its input from one (leading to a variant referred to as WGAN-GP, where GP stands for gradient penalty).
More precisely, the loss of the critic to be minimized is then given by
\begin{equation}\label{eq:iWGAN}
\mathbb{E}_{y\sim \nu}[f(y)] -\mathbb{E}_{x\sim \mu}[f(x)]  + \lambda \mathbb{E}_{\hat x \sim \tau}[(||\nabla f(\hat x)||_2 - 1)^2] \enspace, \end{equation} where $\tau$ is the distribution of $\hat x = tx+(1-t)y$ for $t\sim U[0,1]$ and $x\sim\mu$, $y\sim \nu$ being a real and a generated sample, 
respectively. The regularization term is derived based on the following result. 
\begin{proposition}\label{prp:WGan}
Let $\mu$ and $\nu$ be two probability distributions on $\RR[n]$. Let $f^*$ be an optical critic, leading to the maximum 
$
\max_{f \in \Li{1}}  \int_{\RR[n]} f(x)\ d\mu(x)  - \int_{\RR[n]} f(x)\ d\nu(x) \enspace,
$
and let $\pi^*$ be an optimal coupling with respect to
$\min_{\pi \in \Pi(\mu,\nu)}\int_{\RR[n]\times\RR[n]} \norm{x-y}\ d\pi(x,y) \enspace.
$
If $f^*$ is differentiable and $x_t = tx+(1-t)y$ for $0\leq t \leq 1$, it holds that
$
\mathbb{P}_{(x,y)\sim \pi^*} \Big[(\nabla f^*(x_t)=\frac{y-x_t}{||y-x_t||}) \Big]=1 \enspace .
$
This in particular implies, that the norms of the gradients are one $\pi^*$-almost surely on such points $x_t$.
\end{proposition}

For the convenience of the reader, we provide a simple argument for obtaining this result in Appendix~\ref{app:simpleProof}. 

Note, that Proposition \ref{prp:WGan} holds only when $f^*$ is differentiable and $x$ and $y$ are sampled from the optimal coupling $\pi^*$. However, sampling independently from the marginal distributions $\mu$ and $\nu$ very likely results in points $(x,y)$ that lie outside the support of $\pi^*$. Furthermore, the optimal cost function $f^*$ does not need not to be differentiable everywhere. These two points will be discussed in more detail in the following subsections.

\subsection{Sampling from the marginals instead of the optimal coupling}\label{sct:samplingMarginals}

\begin{observation}
Suppose $f^*\in\Li{1}$ is an optimal critic function and $\pi^*$ the optimal coupling determined by the Kantorovich duality in Theorem~\ref{thm:Kantorovich}. Then $|f^*(y)-f^*(x_t)|=\norm{x_t-y}$ on the line  $x_t=tx+(1-t)y,\ 0\leq t \leq 1$, 
for $(x, y)$ sampled from $\pi^*$, but not necessarily on the lines connecting an arbitrary pair of a real and a generated data point,
i.e.~ arbitrary $x \sim \mu$ and $y \sim \nu$.
\end{observation}


Consider the examples in Figure~\ref{CouplingsOnly}, where every 
X denotes a sample from the generator and every O a real data sample. Optimal couplings $\pi^*$ are indicated in red, and values of an optimal critic function are indicated in blue (optimality is shown in Appendix~\ref{sct:provingOptimality}).


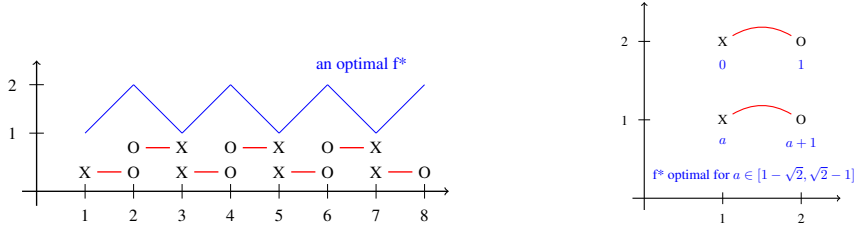
\begin{figure}[h]
\begin{center}
\resizebox{6cm}{!}{
\begin{tikzpicture}
\draw[thick, ->] (-0.3,0) -- (8.5,0) ;
\draw[thick, ->] (0,-0.3) -- (0,2.7) ;
\draw[ domain=1:2,variable=\x,blue] plot ({\x},{0.2+\x});
\draw[ domain=2:3,variable=\x,blue] plot ({\x},{4.2-\x});
\draw[ domain=3:4,variable=\x,blue] plot ({\x},{-1.8+\x});
\draw[ domain=4:5,variable=\x,blue] plot ({\x},{6.2-\x});
\draw[ domain=5:6,variable=\x,blue] plot ({\x},{-3.8+\x});
\draw[ domain=6:7,variable=\x,blue] plot ({\x},{8.2-\x});
\draw[ domain=7:8,variable=\x,blue] plot ({\x},{-5.8+\x});

\foreach \i in {1,...,8}
	{\draw[-] (\i,-0.15) -- (\i,0.15) ;
	\node(\i) at (\i,-0.5) {\i};}

\draw[-] (-0.15,1.2) -- (0.15,1.2) ;
	\node(y1) at (-0.5,1.2) {1};
\draw[-] (-0.15,2.2) -- (0.15,2.2) ;
	\node(y1) at (-0.5,2.2) {2};

\node(x1) at (1,0.4) {X};
\node(o1) at (2,0.4) {O};
\node(o2) at (2,0.9) {O};
\node(x2) at (3,0.9) {X};
\node(x3) at (3,0.4) {X};
\node(o3) at (4,0.4) {O};
\node(o4) at (4,0.9) {O};
\node(x4) at (5,0.9) {X};
\node(x5) at (5,0.4) {X};
\node(o5) at (6,0.4) {O};
\node(o6) at (6,0.9) {O};
\node(x6) at (7,0.9) {X};
\node(x7) at (7,0.4) {X};
\node(o7) at (8,0.4) {O};
\node[blue](text) at (6.7,2.6){an optimal f*};

\foreach \i in {1,...,7}
	\draw[thick, -, red]  (x\i) -- (o\i); 


\end{tikzpicture}}
\hspace{2cm}
\resizebox{!}{3cm}{
\begin{tikzpicture}
\draw[thick, ->] (-0.3,0) -- (5,0) ;
\draw[thick, ->] (0,-0.3) -- (0, 5) ;

\foreach \i in {1,2}
	{\draw[-] (2*\i,-0.15) -- (2*\i,0.15) ;
	\node(\i1) at (2*\i,-0.5) {\i};
	\draw[-] (-0.15,2*\i) -- (0.15,2*\i) ;
	\node(\i2) at (-0.5,2*\i) {\i};}

\node(x1) at (2,2) {X};
\node(x2) at (2,4) {X};
\node(o1) at (4,2) {O};
\node(o2) at (4,4) {O};
\node[blue, below =0.1 of x1]{$a$};
\node[blue, below =0.1 of o1]{$a+1$};
\node[blue, below =0.1 of x2]{$0$};
\node[blue, below =0.1 of o2]{$1$};
\node[blue](text) at (2.8,0.6){f* optimal for $a\in [1-\sqrt{2} ,\sqrt{2}-1]$};
\foreach \i in {1,2}
	\path[-] (x\i)  edge[bend left, red, thick] node[left] {} (o\i);
\end{tikzpicture}}
\caption{
A one (left) and a two (right) dimensional example showing
that $f^*$(\text{O})-$f^*$(\text{X})=$|$\text{O}-\text{X}$|$ only holds for coupled pairs (X,O) $\sim \pi^*$.
}
\label{CouplingsOnly}
\end{center}
\end{figure}

In the one-dimensional example on the left, 
the left-most X and the right-most O satisfy $f^*(\text{O})-f^*(\text{X})=\frac{1}{7}|\text{O}-\text{X}| \neq |\text{O}-\text{X}|$, illustrating that the basis for the derivation of the condition, that the norm of the gradient equals one between generated and real points, only holds for points sampled from the optimal coupling. Note, while here the gradient is still of norm 1 almost everywhere, this does not necessarily hold in higher dimensions, where not all points lie on a line between some pair of points sampled from $\pi^*$. This is exemplified for two dimensions on the right side of Figure \ref{CouplingsOnly}, where
blue numbers with $a\in\RR$ denote the values of an optimal critic function at these points (the values at these points is all that matters). 
Without loss of generality we can assume
the value at position $(1,2)$ to be zero, taking into account that an optimal critic function remains optimal under addition of an arbitrary constant. 
%
Since the Lipschitz constraint of $f^*$ must be satisfied, we get $1-a\leq \sqrt{2}$ and $a+1\leq \sqrt{2}$. Therefore $a\in [1-\sqrt{2} ,\sqrt{2}-1]$ and one of the inequalities of the Lipschitz constraint must be strict.


%
%


\subsection{Differentiability of the critic}\label{sct:diffCritic}

\begin{observation}
The assumption of differentiability of the optimal critic is not valid at points of interest.
\end{observation}

Consider the example of two discrete probability distributions and its optimal critic function $f^*$ shown on the left in Figure~\ref{fig:NoDif}. 
 We can see that the indicated function $f^*(x)=1-|x|\in \Li{1}$ is optimal as it leads to an equality in the equation of the Kantorovich dual. (Also, it is the only continuous function, up to a constant, that realizes $f^*(x)-f^*(y) =|y-x|$ for coupled points $(x,y)$.) However, it is not differentiable at 0.
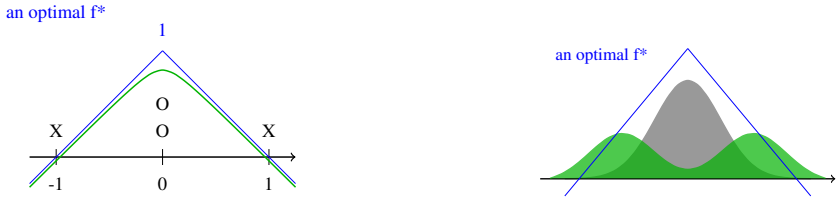
\begin{figure}[h]
\hspace{1cm}
\resizebox{4cm}{!}{
\begin{tikzpicture}

\draw[thick, ->] (-2.5,0) -- (2.5,0) ;

\draw[-] (-2,-0.15) -- (-2,0.15) ;
\node at (-2,-0.5) {-1};
\draw[-] (0,-0.15) -- (0,0.15) ;
\node at (0,-0.5) {0};
\draw[-] (2,-0.15) -- (2,0.15) ;
\node at (2,-0.5) {1};

\draw[domain=-2.5:0,smooth,variable=\x,blue] plot ({\x},{2+\x});
\draw[domain=0:2.5,smooth,variable=\x,blue] plot ({\x},{2-\x});

\node(x1) at (-2,0.5) {X};
\node(o1) at (0,0.5) {O};
\node(o2) at (0,1) {O};
\node(x2) at (2,0.5) {X};

\node[blue](value) at (0,2.4) {1};
\node[blue](value) at (-2,2.7) {an optimal f*};

\draw[domain=-2.5:2.5,smooth,variable=\x, black!30!green, thick] plot ({\x},{-0.05+sqrt(4.01)-sqrt(1/10+(\x*\x))});
\end{tikzpicture}}
\hspace{3cm}
\resizebox{4cm}{!}{
\begin{tikzpicture}

\draw[thick, ->] (-3,0) -- (3,0) ;

\draw[domain=-2.5:2.5,smooth,variable=\x, black, thick, fill,opacity=0.4] plot ({\x},{2*exp(-(1.5*\x)^2/2)});
\draw[domain=-2.8:2.8,smooth,variable=\x, black!30!green, thick,fill,opacity=0.7] plot ({\x},{-0.08+exp(-(1.5*\x-2)^2/2)+exp(-(1.5*\x+2)^2/2)});

\node[blue](value) at (-1.75,2.5) {an optimal f*};

\draw[domain=-2.5:0,smooth,variable=\x,blue] plot ({\x},{2.65+1.2*\x});
\draw[domain=0:2.5,smooth,variable=\x,blue] plot ({\x},{2.65-1.2*\x});

\end{tikzpicture}}
\caption{Non-differentiable optimal critic functions f$^*$  (shown in blue).
Left: For two discrete distributions: Circles and crosses belong to samples from the empirical distribution and the generative model, respectively. 
An approximating differentiable function is shown in green.
Right: For two continuous distributions: The empirical distribution $\mu$ is shown in gray, the generative distribution $\nu$ is shown in green.} 
\label{fig:NoDif}


%
%
%
%
%
\end{figure}

The counterexample can be made continuous by considering the points as the center points of Gaussians, as illustrated on the right in Figure \ref{fig:NoDif}. This is formalized by the following proposition showing that the critic indicated in blue is indeed optimal for the depicted gray Gaussian of real data and the green mixture of two Gaussians of generated data. 


\begin{proposition}\label{prp:Gaussians}
Let $\mu=\mathcal{N}(0,1)$ be a normal distribution centered around zero and $\nu=\nu_{-1}+\nu_{1}$ be a mixture of the two normal distributions $\nu_{-1}=\frac{1}{2}\mathcal{N}(-1,1)$  and $\nu_{1}=\frac{1}{2}\mathcal{N}(1,1)$ over the real line.
If $\mu$ describes the distribution of real data and $\nu$ describes 
the distribution of the generative model,
then an optimal critic function is given by $\phi^*(x)=-|x|.$
\end{proposition}

The proof can be found in Appendix~\ref{appendix:NoDifGaussian}.

The issue with non-differentiability can be generalized to higher-dimensional spaces based on the observation that an optimal coupling is in general not deterministic. Deterministic couplings are particularly nice in the sense that they allow a transport plan assigning each point $x$ from one distribution deterministically to a point $y$ of the other distribution, without having to split any masses
(the search for deterministic optimal couplings is called the Monge problem). However, in a lot of settings no deterministic coupling exists.
The notion of a deterministic coupling is formalized in the following definition.
\begin{definition}
Let $(X,\mu)$ and $(Y,\nu)$ be two probability spaces. A coupling $\pi\in \Pi(\mu,\nu)$ is called deterministic if there is a measurable function $\rho:X\rightarrow Y$ such that $\supp{\pi}\subseteq\{(x,\rho(x))\ |\ x\in X\}.$
\end{definition}

We can now formulate the following observation.
%
%
%
\begin{observation}\label{obs:nonDifMultiDim}
Suppose $\pi^*$ is a non-deterministic optimal coupling between two probability distributions over $\RR[n]$ so that there exist points $(x,y)$ and $(x,y')$ in $\supp{\pi^*}$. Suppose further that there is no $\lambda>0$ with $(y-x)=\lambda\cdot (y'-x)$ (in particular this implies $y\neq y'$). Then any optimal critic function $f^*$ is not differentiable at $x$. 
\end{observation}
The arguments can be found in Appendix~\ref{app:nonDifMultiDim}.
 
In practice, where the optimal critic is approximated by a NN, the situation is slightly different:
A function modeled by an NN is (almost) everywhere differentiable (depending on the activation functions). By the Stone-Weierstrass theorem, on compact sets, we can approximate any \mbox{(Lipschitz-)}continuous function by differentiable functions uniformly. Nevertheless, it seems to be a strong constraint on an approximating function to have a gradient of norm one in the neighborhood of a non-differentiability (cf. Figure~\ref{fig:NoDif}~(a)). Therefore, we argue  -- in contrast to the argumentation of \citet{iWGAN} -- that the gradient should not be assumed to equal one for arbitrary points on the line between an arbitrary real point $x$ and a generated point $y$.


%

\section{How to regularize WGANs}\label{sec:proposal}

In the following, we will discuss how the regularization of WGANs can be improved.

\paragraph{Penalizing the violation of the Lipschitz constraint.}\label{sct:penalizingCorrectly}

For the critic function, we have nothing more at hand than the inequality of the Lipschitz-constraint. Moreover (as shown in Lemma \ref{lma:One} in the 
Appendix) the exhaustion of the Lipschitz constant is automatic by maximizing the objective function. Therefore, a natural choice of regularization is to penalize the given constraint directly, i.e., sample two points $x\sim \mu$ and $y \sim \nu$ from the empirical and the generated distribution respectively and add the regularization term 
\begin{equation}\label{eq:LP-simple}
\Big( \max \left \{ 0 , \frac{|f(x)-f(y)|}{\norm{x-y}} -1 \right \} \Big)^2
\end{equation}
to the cost function. (We square to penalize larger deviations more than smaller ones.)  Note the similarity of the regularization term to the squared Hinge loss, which is also used to turn a hard constraint into a soft one in the optimization problem connected to support vector machines. 

Alternatively, since the NN generates (almost everywhere) differentiable functions, we can penalize whenever gradient norms are strictly larger than one, an option referred to as ``one-sided penalty'' and shortly discussed as an alternative to penalizing any deviation from one by \citet{iWGAN}\footnote{While the authors note that ``In practice, we found this [using the GP or two-sided penalty] to converge slightly faster and to better optima. Empirically this seems not to constrain the critic too much...'', our experiments point towards another conclusion.}.
Note that enforcing the gradient to be smaller than one in norm has the advantage that we penalize when the partial derivative has norm $> 1$ into the direction of steepest descent. Hence, all partial derivatives are implicitly enforced to be bounded in norm by one, too. At the same time, enforcing $\leq 1$ for the gradient of smooth approximating functions is not an unreasonable constraint even at points of non-differentiability. For these reasons we suggest 
to add the regularization term
$\big(\max \left \{ 0 , ||\nabla f (\hat x)|| -1 \right \}\big)^2$
to the cost function. Different ways of sampling the point $\hat x$ are analyzed in Appendix
\ref{app:differentSampling}.
%
%
Thus, our proposed method (WGAN-LP, where LP stands for Lipschitz penalty) alternates between updating the discriminator to minimize
\begin{equation}\label{eq:WGAN-LP}
  \mathbb{E}_{y\sim \nu}[f(y)]  -\mathbb{E}_{x\sim \mu}[f(x)]+ \lambda \mathbb{E}_{\hat x \sim \tau}[\left( \max \left \{ 0 , ||\nabla f (\hat x)|| -1 \right \} \right)^2] \enspace,
\end{equation}
(where $\tau$ depends on the concrete sampling strategy chosen)
and updating the generator network modeling $\nu$ to minimize $-\mathbb{E}_{y\sim \nu}[f(y)]$ using gradient descent.

%



\paragraph{The connection to regularized optimal transport.}

Consider Equation~\eqref{eq:dualRegOT} of regularized OT. For a hard constraint $f(x)-g(y)\leq \norm{x-y}$, one can attain the supremum over $\mathbb{E}_{x\sim \mu}[f(x)] - \mathbb{E}_{y\sim \nu}[g(y)]-0$ by setting $f(x)=\inf_y g(y) + \norm{x-y}=g(x)$ and subsequently maximize over one function only. Taking the advantage of dealing with a single function as a motivation, one may similarly replace $f=g$ in Equation~\ref{eq:dualRegOT}, which uses a soft constraint (even though this can now only approximate the supremum). This leads to an objective of minimizing 
\begin{equation}\label{eq:OPT-L2}
 \mathbb{E}_{y\sim \nu}[f(y)] - \mathbb{E}_{x\sim \mu}[f(x)]+ \frac{4}{\epsilon} \int \int \max \left \{ 0 , \left ( f(x) -f(y) - \norm{x-y}  \right ) \right \}^2 \text{d}\mu(x)\text{d}\nu(y)
\end{equation}
that, similarly to Equation~\eqref{eq:LP-simple}, softly penalizes whenever $f(x)-f(y) > \norm{x-y}$ for a real sample $x$ and  a generated sample $y$. It is noteworthy that to justify the replacement $f=g$ one would require a high regularization parameter $\lambda=\frac{4}{\epsilon}$ of the dual problem, which corresponds to a low regularization of the primal problem.


\paragraph{Dependence on the regularization hyperparameter $\lambda$.}

Let $\mathcal{L}_\lambda^{GP}$ and $\mathcal{L}_\lambda^{LP}$ denote the infimums of the regularized losses over a class of (differentiable) critic functions $f$  from Equation~\eqref{eq:iWGAN} (WGAN-GP) and Equation \eqref{eq:WGAN-LP} (WGAN-LP) respectively. For the comparison of these optimal losses we have the following result (proof in Appendix \ref{app:lambdaDependence}). 

\begin{proposition}\label{prp:lambdaDependence}
$$ \mathcal{L}_\lambda^{LP} \leq  \mathcal{L}_\lambda^{GP} \leq  \mathcal{L}_\lambda^{LP}+\lambda$$
\end{proposition}

In particular, for small $\lambda$ the optimal scores approximately agree. On the other hand, increasing $\lambda$ strengthens the soft constraints, which means that the theoretical observations from Section~\ref{sec:iWGAN} become more pertinent with growing $\lambda$. Our experiments show exactly the behavior that WGAN-LP and WGAN-GP perform very similarly for small $\lambda$, while WGAN-LP performs much better for larger values of $\lambda$ and its performance is much less dependent on the choice of hyperparameter $\lambda$.

\paragraph{A more general view.}

The Kantorovich duality theorem holds in a quite general setting. For example, a different metric can be substituted for the Euclidean distance $\norm{\cdot}$. Taking $\norm{\cdot}^p$ for a different natural number $p$ for example leads to the minimization of the Wasserstein distance of order $p$ (i.e., the \emph{Wasserstein-p} distance). 
Based on the dual problem to the computation of the Wasserstein distance of order $p$ (as given by the Kantorovich duality theorem) we still need to maximize Equation \eqref{eq:WGANGame}
with the only difference that 1-Lipschitz-continuity is now measured with respect to $\norm{\cdot}^p$. For our training method this entails that the only modification to make is to use the regularization term given by \eqref{eq:LP-simple}, where the Euclidean distance is replaced by the metric of interest. We provide experimental results for the Wasserstein-2 distance in Appendix~\ref{app:wasserstein2}.

Recently, by \citet{CramerGAN}, the Wasserstein distance was replaced by the energy distance \footnote{The energy distance \citep{Szekely} has the same convergence properties as the Wasserstein distance, but additionally satisfies that the sample based gradient approximation does not have a bias.}. For the training of Cramer GANs, the authors apply the GP-penalty term proposed by \citet{iWGAN}. We expect that using the LP-penalty term instead is also beneficial for Cramer GANs.



\section{Experiments}\label{sec:experiments}
We perform several experiments on  three toy data sets, \emph{8Gaussians}, \emph{25Gaussians}, and \emph{Swiss Roll}  \footnote{The same data sets were also used in the analysis of \citet{iWGAN}.},
to compare the effect of different regularization terms. More specifically, we compare the performance of WGAN-GP and WGAN-LP as described in Equations \eqref{eq:iWGAN} and \eqref{eq:WGAN-LP} respectively, where the penalty was applied to points randomly sampled on the line between the training sample $x$ and the generated sample $y$. Other sampling methods are discussed in Appendix~\ref{app:differentSampling}.

Both, the generator network and the critic network, are simple feed-forward NNs with three hidden Leaky ReLU layers, each containing 512 neurons, and one linear output layer. The dimensionality of the latent variables of the generator network was set to two.
%
During training, 10 critic updates are performed for every generator update, except for the first 25 generator updates, where the critic is updated 100 times for each generator update in order to get closer to the optimal critic in the beginning of training. Both networks were trained using RMSprop \citep{RmsProp} with learning rate $5\cdot10^{-5}$ and a batch size of 256. 

To see whether our findings on toy data sets can be transferred to real world settings, we trained bigger WGAN-GPs and WGAN-LPs on 
CIFAR-10 as it is described below. Code for the reproduction of our results is available under \url{https://github.com/lukovnikov/improved_wgan_training} .

\paragraph{Level sets of the critic.}
A qualitative way to evaluate the learned critic function for a two-dimensional data set is by displaying its level sets, as it was done by \citet{iWGAN} and \citet{DRAGAN}.
The level sets after 10, 50, 100 and 1000 training iterations of a WGAN trained with the GP and LP penalty on the
Swiss Roll data set 
are shown in Figure \ref{fig:LevelSetSwissRoll}. Similar experimental results for  the 8Gaussians and 25Gaussian data sets can be found in Appendix~\ref{appendix:levelSets}.

\begin{figure}[h]
\begin{center}
\includegraphics[trim=50 1 50 50,clip,width=2.5cm]{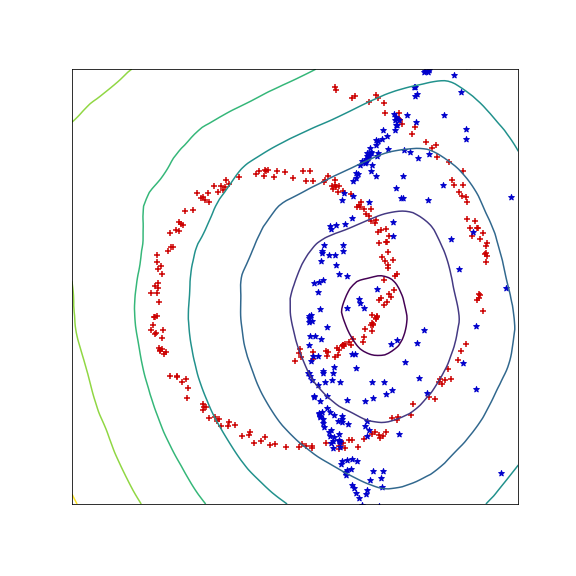}
\includegraphics[trim=50 1 50 50,clip,width=2.5cm]{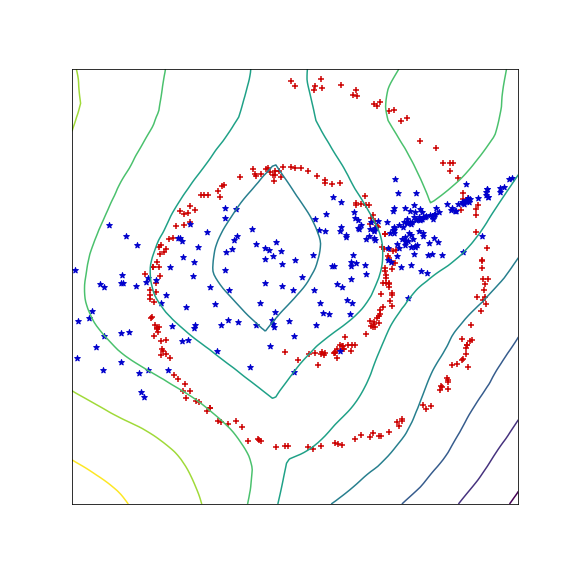}
\includegraphics[trim=50 1 50 50,clip,width=2.5cm]{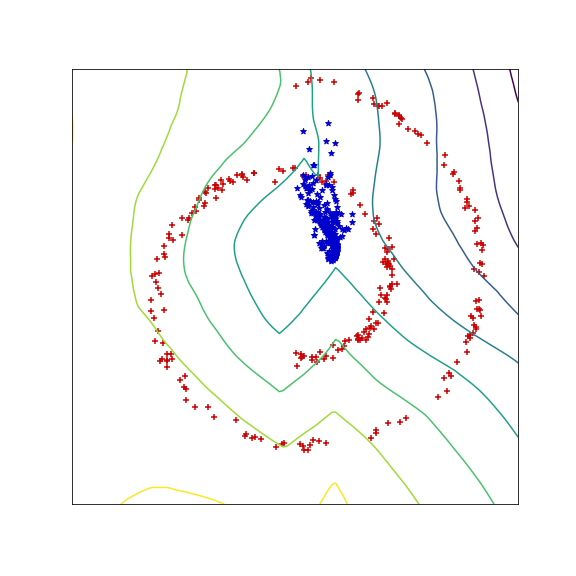}
\includegraphics[trim=50 1 50 50,clip,width=2.5cm]{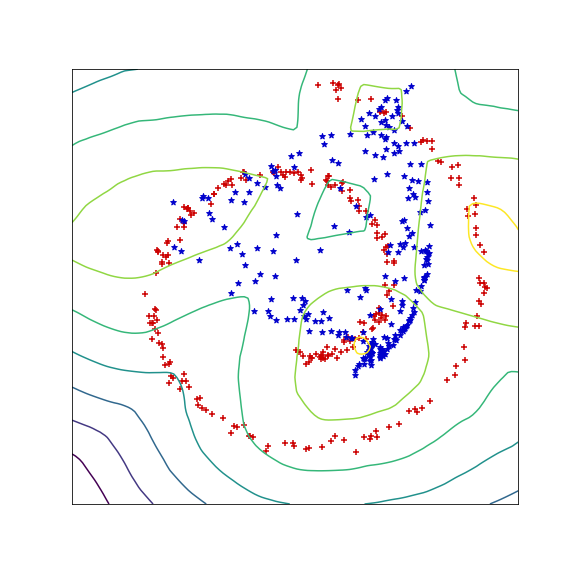}
\includegraphics[trim=50 1 50 50,clip,width=2.5cm]{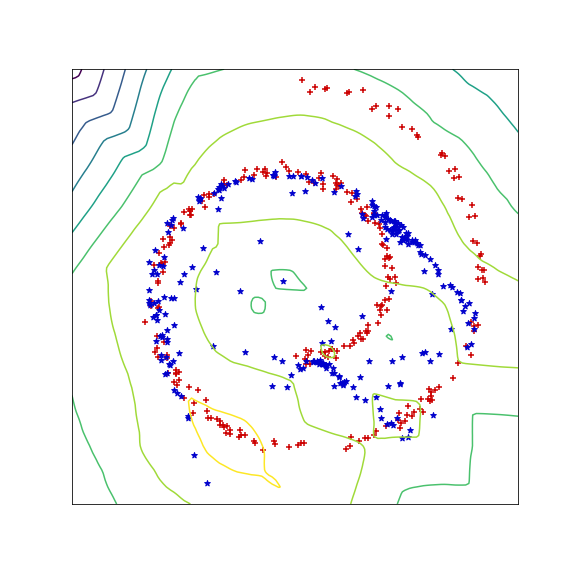}

\includegraphics[trim=50 1 50 50,clip,width=2.5cm]{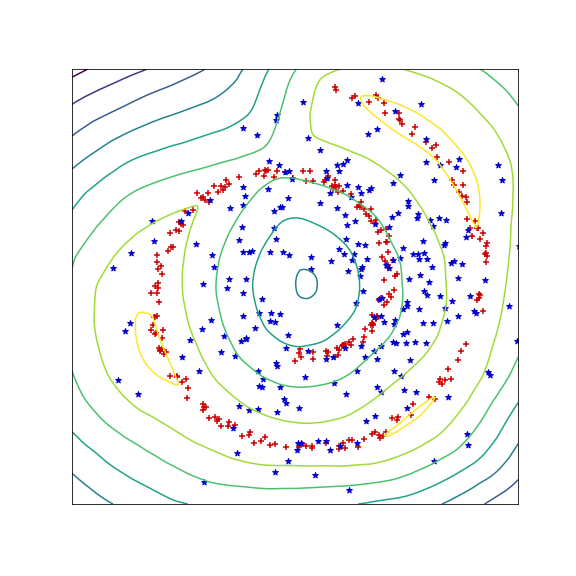}
\includegraphics[trim=50 1 50 50,clip,width=2.5cm]{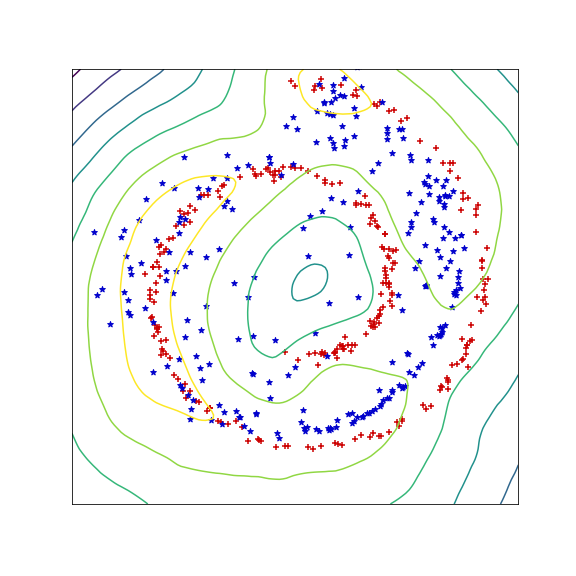}
\includegraphics[trim=50 1 50 50,clip,width=2.5cm]{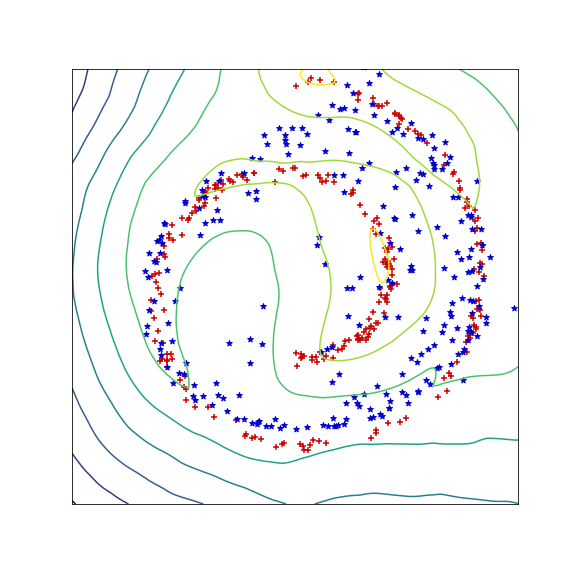}
\includegraphics[trim=50 1 50 50,clip,width=2.5cm]{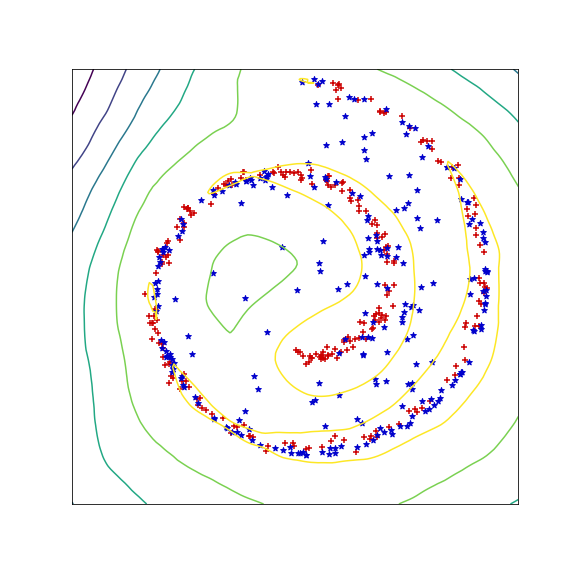}
\includegraphics[trim=50 1 50 50,clip,width=2.5cm]{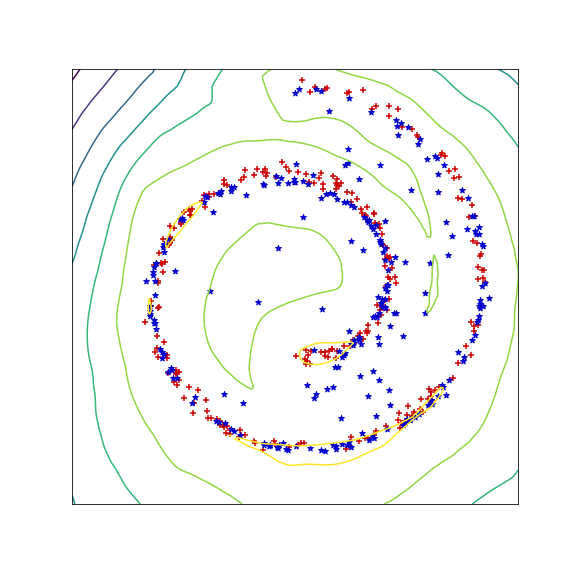}

\includegraphics[trim=50 1 50 50,clip,width=2.5cm]{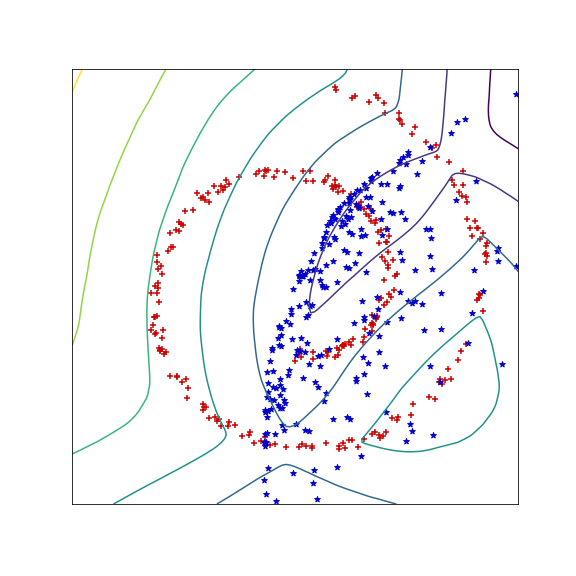}
\includegraphics[trim=50 1 50 50,clip,width=2.5cm]{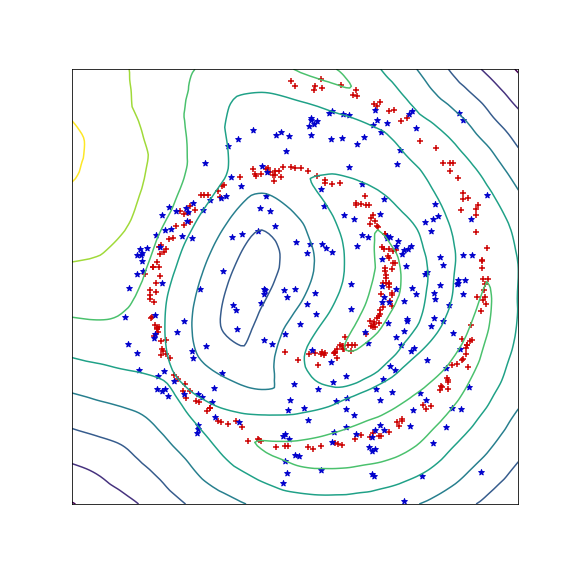}
\includegraphics[trim=50 1 50 50,clip,width=2.5cm]{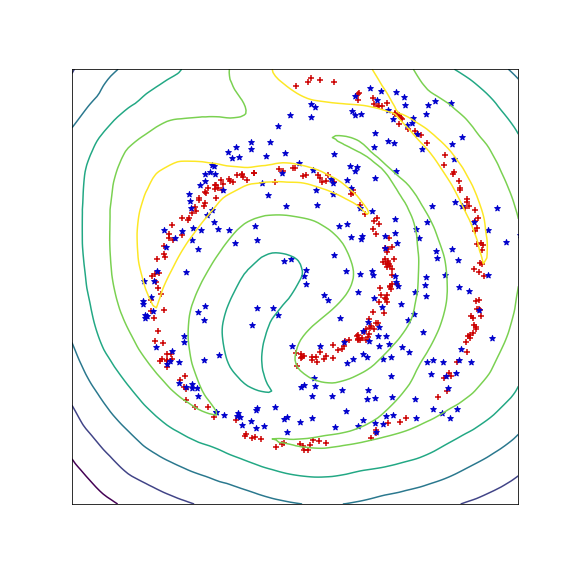}
\includegraphics[trim=50 1 50 50,clip,width=2.5cm]{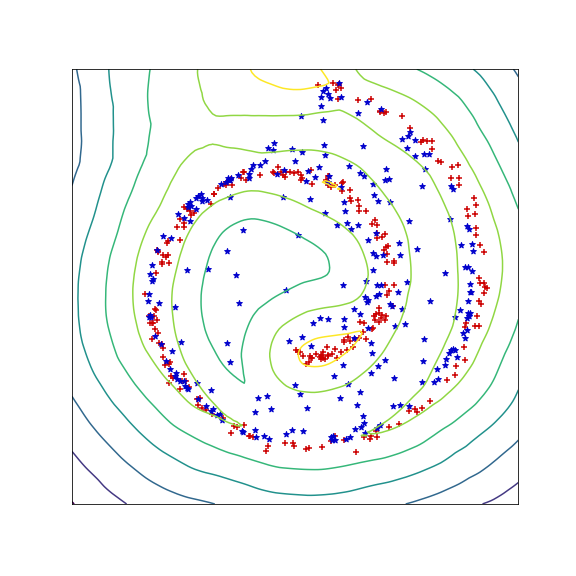}
\includegraphics[trim=50 1 50 50,clip,width=2.5cm]{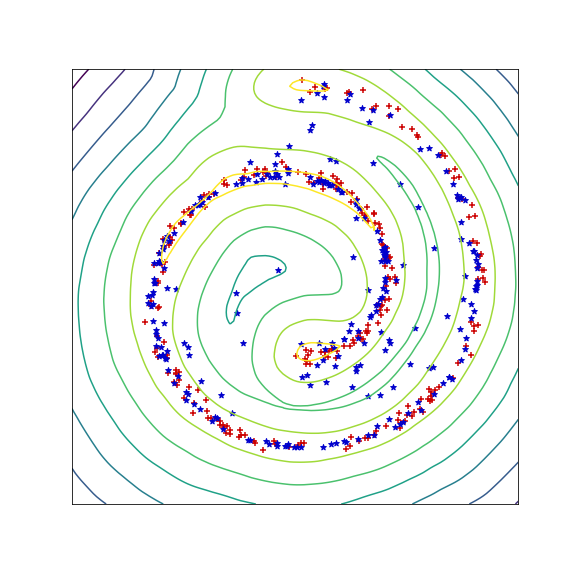}
\caption{Level sets of the critic  $f$ of WGANs during training, after 10, 50, 100, 500, and 1000 iterations.  
Yellow corresponds to high, purple to low values of $f$. Training samples are indicated in red, generated samples in blue.
Top: GP-penalty with $\lambda=10$.
Middle: GP-penalty with $\lambda=1$.
Bottom: LP-penalty with $\lambda=10$.
}
\label{fig:LevelSetSwissRoll}
\end{center}
\end{figure}

It becomes clear that with a penalty weight of $\lambda=10$, which corresponds to the hyperparameter value suggested by \citet{iWGAN}, the WGAN-GP does neither learn a good critic function nor a good model of the data generating distribution. 
With a smaller regularization parameter, $\lambda=1$, learning is stabilized. However, with the LP-penalty a good critic is learned even with a high penalty weight in only a few iterations and the level sets show higher regularity.   
Training a WGAN-LP with lower penalty weight led to equivalent observations (results not shown). 
We also experimented with much higher values for $\lambda$, which led to almost the same results as for $\lambda=10$, which emphasizes that LP-penalty based training is less sensitive to the choice of $\lambda$.
%




\paragraph{Evolution of the critic loss.}

To yield a fair comparison of methods applying different regularization terms, we display values of the critic's loss functions without the regularization term throughout training. 
Results for WGAN-GPs and WGAN-LPs are shown in Figure \ref{fig:WGANfLoss}.

    

The optimization of the critic with the GP-penalty and $\lambda=5$ is very unstable: the loss is oscillating heavily around 0. When we use the LP-penalty instead, the critic's loss smoothly reduces to zero, which is what we expect when the generative distribution $\nu$ steadily converges to the empirical distribution $\mu$. Also note that we would expect the negative of the critic's loss to be slightly positive, as a good critic function assigns higher values to real data points $x\sim \mu$ and lower values to generated points $y\sim \nu$. This is exactly what we observe when using the LP-penalty
Interestingly, when using the LP-penalty in combination with a very high penalty weight, like $\lambda=100$, we obtain the same results, indicating that the constraint is always fulfilled for $\lambda=10$ already. 
Using $\lambda=1$ in combination with the GP-penalty on the other hand stabilized training but still results in fluctuations in the beginning of the training (results shown in Appendix~\ref{appendix:criticsLoss}). 
\vspace{-0.3cm}
\begin{figure}[h]
\begin{center}
\includegraphics[width=0.4\textwidth]{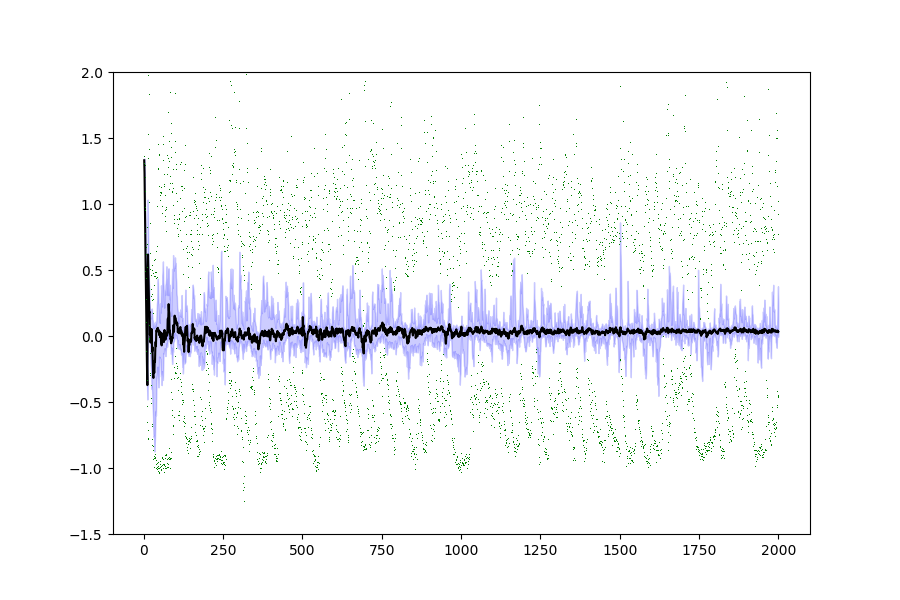} 
\includegraphics[width=0.4\textwidth]{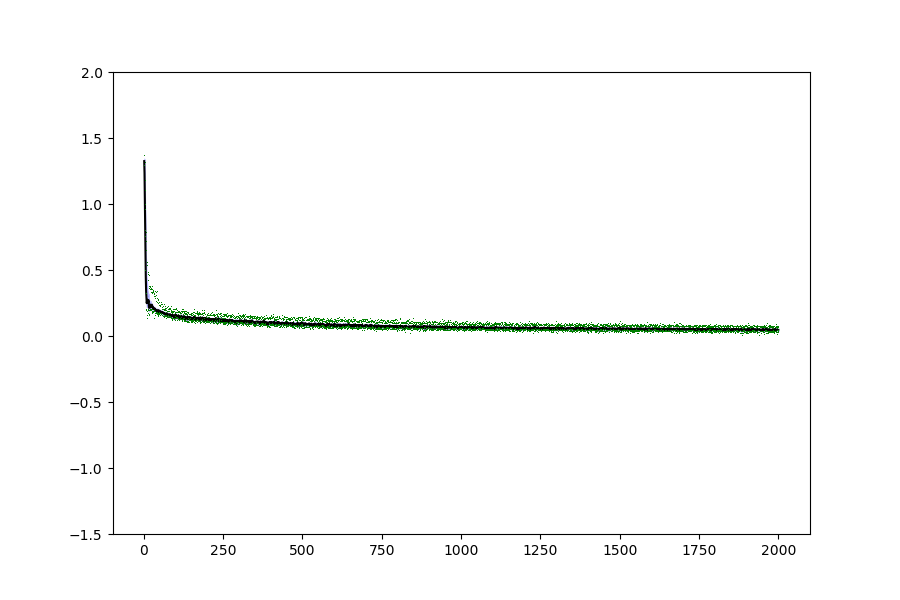} 
\caption{Evolution of the negative of WGAN critic's loss (without the regularization term) for $\lambda=5$. Median results over the 20 runs (blue area indicates quantiles, green dots outliers). Left: For the GP-penalty. Right: For the LP-penalty.}
\label{fig:WGANfLoss}
\end{center}
\end{figure}

\paragraph{Estimating the Wasserstein distance.}

In order to estimate how the actual Wasserstein distance between the real and generated distribution evolves during training, we compute the cost of minimum assignment based on Euclidean distance between sets of samples from the real and generated distributions, using the Kuhn-Munkres algorithm~\citep{Kuhn1955}.
We use a sample set size of 500 to maintain reasonable computation time and estimate the distance every 10th iteration over the course of 500 iterations. All experiments were repeated 10 times for different random seeds.
From the results for WGAN-GP and WGAN-LP with $\lambda = 5$ shown in \autoref{fig:ExpEMD}, we conclude that the proposed LP-penalty leads to smaller estimated Wasserstein distance and less fluctuations during training.

When training WGAN-GPs with a regularization parameter of $\lambda=1$, training is stabilized as well (see Appendix~\ref{appendix:EMdistance}), indicating that the effect of using a GP-penalty is highly dependent on the right choice of $\lambda$. 

\vspace{-0.3cm}
\begin{figure}[h]
\begin{center}
\includegraphics[width=0.4\textwidth]{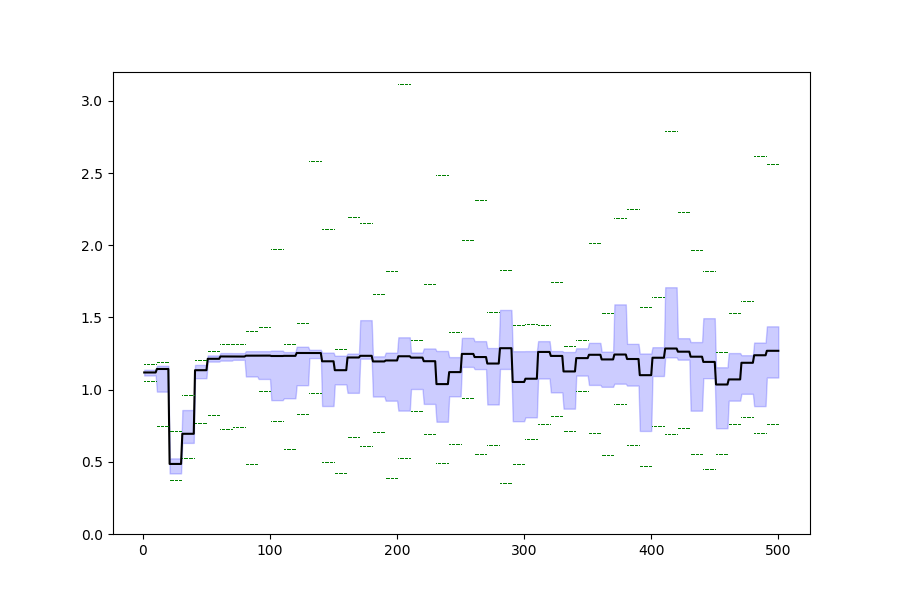}
\includegraphics[width=0.4\textwidth]{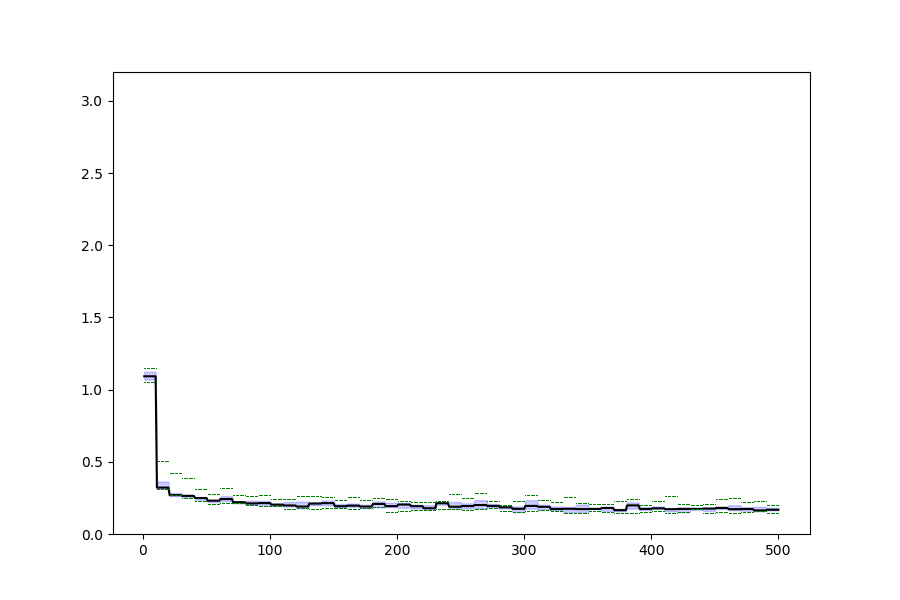} 
\caption{Evolution of the approximated Wasserstein-1 distance during training of WGANs ($\lambda=5$, median results over 10 runs). Left: For the GP-penalty. Right: For the LP-penalty.}
\label{fig:ExpEMD}
\end{center}
\end{figure}

\paragraph{Sample quality on CIFAR-10.} We trained  WGANs with the same  ResNet generator and discriminator and the same hyperparameters as \citet{iWGAN} 
and computed the Inception score \citep{iGAN} throughout training (plots can be found in Appendix~\ref{app:realExperiments}).
The 
maximal scores reached in 100000 training iterations with different regularization parameters are reported in Table~\ref{tab:IS}.
WGAN-LP reaches the similar or slightly better Inception score as WGAN-GP with small penalty weight ($\lambda \leq 10$), while being more stable to other choices of this hyperparameter. This is especially interesting in the light of a recent large scale study, which also reported a strong dependence of  sample quality on $\lambda$ for WGAN-GP \citep[see, Figure 8 and 9 in ][]{GANsCreatedEqual}.
Another interesting observation can be made by monitoring the value of the regularization term during training, as in Figure~\ref{fig:lossComparison}), where contributions to the penalty from $||\nabla f(\hat{x})||>1$  are shown in the upper and contributions $|| \nabla f(\hat{x})||<1$ (only existing for WGAN-GP) are shown in the lower half plane. While the values of the one-sided regularization of WGAN-LP are only slightly  larger for larger $\lambda$  (100 compared to 5) the regularization of WGAN-GP shows a strong dependence on the choice of the regularization parameter. 
For $\lambda=5$ the penalty contributions from gradient norms smaller than one almost vanished (we found this getting even more severe for even smaller regularization parameters). That is, in a setting where WGAN-GP is performing fine it actually acts similar to WGAN-LP.




\begin{table}[t]
\begin{center}
\begin{tabular}{ccc}
 \multicolumn{1}{c}{\bf PENALTY WEIGHT}  & \multicolumn{1}{c}{\bf WGAN-GP} & \multicolumn{1}{c}{\bf WGAN-LP}
\\ \hline \\
0.1 & 7.781 ($\pm$ 0.104) &   8.017($\pm$ 0.075) \\
5 &  7.817 ($\pm$ 0.095) &  7.859   ($\pm$ 0.085)\\
         10 & 7.840 ($\pm$ 0.066)   &  7.989 ($\pm$ 0.119) \\
  100 & 7.548 ($\pm$ 0.102) & 7.815 ($\pm$ 0.038) \\       
 200 & 7.472 ($\pm$ 0.070) &  7.721 ($\pm$ 0.105)\\
\end{tabular}
\caption{Inception Score on CIFAR-10. Reported are the  maximal  mean values reached during training.
Means are computed over 10 image sets, variances given in parenthesis. 
}
\label{tab:IS}
\end{center}
\end{table}

\begin{figure}[h]
\begin{center}
\includegraphics[width=0.49\textwidth]{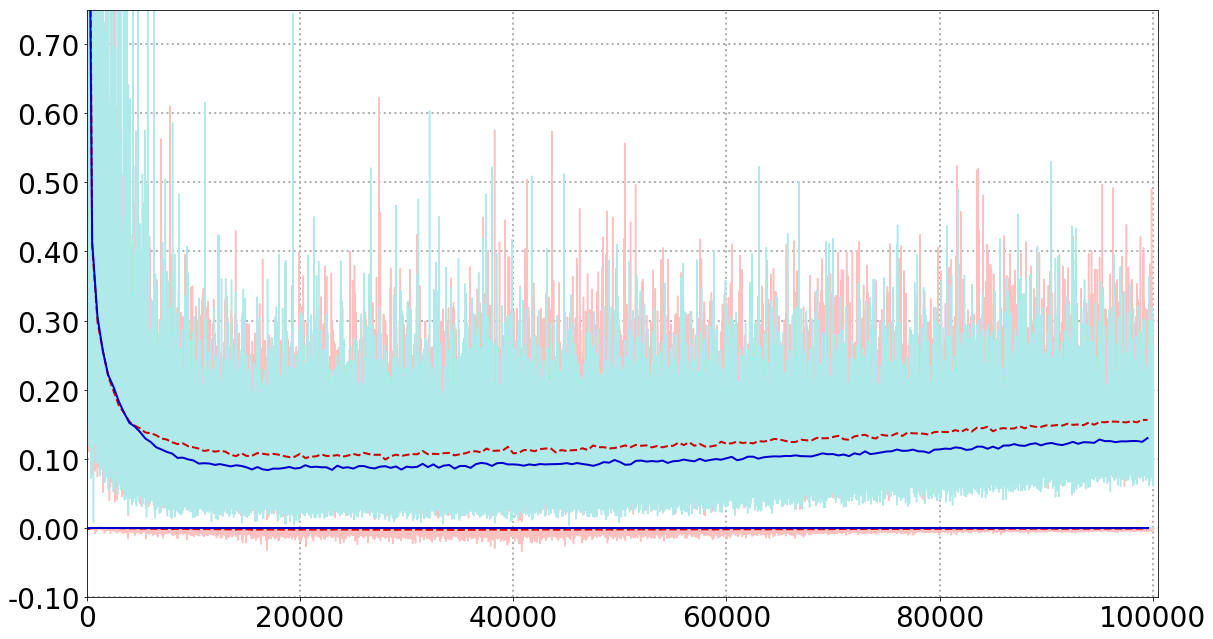}
\includegraphics[width=0.49\textwidth]{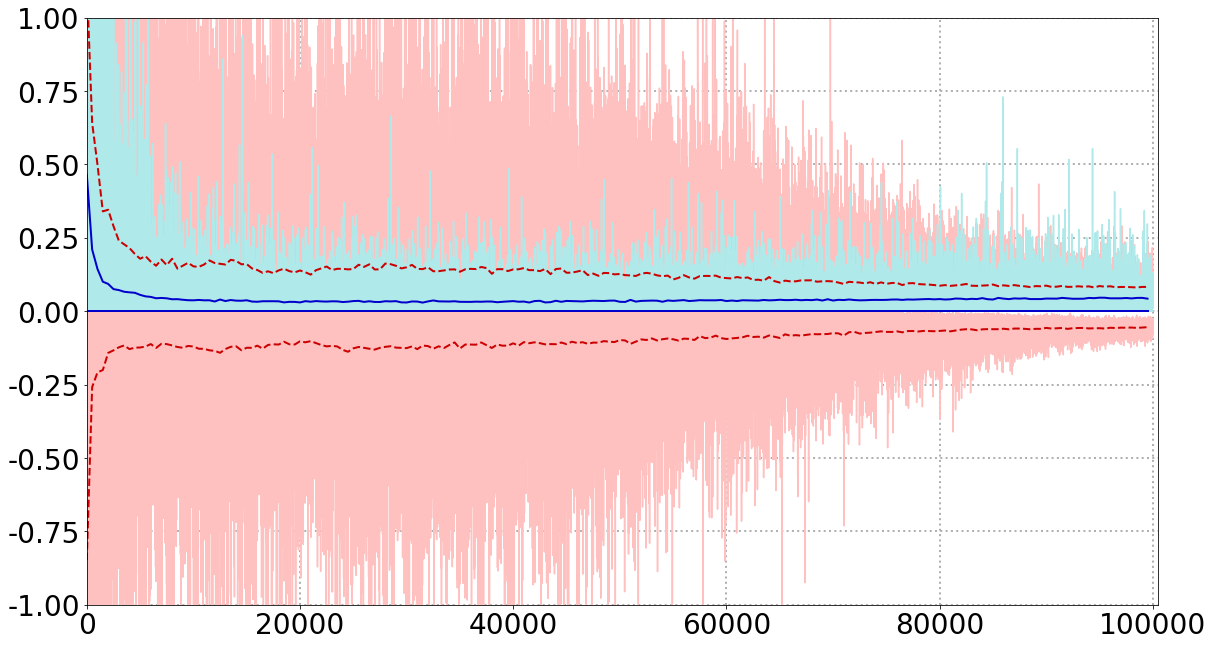}
\caption{Comparison of the magnitude of the gradient penalty during training on CIFAR,
showing $<1$ and $>1$ contributions (i.e. $\min (0, ||\nabla f(\hat{x})|| - 1)^2$ resp. $\max (0, ||\nabla f(\hat{x})|| - 1)^2$).
Left: for regularization parameter $\lambda=5$. 
Right: for regularization parameter $\lambda=100$. 
The (one-sided)  gradient penalty of WGAN-LP is depicted in blue (solid), 
the gradient penalty of WGAN-GP in red (dashed).
All the values for every iteration (one mini-batch) are shown in light blue and red. 
Dark blue and red lines show the mean over a sliding window of size 500.
The figure shows that the part of the gradient penalty of WGAN-GP  penalizing a gradient $\leq1$ almost vanishes for a small regularization parameter, bringing it close to WGAN-LP. For larger values of the regularization parameter, the total penalty of WGAN-GP and its contributing parts are larger than the penalty of WGAN-LP, however, the performance of WGAN-GP suffers more.
}
\label{fig:lossComparison}
\end{center}
\end{figure}

\paragraph{Related penalties}
We tested the effects of using the regularization terms given by  Equation~\eqref{eq:LP-simple} and Equation~\eqref{eq:OPT-L2} instead of the the proposed regularization given in Equation~\eqref{eq:WGAN-LP}.  Both lead to good performance  on toy data but to considerably worse results on CIFAR-10, where training was very unstable.
Results are shown in Appendix~\ref{App:Related}

\section{Conclusion}\label{sec:conclusion}

For stable training of Wasserstein GANs, we propose to use the following penalty term to enforce the Lipschitz constraint that appears in the objective function:  $$\mathbb{E}_{\hat x \sim \tau}[\left( \max \left \{ 0 , ||\nabla f (\hat x)|| -1 \right \}\right)^2]\enspace .$$  We presented theoretical and empirical evidence
that this gradient penalty performs better than the previously considered approaches of clipping weights and of applying the stronger gradient penalty given by $\mathbb{E}_{\hat x \sim \tau}[(||\nabla f(\hat x)||_2 - 1)^2]$. 
In addition to more stable learning behavior, the proposed regularization term leads to lower sensitivity to the value of the penalty weight $\lambda$ (demonstrating smooth convergence and well-behaved critic scores throughout the whole training process for different values of $\lambda$).

\subsubsection*{Acknowledgments}

This work is supported in part by the European Union under the Horizon 2020 Framework Program for the project WDAqua (GA 642795). 

The authors thank the anonymous reviewers for their valuable suggestions.

\bibliography{main}

\begin{thebibliography}{20}
\providecommand{\natexlab}[1]{#1}
\providecommand{\url}[1]{\texttt{#1}}
\expandafter\ifx\csname urlstyle\endcsname\relax
  \providecommand{\doi}[1]{doi: #1}\else
  \providecommand{\doi}{doi: \begingroup \urlstyle{rm}\Url}\fi

\bibitem[Arjovsky \& Bottou(2017)Arjovsky and Bottou]{ArjovskyGANs}
Mart{\'{\i}}n Arjovsky and L{\'{e}}on Bottou.
\newblock Towards principled methods for training generative adversarial
  networks.
\newblock \emph{arXiv e-print}, arXiv:1701.04862, 2017.

\bibitem[Arjovsky et~al.(2017)Arjovsky, Chintala, and Bottou]{WGAN}
Mart{\'{\i}}n Arjovsky, Soumith Chintala, and L{\'{e}}on Bottou.
\newblock Wasserstein generative adversarial networks.
\newblock In \emph{Proceedings of the 34th International Conference on Machine
  Learning, {ICML} 2017, Sydney, NSW, Australia}, pp.\  214--223, 2017.

\bibitem[Bellemare et~al.(2017)Bellemare, Danihelka, Dabney, Mohamed,
  Lakshminarayanan, Hoyer, and Munos]{CramerGAN}
Marc~G. Bellemare, Ivo Danihelka, Will Dabney, Shakir Mohamed, Balaji
  Lakshminarayanan, Stephan Hoyer, and R{\'{e}}mi Munos.
\newblock The {Cramer} distance as a solution to biased {Wasserstein}
  gradients.
\newblock \emph{arXiv e-print}, arXiv:1705.10743, 2017.

\bibitem[Blondel et~al.(2017)Blondel, Seguy, and Rolet]{Blondel}
Mathieu Blondel, Vivien Seguy, and Antoine Rolet.
\newblock Smooth and sparse optimal transport.
\newblock \emph{arXiv e-prints}, arXiv:1710.06276, 2017.

\bibitem[Chen et~al.(2016)Chen, Duan, Houthooft, Schulman, Sutskever, and
  Abbeel]{infoGAN}
Xi~Chen, Yan Duan, Rein Houthooft, John Schulman, Ilya Sutskever, and Pieter
  Abbeel.
\newblock Infogan: Interpretable representation learning by information
  maximizing generative adversarial nets.
\newblock In \emph{Advances in Neural Information Processing Systems 29}, pp.\
  2172--2180, 2016.

\bibitem[{Cuturi}(2013)]{Cuturi}
Marco {Cuturi}.
\newblock Sinkhorn distances: Lightspeed computation of optimal transport.
\newblock In \emph{Advances in Neural Information Processing Systems 26}, pp.\
  2292--2300, 2013.

\bibitem[Dessein et~al.(2016)Dessein, Papadakis, and Rouas]{regulOTRotMover}
Arnaud Dessein, Nicolas Papadakis, and Jean-Luc Rouas.
\newblock Regularized optimal transport and the rot mover's distance.
\newblock \emph{arXiv e-prints}, arXiv:1610.06447, 2016.

\bibitem[Edwards(2011)]{Kantorovich_Rubinstein_theorem_Edwards}
David~A. Edwards.
\newblock On the {Kantorovich–Rubinstein} theorem.
\newblock \emph{Expositiones Mathematicae}, 29\penalty0 (4):\penalty0 387 --
  398, 2011.

\bibitem[Goodfellow et~al.(2014)Goodfellow, Pouget-Abadie, Mirza, Xu,
  Warde-Farley, Ozair, Courville, and Bengio]{GAN}
Ian Goodfellow, Jean Pouget-Abadie, Mehdi Mirza, Bing Xu, David Warde-Farley,
  Sherjil Ozair, Aaron Courville, and Yoshua Bengio.
\newblock Generative adversarial nets.
\newblock In \emph{Advances in Neural Information Processing Systems 27}, pp.\
  2672--2680, 2014.

\bibitem[Gulrajani et~al.(2017)Gulrajani, Ahmed, Arjovsky, Dumoulin, and
  Courville]{iWGAN}
Ishaan Gulrajani, Faruk Ahmed, Mart{\'{\i}}n Arjovsky, Vincent Dumoulin, and
  Aaron~C. Courville.
\newblock Improved training of {Wasserstein} {GANs}.
\newblock \emph{arXiv e-prints}, arXiv:1704.00028, 2017.

\bibitem[Kodali et~al.(2017)Kodali, Abernethy, Hays, and Kira]{DRAGAN}
Naveen Kodali, Jacob~D. Abernethy, James Hays, and Zsolt Kira.
\newblock How to train your {DRAGAN}.
\newblock \emph{arXiv e-prints}, arXiv:1705.07215v3, 2017.

\bibitem[Kuhn(1955)]{Kuhn1955}
Harold~W. Kuhn.
\newblock The {Hungarian} method for the assignment problem.
\newblock \emph{Naval Research Logistics Quarterly}, 2:\penalty0 83--97, 1955.

\bibitem[Lucic et~al.(2017)Lucic, Kurach, Michalski, Gelly, and
  Bousquet]{GANsCreatedEqual}
Mario Lucic, Karol Kurach, Marcin Michalski, Sylvain Gelly, and Olivier
  Bousquet.
\newblock Are {GANs} created equal{?} {A} large-scale study.
\newblock \emph{arXiv e-prints}, arXiv:1711.10337, 2017.

\bibitem[Nowozin et~al.(2016)Nowozin, Cseke, and Tomioka]{fGAN}
Sebastian Nowozin, Botond Cseke, and Ryota Tomioka.
\newblock f-gan: Training generative neural samplers using variational
  divergence minimization.
\newblock In \emph{Advances in Neural Information Processing Systems 29}, pp.\
  271--279. 2016.

\bibitem[Radford et~al.(2015)Radford, Metz, and
  Chintala]{UnsupervizedRepresentationLearningWithDeepConvolutionalGenerativeAdversarialNetworks}
Alec Radford, Luke Metz, and Soumith Chintala.
\newblock Unsupervised representation learning with deep convolutional
  generative adversarial networks.
\newblock \emph{arXiv e-prints}, arXiv:1511.06434, 2015.

\bibitem[Salimans et~al.(2016)Salimans, Goodfellow, Zaremba, Cheung, Radford,
  Chen, and Chen]{iGAN}
Tim Salimans, Ian Goodfellow, Wojciech Zaremba, Vicki Cheung, Alec Radford,
  Xi~Chen, and Xi~Chen.
\newblock Improved techniques for training gans.
\newblock In \emph{Advances in Neural Information Processing Systems 29}, pp.\
  2234--2242. 2016.

\bibitem[Sz{\'e}kely \& Rizzo(2013)Sz{\'e}kely and Rizzo]{Szekely}
G{\'a}bor~J Sz{\'e}kely and Maria~L Rizzo.
\newblock Energy statistics: A class of statistics based on distances.
\newblock \emph{Journal of statistical planning and inference}, 143\penalty0
  (8):\penalty0 1249--1272, 2013.

\bibitem[Tijmen \& Hinton(2012)Tijmen and Hinton]{RmsProp}
Tieleman Tijmen and Geoffrey Hinton.
\newblock {Lecture 6.5---RmsProp: Divide the gradient by a running average of
  its recent magnitude}.
\newblock COURSERA: Neural Networks for Machine Learning, 2012.

\bibitem[Tolstikhin et~al.(2017)Tolstikhin, Gelly, Bousquet, Simon-Gabriel, and
  Sch{\"o}lkopf]{adaGAN}
Ilya Tolstikhin, Sylvain Gelly, Olivier Bousquet, Carl-Johann Simon-Gabriel,
  and Bernhard Sch{\"o}lkopf.
\newblock Adagan: Boosting generative models.
\newblock \emph{arXiv e-prints}, arXiv:1701.02386, 2017.

\bibitem[Villani(2008)]{optimalTransport}
C\'edric Villani.
\newblock \emph{Optimal Transport: Old and New}.
\newblock Grundlehren der mathematischen Wissenschaften. Springer Berlin
  Heidelberg, 2008.
\newblock ISBN 9783540710509.

\end{thebibliography}
\bibliographystyle{iclr2018_conference}

\appendix{

\newpage

\section{Properties of an optimal critic function of WGANs}\label{appendix:criticFunction}

An issue of the original GAN discriminator was that it outputs zero every time it is certain to see generated data, independent on how far away a generated data point lies from the real distribution. As a consequence, locally, there  is no incentive for the generator to rather generate a value closer to (but still off) the real data; GAN critic's optimal value is zero in either case. The WGAN's optimal critic function measures this distance which helps for the generated distribution to converge, but the interpretation of the absolute value as indicating real (close to 1) and fake data (close to 0) is lost. And worse, there is even no guarantee that the relative values of the optimal critic function help to decide what is real and what is fake. Although this does not seem to cause major problems for the iterative training procedure in practice, we still consider it worthwhile to give a specific example justifying the following observation.
\begin{observation}\label{obs:falseGeneration}
The WGAN generator could learn wrong things, basing its decision on the values of the optimal critic function, i.e., if it generates at locations of high critic function values.
\end{observation}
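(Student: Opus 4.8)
The plan is to treat this Observation as an existence claim and establish it by exhibiting a single pair $(\mu,\nu)$ together with a genuinely optimal critic $f^*\in\Li{1}$ under which the heuristic ``place generated mass where the critic is large'' drives the generator away from the real data. The mathematical leverage comes from Theorem~\ref{thm:Kantorovich}: part~(i) lets me certify optimality of a candidate $f^*$ by merely checking that $\mathbb{E}_{x\sim\mu}[f^*(x)]-\mathbb{E}_{y\sim\nu}[f^*(y)]$ equals $W(\mu,\nu)$, and part~(ii) makes transparent that the optimal critic is pinned down only along the transport rays of $\pi^*$ (where $f^*(x)-f^*(y)=\norm{x-y}$ must hold) and is otherwise free. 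This freedom is exactly what I would exploit.

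Concretely, I would first take the simplest nontrivial case, $\mu=\delta_{0}$ and $\nu=\delta_{-1}$ on $\RR$, for which $W(\mu,\nu)=1$. Here the linear function $f^*(x)=x$ is $1$-Lipschitz and attains $f^*(0)-f^*(-1)=1=W(\mu,\nu)$, so by Theorem~\ref{thm:Kantorovich}(i) it is an optimal critic. Yet $f^*$ is strictly increasing and unbounded above, so its high-value region lies arbitrarily far beyond the real point $0$; a generator that seeks out large critic values therefore overshoots the target rather than converging to it. To sharpen the point I would add the two-mode example $\mu=\tfrac12\delta_{0}+\tfrac12\delta_{10}$, $\nu=\delta_{5}$, where $f^*(x)=|x-5|$ is again $1$-Lipschitz with objective value $\tfrac12\cdot5+\tfrac12\cdot5-0=5=W(\mu,\nu)$: this optimal critic has its \emph{minimum} at the fake location and grows without bound away from all the data, so ranking points by $f^*$ is actively misleading.

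The crux, which I would state explicitly, is non-uniqueness: for each of these pairs there also exists an optimal critic whose maxima sit exactly on the support of $\mu$ --- for instance $-\norm{x}$ in the first example and the tent-shaped $x\mapsto 5-\big|\,|x-5|-5\,\big|$ in the second --- and the WGAN objective~\eqref{eq:WGANGame} assigns these helpful critics and the misleading ones the identical value. Hence nothing in the optimization selects the helpful one, which is precisely why the generator \emph{could} learn wrong things when it reads off absolute or relative critic values instead of following the local gradient along a transport ray.

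I expect the main obstacle to be formalization rather than computation: the Observation speaks of ``relative values'' and ``wrong things'' informally, and the honest content is an existence statement (some optimal critic misleads) rather than a universal one (every optimal critic misleads), since helpful optimal critics demonstrably coexist. The delicate point is therefore to phrase the conclusion as ``the optimization cannot guarantee a critic whose level sets track the real distribution,'' and to verify for each exhibited $f^*$ that it is both $1$-Lipschitz and a true maximizer via Theorem~\ref{thm:Kantorovich} before drawing any behavioral conclusion about the generator.
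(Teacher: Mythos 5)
Your proposal is correct: both of your candidate critics are certified optimal exactly as required (each is $1$-Lipschitz and attains the transport cost, so Theorem~\ref{thm:Kantorovich}(i) applies), and the auxiliary ``helpful'' critics you exhibit are likewise genuine maximizers, so the non-uniqueness point stands. The method is the same as the paper's --- produce an explicit pair $(\mu,\nu)$ with an optimal critic whose values mislead a value-chasing generator, certifying optimality by equality in the Kantorovich duality --- but the failure mode you isolate is different. The paper's Figure~\ref{fig:criticLearnsWrong} is engineered so that the critic value at a \emph{generated sample point} strictly exceeds the value at a \emph{real sample point}; that is, the relative ordering of the critic restricted to the observed data is already wrong, which is the sharper reading of ``relative values.'' In your two examples the ordering at the sample points is still correct (real points score at least as high as fake ones); what goes wrong is that the critic's high-value region escapes to infinity, away from all data, so a generator maximizing $f^*$ overshoots. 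Both instantiate the Observation as stated, but the paper's version survives the natural objection that one could restrict attention to the convex hull of the samples, whereas yours does not. What your route buys instead is the explicit diagnosis via non-uniqueness --- the objective~\eqref{eq:WGANGame} cannot distinguish a misleading optimal critic from a helpful one --- which the paper only gestures at elsewhere (its appendix on non-uniqueness) and which cleanly explains why the claim must be existential rather than universal. If you want to fully match the paper's stronger point, perturb your second example into an asymmetric configuration (e.g.\ an outlying fake point far to one side of the real cluster) so that some optimal critic assigns that fake point a higher value than some real point.
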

%
%

Consider the following setting, where the X's represent generated and the O's represent real data points.

\begin{figure}[H]
\begin{center}
\begin{tikzpicture}
\draw[thick, -] (-0.3,0) -- (6,0) ;

\node(x1) at (0,0.4) {X};

\foreach \i in {1,...,6}
	\node(o\i) at (0.6+\i*0.3,0.4) {O};

\foreach \i in {2,...,6}
	\node(x\i) at (3.6+\i*0.3,0.4) {X};

\node[blue](text) at (3.7,2.6){an optimal f*};

\draw[domain=-0.4:0.6,smooth,variable=\x,blue] plot ({\x},{3+\x});
\draw[domain=0.6:5.5,smooth,variable=\x,blue] plot ({\x},{4.2-\x});

\end{tikzpicture}
\caption{
Values of the WGAN critic function for some generated data points can be higher than the critic's values for some real data points. Thus, fake and real points can not be distinguished based on the critics values alone.
Real data points are represented by O, generated by X.}
\label{fig:criticLearnsWrong}
\end{center}
\end{figure}
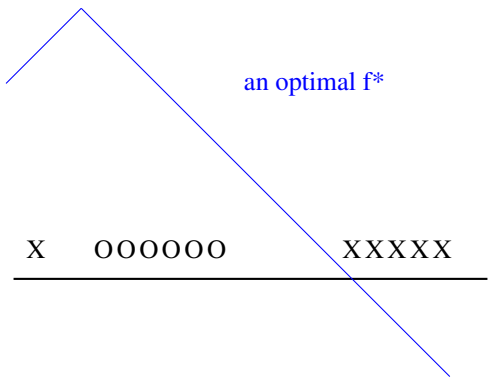
An optimal coupling in this example is quite obvious: We connect the left-most O with the X on the left, and then extend by an arbitrary matching of the other O's with the other X's. It is then not hard to verify that the indicated critic function with slope $1$ or $-1$ almost everywhere leads to an equality in the Kantorovich duality and hence is optimal.
The value of the critic function at the left-most X is higher than the value at the right-most O, suggesting to generate images at the wrong position. 
This issue might be fixed by the alternating updates of generator and critic at a later stage of training when less X's are generated so far on the right side of the O's. The critic function will then flatten the peak, eventually assigning a lower value to an X on the left than to any of the O's.
\begin{remark}
The same holds (with only a slight change of the critic function $f$) if the $X$'s and $O$'s denote the centers of Gaussians. This can be shown with similar arguments as those in the proofs in Appendix~\ref{appendix:NoDifGaussian}.
\end{remark}

\subsection{Proving optimality of certain combinations of coupling and critic function}\label{sct:provingOptimality}

We show here that the coupling and critic function indicated in Figure~\ref{CouplingsOnly} are indeed optimal.

In the one-dimensional example on the left, $\int_{\RR\times\RR} |x-y|\ d\pi^*(x,y)=\frac{1}{7}(1+1+1+1+1+1+1)=\int_{\RR} f^*(x)\ d\mu(x) - \int_{\RR} f^*(x)\ d\nu(x)$ and thus $\pi^*$ and $f^*$ are indeed optimal.}

In the two-dimensional example on the right, the coupling indicated in red and the critic function (described by its function values in blue) are optimal, since with this choice we have  $\int_{\RR[n]\times\RR[n]} \norm{x-y}\ d\pi^*(x,y)=\frac{1}{2}(1+1)=1$ and $\int_{\RR[n]} f^*(y)\ d\nu(y) - \int_{\RR[n]} f^*(x)\ d\mu(x) = \frac{1}{2}(1+a+1)-\frac{1}{2}(0+a)=1.$  Equality of the left hand side and right hand side of the equation proves optimality on both sides.

\section{The issue with the weight clipping approach} \label{appendix:weightClipping}

The critic function of WGAN is 
given by a neural network, which raises the question on how to enforce the 1-Lipschitz constraint in the maximization problem of the objective in Equation~\eqref{eq:WGANGame}. As  \citet{WGAN} point out, it does not matter whether to maximize over 1-Lipschitz or ${\alpha}$-Lipschitz continuous functions, since we can equivalently optimize $\alpha \cdot W(\mu, \nu)$ instead of $ W(\mu, \nu)$. An easy consideration leads to the following lemma.
\begin{lemma}\label{lma:One}
The optimal critic function $f^*$ (leading to the maximum in Eq.~\eqref{eq:EM-Dual}) exhausts the Lipschitz constraint for given $\alpha$ in the sense that there is a pair of points $(x,y)$ such that $f^*(x)-f^*(y)=\alpha\norm{x-y}$.
\end{lemma}
\begin{proof}
If 
$\sup_{x\neq y} \left \{ \frac{ f^*(y)-f^*(x)}{\norm{x-y}}\right \} =c< \alpha \enspace ,$ then $g= \frac{1}{c}f^*$ generates a contradiction to the optimality of $f^*$. (Alternatively, in the case $\alpha=1$, it follows directly from
Theorem~\ref{thm:Kantorovich}, (ii),
that the transport is optimal if and only if the Lipschitz constraint of one is exhausted for any two points of the coupling.)
\end{proof}

\begin{observation}
Weight clipping is not a good strategy to enforce the Lipschitz constraint for the critic function.
\end{observation}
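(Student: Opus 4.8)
The plan is to make precise the sense in which a box constraint on the network's weights is a poor surrogate for membership in $\Li{\alpha}$, and then to exhibit the resulting failure on a concrete family of critics. Since the statement is qualitative, I would first fix a formalization: I would argue that there is no single clipping threshold $c_{\text{max}}$ for which the functions realizable by a clipped network coincide with (or even densely approximate) a fixed Lipschitz ball $\Li{\alpha}$, and that the functions actually selected by the maximization in Equation~\eqref{eq:WGANGame} under clipping are biased toward a low-complexity subfamily that cannot reproduce the optimal critic characterized in Theorem~\ref{thm:Kantorovich}(ii).

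First, I would work with a concrete feedforward architecture, say $f = W_L \circ \relu \circ \cdots \circ \relu \circ W_1$, and record the standard bound $\alpha(f) \le \prod_{i=1}^L \|W_i\|$ together with the fact that each factor is controlled by the clipped entries of $W_i$. The point I would extract is a mismatch of two kinds. If $c_{\text{max}}$ is chosen small enough that the product bound forces every realizable $f$ into $\Li{1}$, then the actually attained Lipschitz constants (which are typically far below this crude upper bound) sit well beneath the scale dictated by the data, so the critic cannot attain gradient norm one along the transport rays required by Theorem~\ref{thm:Kantorovich}(ii). Conversely, if $c_{\text{max}}$ is large enough that $f$ can realize gradient norm one over an extended region, then the same box also admits functions with $\prod_i\|W_i\|$ far exceeding one, so clipping fails to enforce the very constraint it is meant to impose.

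Second, I would analyze what the objective does to the clipped weights. Because Equation~\eqref{eq:WGANGame} rewards large separations of function values, the maximizer drives weights to the boundary $\pm c_{\text{max}}$; at saturation each hidden unit contributes an essentially fixed slope, so $f$ collapses onto a low-dimensional family of piecewise-linear functions with few distinct gradients. I would then pick a simple configuration of real and generated points — the one-dimensional arrangement of Figure~\ref{fig:criticLearnsWrong}, or the mixture of Gaussians of the Remark — whose optimal critic must bend in several places to keep $f^*(x)-f^*(y)=\norm{x-y}$ on $\supp{\pi^*}$, and show that no member of this saturated family realizes such bending, so the duality gap remains bounded away from zero.

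The hard part will be the formalization rather than the estimates: ``not a good strategy'' has to become a statement that is neither vacuous (a clipped network of sufficient width can still approximate a fixed target within the Lipschitz scale the box permits, so outright non-representability cannot be claimed) nor misleadingly strong. I expect the cleanest route is to keep the architecture fixed and reason about the image of the weight box under the realization map, stressing that the correspondence between $c_{\text{max}}$ and the effective Lipschitz constant is opaque and that the attainable set is shaped by the product structure of the operator norms rather than by the Lipschitz ball itself; the concrete examples then serve to illustrate, not to carry, the obstruction.
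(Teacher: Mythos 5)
Your overall strategy---show that the maximization forces the critic to exhaust its Lipschitz constraint, and that under clipping the functions which do so form a degenerate family---is exactly the shape of the paper's argument (Lemma~\ref{lma:One} plus Proposition~\ref{prp:weightclipping}). But the load-bearing step is the one you only assert. You write that ``the maximizer drives weights to the boundary $\pm c_{\text{max}}$; at saturation each hidden unit contributes an essentially fixed slope, so $f$ collapses onto a low-dimensional family.'' That is a heuristic about what gradient ascent might do, not a characterization of the realizable set. The paper proves the much sharper converse-style statement: a realizable $f$ attains the \emph{common} Lipschitz constant $\bar{\alpha}$ of the clipped class if and only if the first-layer matrix has constant columns equal to $\pm c_{\text{max}}$ and every subsequent layer is the all-$c_{\text{max}}$ matrix. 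The proof works backwards from the existence of a witnessing pair $(x,y)$ with $f(x)-f(y)=\bar{\alpha}\norm{x-y}$, propagates the equality through each layer (requiring every linear map and every ReLU to attain its operator norm on the intermediate images), and pins down the sign pattern via the equality case of Cauchy--Schwarz. Without this, your third paragraph has nothing concrete to quantify over: ``no member of this saturated family realizes such bending'' requires knowing what the saturated family actually is, and ``saturated weights'' alone (arbitrary signs in every layer) is a much larger and more expressive family than the one the paper isolates.

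A second, smaller issue: the ``conversely'' branch of your dichotomy---that a large $c_{\text{max}}$ admits functions with $\prod_i\|W_i\|$ far exceeding one, ``so clipping fails to enforce the very constraint it is meant to impose''---is defused by the paper itself. Section~\ref{sec:wc} notes that maximizing over $\Li{\alpha}$ instead of $\Li{1}$ merely rescales the objective to $\alpha\cdot W(\mu,\nu)$, so the absolute value of the common Lipschitz constant is irrelevant; the relevant question is only whether the optimal element of $\Li{\bar{\alpha}}$ lies in (or near) the realizable class. Your argument should therefore drop that branch and concentrate entirely on the representability obstruction, which is where the paper's Proposition~\ref{prp:weightclipping} does the work.
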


First note that by clipping the weights we enforce a common Lipschitz constraint, where the common Lipschitz constant $\bar{\alpha}$ is defined as the minimal $\alpha\in \RR$ such that $f(x)-f(y)\leq \alpha \norm{x-y}$ for all $x,y$ and all functions $f$ that can be generated by the network under weight clipping. The actual value of $\bar{\alpha}$ does not follow directly from the weight clipping constant $c_{\text{max}}$ but can be computed from the structure of the network. From Lemma~\ref{lma:One} we know that an optimal $f^*$ exhausts the Lipschitz constraint. We will now show exemplarily for deep NN with rectified linear unit (ReLU) activation functions that there is an extremely limited number of functions generated by the NN using weight clipping that do exhaust the implicitly given common Lipschitz constraint $\bar{\alpha}$. It follows that, in almost all cases, the optimal $f^*$ is not in the class of functions that can be generated by the network under the weight clipping constraint.


\begin{proposition}\label{prp:weightclipping}
Consider a (deep) NN with ReLU activation functions and linear output layer.
A function generated by the NN under constraining each weight in absolute value by $c_{\text{max}}$ exhausts the common Lipschitz constraint if and only if 
\begin{itemize}
\item[(a)] 
The weight matrix of the first layer consists of constant columns with value $c_{\text{max}}$ or $-c_{\text{max}}$. 
\item[(b)] 
The weights of all other layers are given by a
matrix $C^{\text{max}}$ with every entry equal to $c_{\text{max}}$.  
\end{itemize} 
\end{proposition}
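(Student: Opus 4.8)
The plan is to reduce the statement to a chain of operator-norm inequalities for the layer-wise weight matrices and then to track exactly when each inequality is tight. Write the network as $f = W_L \circ \relu \circ W_{L-1}\circ\cdots\circ\relu\circ W_1$ with $W_i\in\RR[m_i\times m_{i-1}]$, $m_0=n$ and scalar output $m_L=1$ (biases only translate the affine pieces and will enter only where region nonemptiness matters). Since $\relu$ is piecewise linear, $f$ is continuous and piecewise linear, so its Lipschitz constant $\alpha(f)$ equals the supremum over the linear regions of the spectral norm $\norm{A}$ of the regional Jacobian $A=W_L D_{L-1}W_{L-1}\cdots D_1 W_1$, where each $D_i$ is a diagonal $0/1$ matrix recording which units of layer $i$ are active. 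The first elementary ingredient is that $\norm{A}\le\prod_i\norm{W_i}$, because every $\norm{D_i}\le 1$.

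The second ingredient is a sublemma bounding a single clipped matrix: if $|W_{ij}|\le c_{\text{max}}$ then $\norm{W}\le\|W\|_F\le c_{\text{max}}\sqrt{m_i m_{i-1}}$ (with $\|\cdot\|_F$ the Frobenius norm), and equality holds throughout if and only if $W$ has rank one with all entries of modulus $c_{\text{max}}$, i.e.\ $W=c_{\text{max}}\,s\,t^{\mathsf T}$ for sign vectors $s,t$. Combined with the previous inequality this yields $\bar\alpha=\prod_i c_{\text{max}}\sqrt{m_i m_{i-1}}$ as a uniform upper bound over all clipped configurations and pins the only candidates for equality to maximal rank-one sign matrices. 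For the \emph{if} direction I would then simply propagate a single scalar signal through the proposed configuration: by (a) the first layer sends $x\mapsto c_{\text{max}}(\epsilon^{\mathsf T}x)\,\mathbf 1$, and by (b) each subsequent all-$c_{\text{max}}$ layer again emits a vector proportional to $\mathbf 1$; on the region where this scalar signal is positive every unit is active, all $D_i$ equal the identity, consecutive singular directions align, and the telescoping product attains $\bar\alpha$. This is a direct computation.

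The substance, and the step I expect to be the main obstacle, is the \emph{only if} direction. Assuming $\alpha(f)=\bar\alpha$, equality must hold on some positive-measure region in every inequality above: each $W_i=c_{\text{max}}s_i t_i^{\mathsf T}$ must be a maximal rank-one sign matrix (by the sublemma), each $D_i$ must fix the relevant direction $s_i$ so that no active coordinate is dropped, and the directions must align, forcing $t_i=\pm s_{i-1}$ for $i\ge 2$. The delicate point is converting the condition ``$D_i$ fixes $s_i$ on a set of positive measure'' into a sign constraint. Since the output of layer $i-1$ moves along the single direction $s_{i-1}=\mathbf 1$, a layer whose $s_i$ has mixed signs would drive part of its units negative and hence have them switched off by the following $\relu$ precisely along the direction carrying the maximal signal, strictly lowering the operator norm attainable downstream. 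Excluding this forces $s_i=\mathbf 1$ for every intermediate layer, which together with $t_i=\pm s_{i-1}$ collapses (up to the harmless sign ambiguity of the factorization $s\,t^{\mathsf T}$) to all entries equal to $c_{\text{max}}$, giving (b); the first layer faces no incoming $\relu$, so only its rows are constrained to be constant ($s_1=\mathbf 1$) while its columns keep free signs ($t_1=\epsilon$), giving (a). The hard part is making this ReLU-deactivation argument rigorous on a full region rather than pointwise, and in particular verifying that a clever choice of biases cannot keep a mixed-sign configuration active on a region and thereby rescue the maximal operator norm.
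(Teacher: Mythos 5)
Your strategy is, at bottom, the same as the paper's: bound the Lipschitz constant of the composition by the product of per-layer Lipschitz constants, characterize the per-layer maximizers under clipping, and then use the ReLU to propagate sign constraints from layer to layer. The paper does the bookkeeping with an explicit pair of witness points $(x^{(i)},y^{(i)})$ subject to four per-layer conditions, whereas you work with regional Jacobians $W_L D_{L-1}\cdots D_1 W_1$; your sublemma $\norm{W}\le\|W\|_F\le c_{\text{max}}\sqrt{m_i m_{i-1}}$, with equality exactly for the maximal rank-one sign matrices $c_{\text{max}}\,s\,t^{\mathsf T}$, is a cleaner (and more carefully normalized) version of the paper's per-layer computation, but the two proofs are the same argument in different clothing.

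The step you flag as the main obstacle is in fact immediate once you make explicit a convention the paper adopts silently: the layers are written as pure matrix multiplications $f_i(x)=A^{(i)}x$, i.e.\ there are no biases. Then on any region where all units of layer $i$ are active, the pre-activation of layer $i$ is a \emph{scalar multiple of $s_i$} with no affine offset, so a mixed-sign $s_i$ forces that scalar to vanish and confines the all-active set to a hyperplane; no open region (and no witness pair) survives, which is exactly the paper's condition (iv) in your language and requires no further work to hold ``on a region.'' Your worry about biases, however, is well founded, and you should not try to prove the statement with biases included, because it is then false: with one input, two hidden units, $w_1=(c_{\text{max}},-c_{\text{max}})^{\mathsf T}$, $b_1=(0,c_{\text{max}})^{\mathsf T}$ and output weights $w_2=(c_{\text{max}},-c_{\text{max}})$, both units are active for $x\in(0,1)$ and there $f(x)=2c_{\text{max}}^2x-c_{\text{max}}^2$, which attains the common bound $\norm{w_2}\cdot\norm{w_1}=2c_{\text{max}}^2$ although neither (a) nor (b) holds. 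So the correct completion of your plan is to state the no-bias assumption up front and then run the deactivation argument exactly as you sketched it; doing so you will also notice that the final linear layer, not being followed by a ReLU, is only pinned down up to a global sign (all entries $-c_{\text{max}}$ works equally well, replacing $f$ by $-f$), a point the paper's statement glosses over.
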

\begin{proof}
We need to determine every function $f^*$ generated by the neural network, such that we can find points $x^*\neq y^*$ with $f^*(y^*)-f^*(x^*)= \bar{\alpha}\norm{x^*-y^*}$. Recall that $\bar{\alpha}$ is defined as the minimal $\alpha$ satisfying $f(y)-f(x)\leq \alpha \norm{x-y}$ for all functions $f$ generated by the neural network and all points $x,y$.

In the following, we will denote by $\alpha (f)$ the Lipschitz constant of $f$, i.e., the smallest $\alpha \in \RR$ such that $f(x)-f(y)\leq \alpha \norm{x-y}$ for all $x,y$.

Every function generated by the neural net is a composition of functions $$f=f_n\circ \relu \circ f_{n-1}\circ \ldots \circ \relu \circ f_1.$$ with linear functions $f_i$ and $\relu$ denoting a layer of activation functions with rectifier linear units. Since each linear function $f_i$ is Lipschitz continuous with Lipschitz constant $\alpha(f_i)$ and $\relu$ is Lipschitz continuous with $\alpha(\relu)=1$, it follows that $f$ is Lipschitz continuous with $\alpha(f)\leq \prod_i \alpha(f_i)$. Moreover, equality holds if there is a pair of points $(x,y)$ such that the consecutive images witness the maximal Lipschitz constant $\alpha(f_i)$ and $\alpha(\relu)$ for each of the individual functions making up the composition of $f$. More formally, equality holds if and only if there is a tuple of pairs of points $(x^{(i)},y^{(i)})$, $1\leq i \leq n-1$, such that for all $1\leq i\leq n$, 
\begin{itemize}
\item[(i)] $x^{(i)}\neq y^{(i)}$
\item[(ii)] $x^{(i+1)}=\relu \circ f_i(x^{(i)})$ and $y^{(i+1)}=\relu \circ f_i(y^{(i)})$
\item[(iii)] $|f_i(x^{(i)})-f_i(y^{(i)})| = \alpha(f_i)\norm{x^{(i)}-y^{(i)}}$
\item[(iv)] All entries of $f_i(x^{(i)})$ and $f_i(y^{(i)})$ are larger or equal to zero. This is equivalent to the condition that $$|\relu(f_i(x^{(i)}))-\relu(f_i(y^{(i)}))| = \alpha(\relu)\norm{f_i(x^{(i)})-f_i(y^{(i)})}\enspace.$$
\end{itemize}
It follows that to determine $f^*$ we need to maximize $\alpha(f_i)$ for the linear layers with weight constraint $c_{\text{max}}$ and find a sequence of points $(x^{(i)},y^{(i)})$ that satisfy (i)-(iv). The existence of the sequence of points shows that $\alpha(f^*)=\prod_{i=1}^n \alpha(f_i)$ and maximizing each $\alpha(f_i)$ then shows that $$\alpha(f^*)=\prod_{i=1}^n \alpha(f_i)=\bar{\alpha}\enspace.$$
Since, as we will show, the conditions in (a) and (b) maximize the Lipschitz constraint of each layer individually, the existence of suitable $(x^{(i)},y^{(i)})$ proves the if-direction of the proposition.

For the only-if direction, we will see that the ability to find the sequence of points gives restrictions on how to maximize $\alpha(f_i)$ of an individual layer, leading to the more restrictive condition of (b) for all but the first layer (cf. (a)). 

So let us first maximize the Lipschitz constraint of each linear layer and then make sure that we can find the corresponding points.
We write the linear layer as a matrix multiplication $f_i(x)=A^{(i)}x$. Using linearity, 
$$\alpha(f_i)=\max_{\norm{z}=1} \norm {A^{(i)} z} \enspace,
$$ and our goal can be reformulated to finding the matrix $A^{(i)}$ maximizing $\alpha(f_i)$. 

For any fixed $z$, $\norm {A^{(i)} z}$ is maximized exactly when each vector entry is maximized in absolute value. Now, with $A^{(i)}=(a^{(i)}_{j,k})_{j,k}$ and $\sgn{\cdot}$ denoting the sign function,
$$|(A^{(i)}z)_j|=\left |\sum_k a^{(i)}_{j,k} z_k \right | \leq \sum_k |a^{(i)}_{j,k}| |z_k| \leq \sum_k c_{\text{max}} |z_k| \text{ (by the weight constraint)}$$
$$ = \sum_k  (c_{\text{max}} \cdot \sgn{z_k}) \cdot z_k $$
and equality holds if and only if $A^{(i)}$ or $-A^{(i)}$ consists of columns of constant entry with the value $c_{\text{max}}\cdot\sgn{z_k}$ in column $k$. 
It follows that
$$\alpha(f_i)=\max_{\norm{z}=1} \norm {A^{(i)} z} =\max_{\norm{z}=1} c_{\text{max}}\cdot ||z||_1 $$
$$\leq  \max_{\norm{z}=1} c_{\text{max}}  \sqrt{dim(z)}\cdot \norm{z}\text{ (by Cauchy-Schwartz inequality)}$$
$$=c_{\text{max}}  \sqrt{dim(z)}$$
with equality if and only if $z_k=\pm \frac{1}{\sqrt{dim(z)}}$ for all $k$.

Hence, for the first linear layer we need to choose a matrix $A^{(1)}$ satisfying (a) of the statement of the proposition.

Now, we find a pair $(x^{(1)},y^{(1)})$ with $$x^{(1)}-y^{(1)}=a \cdot (\pm 1, \pm 1,\ldots ,\pm 1)\text{ for some }a\neq 0$$ such that 
$$\sgn{x^{(1)}_k}=\sgn{y^{(1)}_k}=\sgn{x^{(1)}_k-y^{(1)}_k }=\text{the sign of column $k$ of $A^{(1)}$.}\enspace$$

This is the only possibility to ensure (iii) and (iv) of the conditions above. Note that also (i) holds for $(x^{(1)},y^{(1)})$, and (ii) (together with (iv)) determines $(x^{(2)},y^{(2)})$ uniquely from $(x^{(1)},y^{(1)})$ as
$$\begin{array}{cc} & x^{(2)}=A^{(1)}x^{(1)}=c_{\text{max}}\cdot ||x^{(1)}||_1\cdot  (1,1,...,1)\\
\&& y^{(2)}=A^{(1)}y^{(1)}=c_{\text{max}}\cdot ||y^{(1)}||_1 \cdot (1,1,...,1)\end{array}.$$
We may assume that $||x^{(1)}||_1\ >||y^{(1)}||_1$. (Otherwise, switch the roles of $x$ and $y$. In the case of equality, we need to choose a different pair for $(x^{(1)},y^{(1)})$ not to violate (i) for $(x^{(2)},y^{(2)})$.) Then we have that $x^{(2)}\neq y^{(2)}$, 
$$+1=\sgn{x^{(2)}_k}=\sgn{y^{(2)}_k}=\sgn{x^{(2)}_k-y^{(2)}_k }\text{ for all $k$.}$$
Using the same arguments as above, it follows that for such $(x^{(2)},y^{(2)})$, to maximize the Lipschitz constant of $f_2$ (and to guarantee that the maximum is reached at $(x^{(2)},y^{(2)})$), we need to have $A^{(2)}$ equal to a matrix with $c_{\text{max}}$ at each position. 

Now (i)-(iv) also hold for the second layer and one may now proceed by induction to show that for $i\geq 2$, $A^{(i)}$ contains only $c_{\text{max}}$ for each of its entries. This is the only way to maximize the Lipschitz constraint for functions generated by the neural net, and it does indeed hold $\norm{f^*(x^*)-f^*(y^*)}=\bar{\alpha}\norm{x-y}$ with $x^*=x^{(1)}, y^*=y^{(1)}$ and $$(x^{(i)},y^{(i)})=(f_i\circ \relu \circ f_{i-1}\circ \ldots \circ \relu \circ f_1(x^*), f_i\circ \relu \circ f_{i-1}\circ \ldots \circ \relu \circ f_1(y^*)).$$
\end{proof}


\section{Proofs}

\subsection{Proof of Theorem \ref{thm:Kantorovich}
}\label{appendix:kantorovich}

\begin{proof}
We provide the arguments how to derive our version from Theorem~5.10 of \citet{optimalTransport}.

With $c(x,y)=\norm{x-y}$, our assumptions imply (with $c_{\mathcal{X}}=c_{\mathcal{Y}}=\norm{\cdot}$) that all conclusions of Theorem~5.10 $(i)-(iii)$ hold. Moreover, 5.4 of \citet{optimalTransport} shows that in this case $\psi=\psi^c$ (in the notation of \citet{optimalTransport}) and $c$-convexity is the same as 1-Lipschitz continuity. This leads to our formulation in (i) and the existence of an optimal coupling $\pi^*$ and an optimal critic function $f^*$ by part (iii).

If we let 
$$\Gamma_f=\{(x,y)\in \RR[n]\times\RR[n]\ |\ f(x)-f(y)=\norm{x-y} \}$$
then it follows from the proof of Theorem~5.10 that the set $\Gamma$ in part 5.10~(iii) is given by $\Gamma=\bigcap_{f^*\in\Li{1}\text{ optimal}}\Gamma_{f^*}$, where $f^*$ being optimal means that it leads to a maximum on the RHS of equation \eqref{eq:duality}. 


To prove our part $(ii)$ from 5.10, let $\pi^*$ be optimal. Then, by 5.10~(iii), $\pi^*(\Gamma)=1$. Hence, in particular, $\pi^*(\Gamma_{f^*})=1$ for all optimal $f^*\in\Li{1}$. This shows that (a) implies (b). For the other direction, we use that if $\pi^*(\Gamma_{f^*})=1$ for all optimal $f^*$, then $\pi^*(\Gamma)=1$, which by Theorem~5.10~(iii) is equivalent to $\pi^*$ being optimal. \end{proof}


\subsection{Proof of Proposition~\ref{prp:WGan}}\label{app:simpleProof}

\begin{proof}
It follows from Theorem~\ref{thm:Kantorovich} (ii) that for all $(x,y)$ in the support of $\pi^*$ we have $|f^*(y)-f^*(x)|=\norm{x-y}$. 
Considering the line between $x$ and $y$, the 1-Lipschitz constraint implies that the values of $f^*$ have to follow a linear function (since assuming that the slope was smaller than one at some point would imply that the differentiable function must have a slope larger than one somewhere else between $x$ and $y$, which contradicts the 1-Lipschitz constraint).
It follows that at each point on the line, the partial derivative has norm equal to one into the direction pointing from the real data point $x$  to the generated one $y$ (which are coupled by the corresponding optimal coupling). Since, by the 1-Lipschitz constraint, the maximal norm of a partial derivative at any point into any direction is one, the given direction is the direction of maximal descent, i.e. equals the gradient. 
\end{proof}

\subsection{Proof of Proposition \ref{prp:Gaussians}}\label{appendix:NoDifGaussian}

To prove Proposition \ref{prp:Gaussians}, we first prove that $\phi^*(x)=-|x|$ is the optimal critic function for certain distributions with non-overlapping support, and then reduce the example with Gaussian functions to this simplified setting.

\begin{proposition}\label{prop:truncatedGaussian}
Let ${f}$ and ${g}$ be two continuous functions on the real line that satisfy the following conditions:
\begin{itemize}
\item $f$ and $g$ are symmetric with respect to the y-axis.
\item ${f}(x)\geq 0$ and ${g}(x)\geq 0$ for all $x$.
\item If $\suppo{h}=\{x\in\RR\ |\ h(x) > 0\}$ denotes the open support of a continuous function $h$, then $\suppo{{f}} \cap \suppo{{g}}=\emptyset$.   
\item ${f}$ has connected support (this implies that ${f}$ is centered around $0$ because of the symmetry).
\item $\int_{\RR} {f}(x) dx = \int_{\RR} {g}(x) dx$.
\end{itemize}
Then the maximum of $\int_{\RR}   \phi(x) ( {f}(x)- {g}(x) ) dx $ over $\phi\in \Li{1}$ is maximized for $\phi^*(x)=-|x|$.
\end{proposition}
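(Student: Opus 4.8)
\section*{Proof proposal}

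The plan is to convert the functional $\phi \mapsto \int_{\RR} \phi(x)\,(f(x)-g(x))\,dx$ into a pointwise optimization by a single integration by parts, and then read off the optimal derivative. Write $h = f - g$ for the (even) difference of densities; by the equal-mass hypothesis $\int_{\RR} h(x)\,dx = 0$. Introduce the antiderivative $H(x) = \int_{-\infty}^{x} h(t)\,dt$, which is bounded, satisfies $H(\pm\infty)=0$, and is \emph{odd} (since $h$ is even and integrates to zero, $H(-x) = -H(x)$, so in particular $H(0)=0$). Assuming, as in the Kantorovich setting, that $f$ and $g$ have finite first moment, the boundary terms vanish and for every $\phi \in \Li{1}$ one obtains
\begin{equation*}
\int_{\RR} \phi(x)\,h(x)\,dx = -\int_{\RR} \phi'(x)\,H(x)\,dx \enspace.
\end{equation*}
Here I use that a $1$-Lipschitz $\phi$ is absolutely continuous with $|\phi'|\le 1$ almost everywhere, and conversely any measurable derivative bounded by $1$ integrates to a function in $\Li{1}$.

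Next I would maximize the right-hand side pointwise. Since $|\phi'(x)|\le 1$, the integrand obeys $-\phi'(x) H(x) \le |H(x)|$, with equality when $\phi'(x) = -\sgn{H(x)}$ on $\{H \neq 0\}$. Hence $\max_{\phi\in\Li{1}} \int_{\RR} \phi\,h\,dx = \int_{\RR} |H(x)|\,dx$, attained by any $\phi$ whose derivative equals $-\sgn{H(x)}$ wherever $H(x)\neq 0$. It then remains only to check that $\phi^*(x) = -|x|$, whose derivative is $(\phi^*)'(x) = -\sgn{x}$, is of this form, i.e. that $\sgn{H(x)} = \sgn{x}$ on $\{H\neq 0\}$.

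This sign computation is where the structural hypotheses enter and is the core of the argument. By oddness of $H$ it suffices to treat $x>0$, where $H(x)=\int_0^x h(t)\,dt$. Since $f$ has connected support centered at $0$, say $\suppo{f}=(-a,a)$, and $\suppo{f}\cap\suppo{g}=\emptyset$ with $g\ge 0$, we have $h=f\ge 0$ on $(0,a)$ and $h=-g\le 0$ on $(a,\infty)$. Thus $H$ rises from $H(0)=0$ to $H(a)=\int_0^a f\,dx>0$ and then decreases monotonically back to $H(\infty)=0$, so $H(x)\ge 0$ for all $x>0$; by oddness $H(x)\le 0$ for $x<0$. Therefore $\sgn{H(x)}=\sgn{x}$ whenever $H(x)\neq 0$, so $(\phi^*)'(x)=-\sgn{x}=-\sgn{H(x)}$ on $\{H\neq 0\}$, and $\phi^*$ attains the maximum $\int_{\RR}|H|\,dx$, proving the claim.

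The main obstacle I anticipate is the sign analysis of $H$: the whole proof rests on $H\ge 0$ on the positive half-line, and this is precisely where connected support of $f$, disjointness of the supports, symmetry, and equal mass are all needed together. A secondary technical point is the vanishing of the boundary terms in the integration by parts, which amounts to $x\,H(x)\to 0$; for large $x$ one has $H(x)=\int_{x}^{\infty} g(t)\,dt$, so this follows from a finite-first-moment assumption on $g$ (automatically met in the intended applications to truncated Gaussians).
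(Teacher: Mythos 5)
Your proof is correct, but it takes a genuinely different route from the paper's. The paper argues on the primal side of the Kantorovich duality: it constructs an explicit coupling $\pi$ (pairing the left half of the mass of $f$ with the left half $g_1$ of $g$ via a product coupling and reflecting for the right halves), computes its transport cost as a difference of barycenters $y_0-x_0$, and verifies that this cost equals $\int_{\RR}(-|x|)(f(x)-g(x))\,dx$, so that weak duality certifies optimality of both $\pi$ and $\phi^*$ simultaneously. You stay entirely on the dual side: integration by parts converts $\int_{\RR}\phi\,h\,dx$ into $-\int_{\RR}\phi' H\,dx$, the constraint $|\phi'|\le 1$ a.e.\ bounds this by $\int_{\RR}|H|\,dx$, and the structural hypotheses (connected symmetric support of $f$, disjointness, equal mass) enter only through the sign computation $\sgn{H(x)}=\sgn{x}$ on $\{H\neq 0\}$, which shows $\phi^*(x)=-|x|$ saturates the bound. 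This is in effect the one-dimensional identity $W_1=\int|F_\mu-F_\nu|$ specialized to the present situation; it is shorter, produces the optimal value $\int_{\RR}|H|\,dx$ in closed form, and characterizes \emph{all} maximizers (those with $\phi'=-\sgn{H}$ wherever $H\neq0$), whereas the paper's construction additionally exhibits an optimal coupling, which is what the surrounding discussion of non-differentiability at coupled points actually uses. Two small caveats, neither fatal: the vanishing of the boundary terms $\phi(\pm R)H(\pm R)$ requires finite first moments of $f$ and $g$, which you correctly flag and which the paper's proof also needs implicitly (its barycenters $x_0,y_0$ are otherwise undefined); and the degenerate case $f\equiv 0$ (forcing $g\equiv 0$ by the equal-mass condition) should be set aside so that $H(a)>0$ and the sign analysis is nonvacuous.
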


\begin{proof}
Before going into the technical details, we wish to point out the simple idea of the proof, which is to transport the left/right half of the distribution given by $g$ to the left/right half of the distribution given by $f$ respectively.

We first multiply both ${f}$ and ${g}$ by a constant number $c$ such that $$ \int_{\RR} c\cdot {f}(x) dx =   \int_{\RR} c\cdot {g}(x) dx=1.$$ Then $c\cdot {f}$ and $c\cdot {g}$ define probability density functions. A function $\phi\in \Li{1}$ maximizes  $\int_{\RR}   \phi(x) (c \cdot  {f}(x)- c\cdot {g}(x) ) dx $ if and only if it maximizes $\int_{\RR}   \phi(x) (  {f}(x)-  {g}(x) ) dx $. We therefore may assume from now on that 
$$\int_{\RR} {f}(x) dx = \int_{\RR} {g}(x) dx=1.$$
Now it suffices to find a coupling $\pi$ of the probability distributions defined by $f$ and $g$ (that is itself defined by a probability density function $\pi:\RR\times \RR\rightarrow \RR$) such that for $\phi(x)=  -|x|$ we get $$\int_{\RR\times\RR} |x-y|\cdot  \pi(x,y) dx dy =  \int_{\RR} \phi(x) ( {f}(x)- {g}(x) ) dx.$$
The proof then follows from the Kantorovich duality theorem~\ref{thm:Kantorovich}, because the right hand side is always smaller or equal to the left hand side for arbitrary coupling $\pi$ and function $\phi\in \Li{1}$ and is consequently maximized when equality holds.  
By the assumption of symmetry, we may write ${g}={g}_1+{g}_2$ where the support $\supp{{g}_1}\subseteq \{x\ |\ x<0\}$ and ${g}_2(x)={g}_1(-x)$ for all $x$. The area under $g_1(x)$ equals half the area under $f(x)$, or put differently,
$$\int_{\RR} {g}_1(x) dx = \int_{\RR} f(x) \delta_{(-\infty, 0]}(x) dx=\frac{1}{2}.$$
We now consider the probability density function $\pi_1:\RR\times\RR\rightarrow \RR$ given by $$\pi_1(x,y)=2 {g}_1(x)\cdot 2 f(y)\cdot \delta_{(-\infty, 0]}(y),$$ which  defines a coupling between the two distributions given by the probability density functions $2 {g}_1$ and $2 f\cdot \delta_{(-\infty, 0]}$. For later use we note that
$$\int_{x\in\RR} \pi_1(x,y)\ dx=2\cdot f(y)\cdot  \delta_{(-\infty, 0]}(y)  \text{ and } \int_{y\in\RR} \pi_1(x,y)\ dy=2\cdot g_1(x).$$

We define $\pi_2(x,y)=\pi_1(-x,-y)$ for $y \neq 0$ and $\pi_2(x,0)=0$. Further, we let $\pi=\frac{1}{2}\pi_1+\frac{1}{2}\pi_2$.  Then $\pi$ defines a coupling between $g$ and $f$ as can be seen by computing

$$\int_{x\in \RR}\pi(x,y)dx =\frac{1}{2} \int_{x\in \RR}\pi_1(x,y)dx +  \frac{1}{2} \int_{x\in \RR}\pi_2(x,y)dx$$
$$=\frac{1}{2} \int_{x\in \RR}\pi_1(x,y)dx +  \frac{1}{2} \int_{x\in \RR}\pi_1(-x,-y)\delta_{\{y\neq 0\}}(y) dx$$
$$ = {f}(y) \delta_{(-\infty, 0]}(y) + {f}(y) \delta_{(0,\infty)}(y) = {f}(y) $$
and
$$\int_{y\in \RR}\pi(x,y)dy =\frac{1}{2} \int_{y\in \RR}\pi_1(x,y)dy +  \frac{1}{2} \int_{y\in \RR}\pi_2(x,y)dy$$
$$ \frac{1}{2} \int_{y\in \RR}\pi_1(x,y)dy +  \frac{1}{2} \int_{y\in \RR}\pi_1(-x,-y)\delta_{\{y\neq 0\} }(y)dy $$
$$ = {g}_1(x) + {g}_1(-x)  = {g}_1(x) + {g}_2(x) = {g}(x) $$

We have established the existence of some coupling between ${f}$ and ${g}$ and we will now compute its transport costs. We will subsequently show that this equals $\int_{\RR}   (-|x|)  ( {f}(x)- {g}(x) ) dx $, hence both $\pi$ and $\phi$ are optimal by realizing the Kantorovich duality.

We aim at showing $\int_{\RR\times\RR} |x-y| \pi(x,y) dx dy= \int_{\RR}   (-|x|)  ( {f}(x)- {g}(x) ) dx$.
$$\int_{\RR\times\RR} |x-y| \pi(x,y) dx dy\stackrel{symmetry}{=} \int_{\RR\times\RR} |x-y| \pi_1(x,y) dxdy$$
$$= \int_{\RR\times\RR} (y-x) \pi_1(x,y) dxdy.$$
The latter equation holds because for $(x,y)$ in the support of $\pi_1$ we have $x\leq y$. (To see this, note that  support of $\pi_1$ is a subset of the support of $ g_1 \times (f\cdot \delta_{(-\infty, 0]})$.) Let
$$x_0=\frac{\int_{\RR}x\cdot {g}_1(x)dx}{\int_{\RR}{g}_1(x)dx},\text{ and } y_0=\frac{\int_{\RR}y\cdot f(y)\cdot \delta_{(-\infty,0)}(y)dy}{\int_{\RR}f(y)\cdot \delta_{(-\infty,0)}(y) dy}.$$
Then
$$\int_{\RR}(x-x_0)\cdot {g}_1(x)dx=0\text{ and }\int_{\RR}(y-y_0)\cdot {f}(y)\cdot \delta_{(-\infty,0]}(y)dy=0.$$
Now, it follows that
$$ \int_{\RR\times\RR} (y-x) \pi_1(x,y) dxdy= \int_{\RR\times\RR} (y-y_0+y_0-x) \pi_1(x,y) dxdy$$
$$=  \int_{x}\int_{y} (y-y_0)\pi_1(x,y) dxdy +   \int_{x}\int_{y}(y_0-x) \pi_1(x,y) dxdy$$
$$=   \int_{y} (y-y_0)\int_{x}\pi_1(x,y) dxdy +   \int_{x}\int_{y}(y_0-x) \pi_1(x,y) dxdy$$
$$ = 2\underbrace{\int_{y} (y-y_0)\cdot f(y)\cdot \delta_{(-\infty,0]}(y) dy}_{=0} +    \int_{x}\int_{y}(y_0-x_0+x_0-x) \pi_1(x,y) dxdy$$
$$= (y_0-x_0)\int_x\int_y\pi_1(x,y)dx dy + \int_x (x_0-x) \underbrace{\int_y  \pi_1(x,y) dy}_{=2g_1(x)}\ dx$$
$$ = (y_0-x_0).$$
Hence,
$$ \int_{\RR\times\RR} |x-y| \pi(x,y) dx dy= (y_0-x_0)=\frac{ \int_{\RR}y\cdot f(y)\cdot \delta_{(-\infty,0)}(y)dy}{\frac{1}{2}} - \frac{ \int_{\RR}x\cdot {g}_1(x)dx}{\frac{1}{2}} .$$
$$=2\int_{\RR}x\cdot (f(x)\cdot \delta_{(-\infty,0)}(x)-{g}_1(x) )dx$$
$$= 2 \int_{-\infty}^0 x\cdot  ( {f}(x)- {g}(x) ) dx$$
$$= 2 \int_{-\infty}^0 (-|x|)\cdot  ( {f}(x)- {g}(x) ) dx$$
$$ \stackrel{symmetry}{=}\int_{\RR} (-|x|)\cdot  ( {f}(x)- {g}(x) ) dx$$
\end{proof}

We are now able to proof Proposition~\ref{prp:Gaussians}

\begin{proof}[Proof to Proposition~\ref{prp:Gaussians}]\label{prf:Gaussians}
Let $f$ denote the probability density function of $\mathcal{N}(0,1)$ and $g=\frac{1}{2}g_{-1}+\frac{1}{2}g_{1}$ denote the sum of half the probability density functions $g_{-1}$ of $\mathcal{N}(-1,1)$ and $g_{1}$ of $\mathcal{N}(1,1)$. Let $$\tilde{f}(x)=\max \left \{0, (f(x)-g(x))\right \} \text{ and } \tilde{g}(x)=\max \left \{0, (g(x)-f(x))\right \}, $$
i.e. $\tilde{f}$ and $\tilde{g}$ are the positive and the negative part of $(f-g)$.
Then $\tilde{f}$ and $\tilde{g}$ satisfy the hypothesis of Proposition~\ref{prop:truncatedGaussian} and the maximum $$\max_{\phi\in \Li{1}} \int_{\RR}   \phi(x) ({f}(x)- {g}(x) ) dx =\max_{\phi\in \Li{1}} \int_{\RR}   \phi(x) ( \tilde{f}(x)- \tilde{g}(x) ) dx $$ is obtained for $\phi^*(x)=-|x|$.
\end{proof}

\subsection{Proof of Proposition \ref{prp:lambdaDependence}}\label{app:lambdaDependence}

\begin{proof}
For any fixed function $f$ and $\lambda >0$, the two regularized losses of the critic function $f$ are of the form 
$$\mathcal{L}_\lambda^{LP}(f)=c+\lambda \int \max\{0,(h(z)-1)\}^2)\text{d}\tau(z)\text{ and }\mathcal{L}_\lambda^{GP}(f)=c+\lambda \int (h(z)-1)^2)\text{d}\tau(z)$$ for some real number $c$ , a function $h$ with with $h(z)\geq 0$ for all $z$ and a probability distribution $\tau$. Since for any real number $0\leq a$ we have that
$$\max\{0,(a-1)\}^2 \leq (a-1)^2 \leq \max\{0,(a-1)\}^2 +1$$
it follows that 
$$\mathcal{L}_\lambda^{LP}(f)\leq \mathcal{L}_\lambda^{GP}(f)\leq \mathcal{L}_\lambda^{LP}(f) + \lambda.$$
Therefore the inequalities also hold for the infimum over a class of functions, hence
$$\mathcal{L}_\lambda^{LP}\leq \mathcal{L}_\lambda^{GP}\leq \mathcal{L}_\lambda^{LP} + \lambda.$$
\end{proof}

\subsection{The arguments supporting Observation~\ref{obs:nonDifMultiDim}}\label{app:nonDifMultiDim}

For the coupled pairs $(x,y)$ and $(x,y')$ we have that the partial derivatives at $x$ into the directions of $y$ and $y'$ respectively have an absolute value of one. If there are two such directions and $f^*$ is differentiable, then the norm of its gradient must be larger than one, contradicting the 1-Lipschitz constraint. Indeed, recall that, considering $f$ as a function on the line $\{x+\lambda\cdot v\ |\ \lambda\in\RR\}$ with $v$ of unit length, the slope of $f$ at $x$ is given by $\nabla f(x) \cdot v = D_v(f(x)).$ Now 
\begin{equation}\label{scalarProduct} \nabla f(x) \cdot v=\norm{\nabla f(x)}\cdot  \cos(\theta_v)\end{equation}
with $\theta_v$ being the angle between the vector $\nabla f(x)$ and the unit vector $v$. Equation~\eqref{scalarProduct} with $\cos(\theta_v)=1$ has a unique solution for $v$ with $v=\frac{\nabla f(x)}{\norm{\nabla f(x)}}$. It follows that, if for two different directions $v,v'$ we have $D_v(f(x))=D_{v'}(f(x))=1$, then $\cos(\theta_v)=\cos(\theta_{v'})<1$ and $\norm{\nabla f(x)}>1$. 

\section{Additional experimental results}
\subsection{Level sets of the critic}\label{appendix:levelSets}
\begin{figure}[H]
\begin{center}
\includegraphics[trim=50 1 50 50,clip,width=2.5cm]{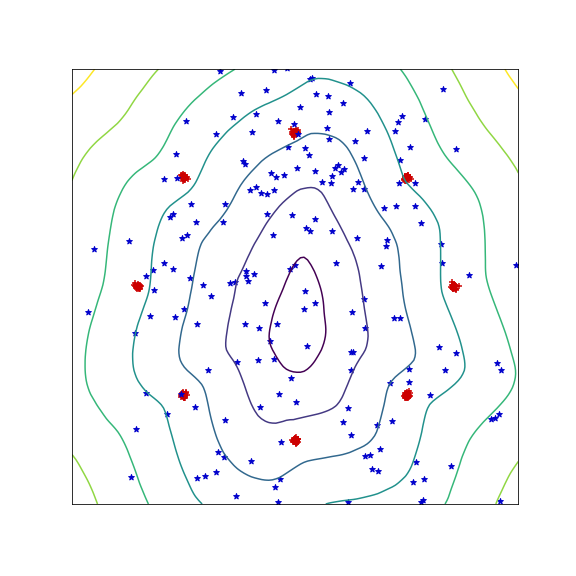}
\includegraphics[trim=50 1 50 50,clip,width=2.5cm]{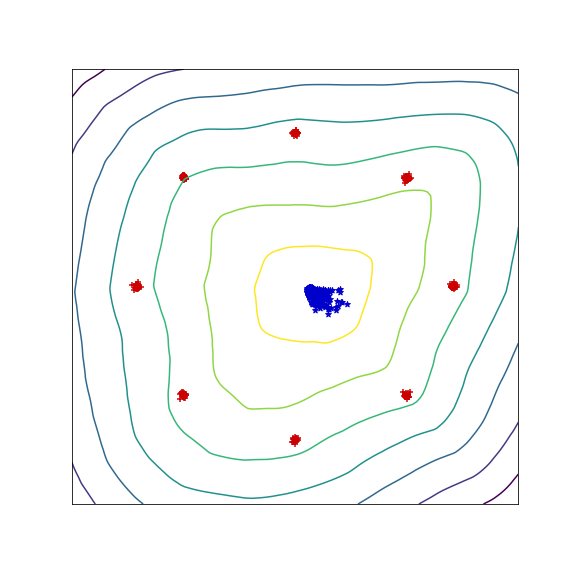}
\includegraphics[trim=50 1 50 50,clip,width=2.5cm]{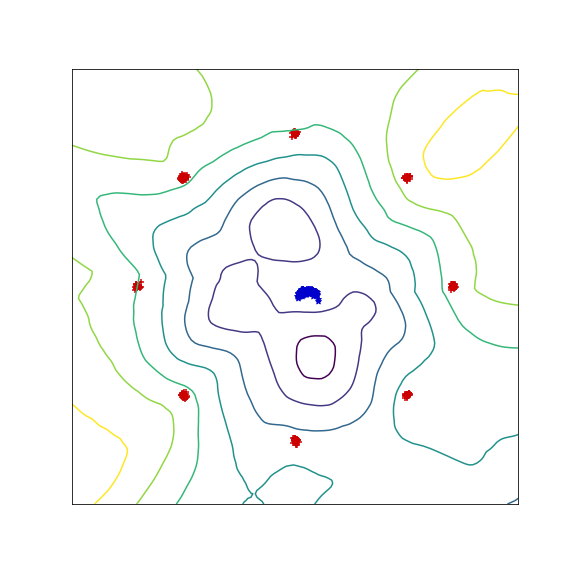}
\includegraphics[trim=50 1 50 50,clip,width=2.5cm]{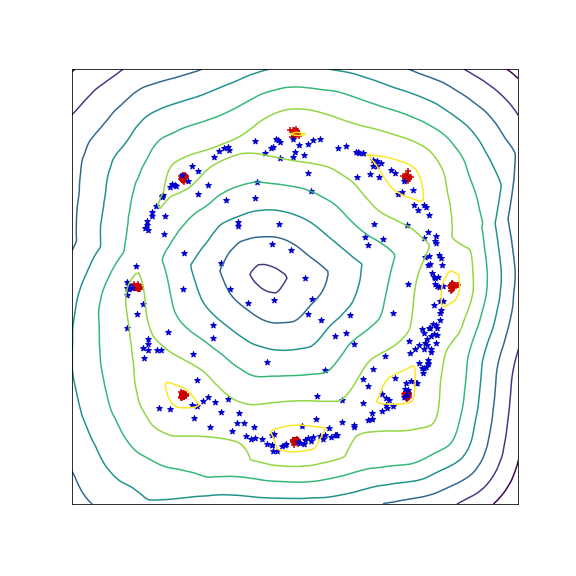}
\includegraphics[trim=50 1 50 50,clip,width=2.5cm]{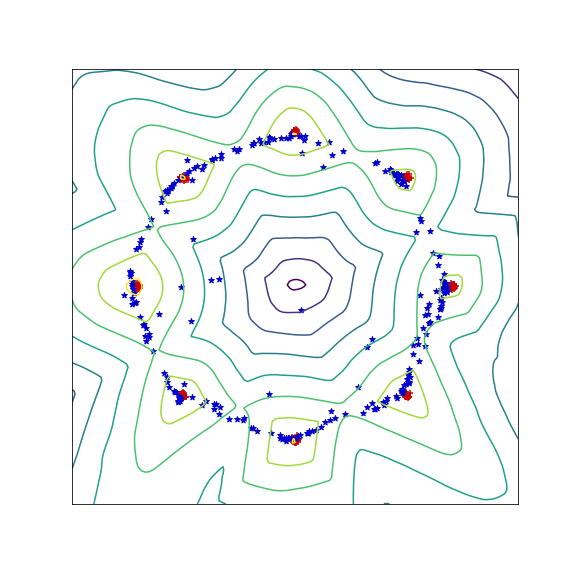}

\includegraphics[trim=50 1 50 50,clip,width=2.5cm]{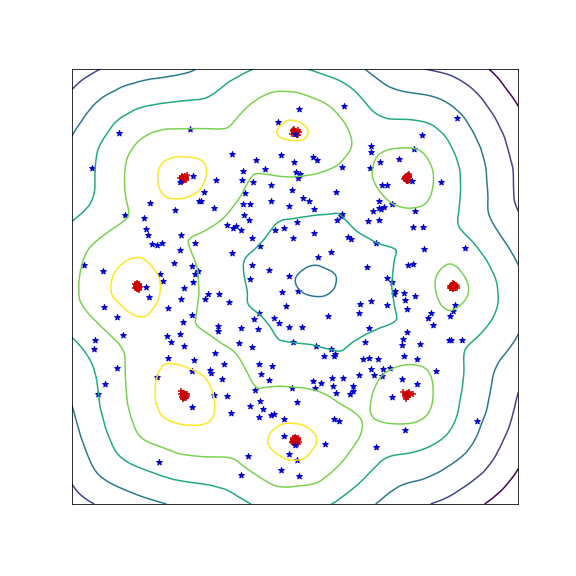}
\includegraphics[trim=50 1 50 50,clip,width=2.5cm]{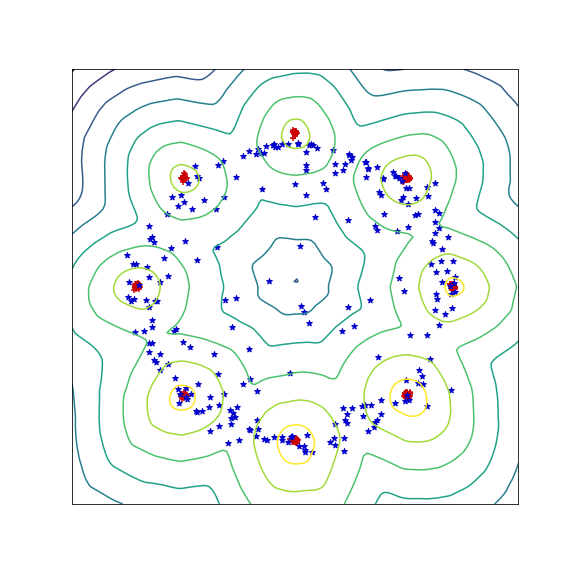}
\includegraphics[trim=50 1 50 50,clip,width=2.5cm]{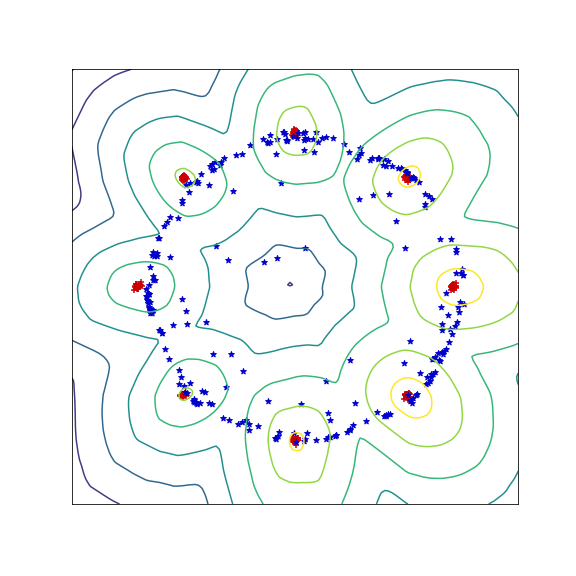}
\includegraphics[trim=50 1 50 50,clip,width=2.5cm]{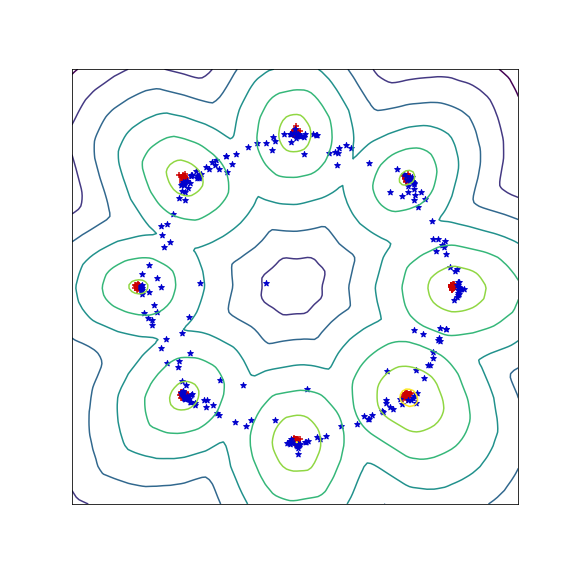}
\includegraphics[trim=50 1 50 50,clip,width=2.5cm]{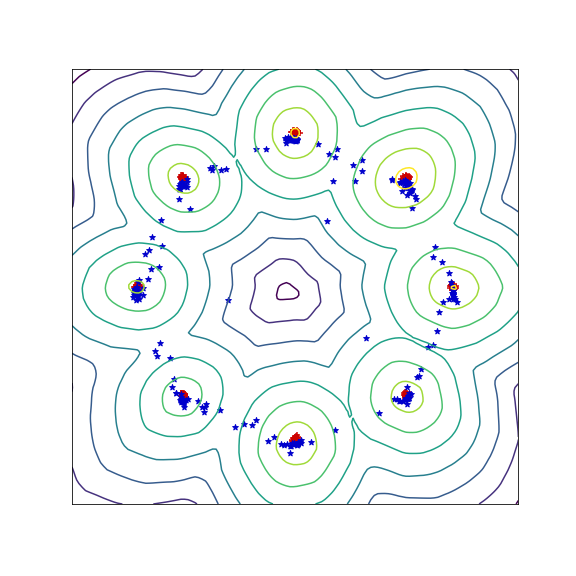}

\includegraphics[trim=50 1 50 50,clip,width=2.5cm]{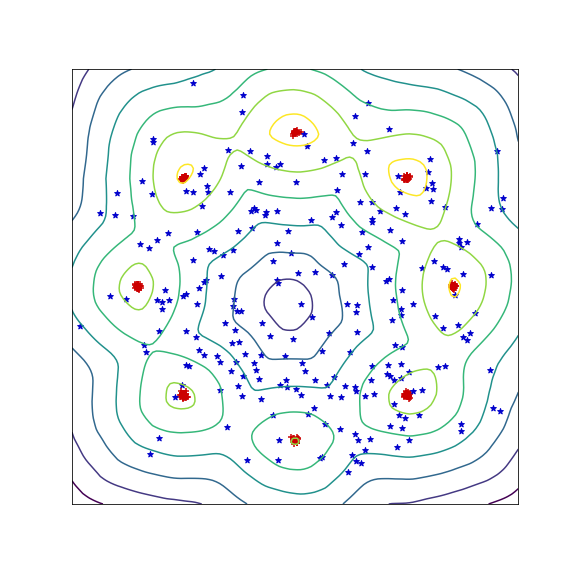}
\includegraphics[trim=50 1 50 50,clip,width=2.5cm]{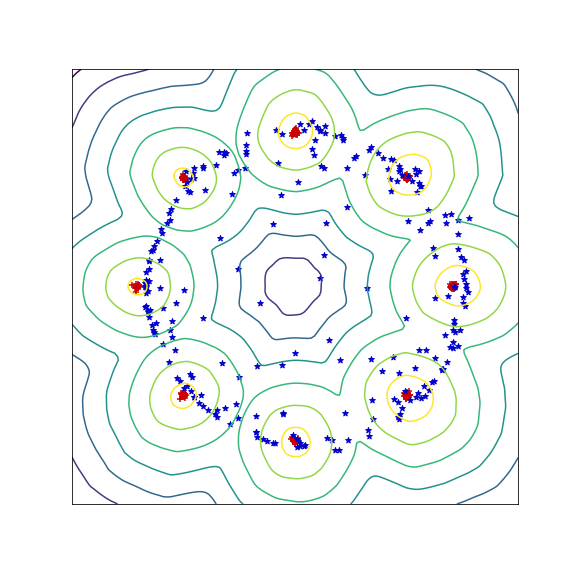}
\includegraphics[trim=50 1 50 50,clip,width=2.5cm]{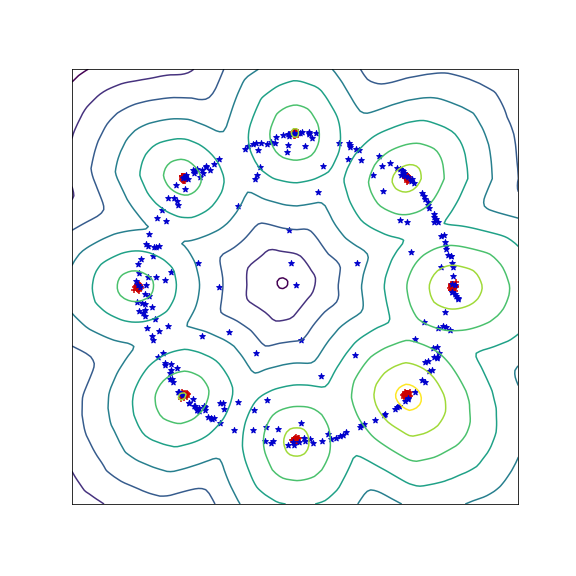}
\includegraphics[trim=50 1 50 50,clip,width=2.5cm]{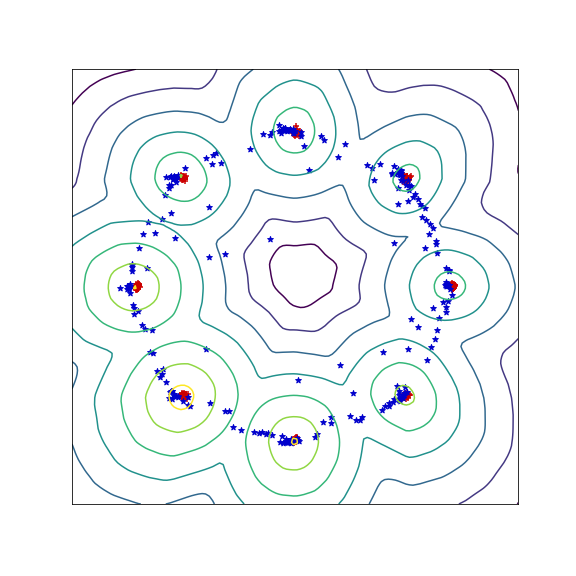}
\includegraphics[trim=50 1 50 50,clip,width=2.5cm]{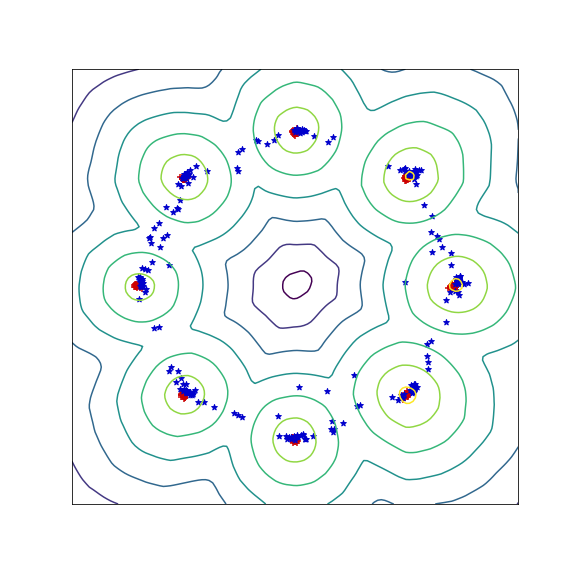}
\caption{Level sets of the critic (yellow corresponds to high, purple to low values) of WGANs during training (after 10, 50, 100, 500, and 1000 iterations) on the 8Gaussian data set. Top: GP-penalty ($\lambda=10$).
Middle: GP-penalty ($\lambda=1$).
Bottom: LP-penalty ($\lambda=10$).}
\label{fig:LevelSet8Gaussian}
\end{center}
\end{figure}

\begin{figure}[H]
\begin{center}
\includegraphics[trim=50 1 50 50,clip,width=2.5cm]{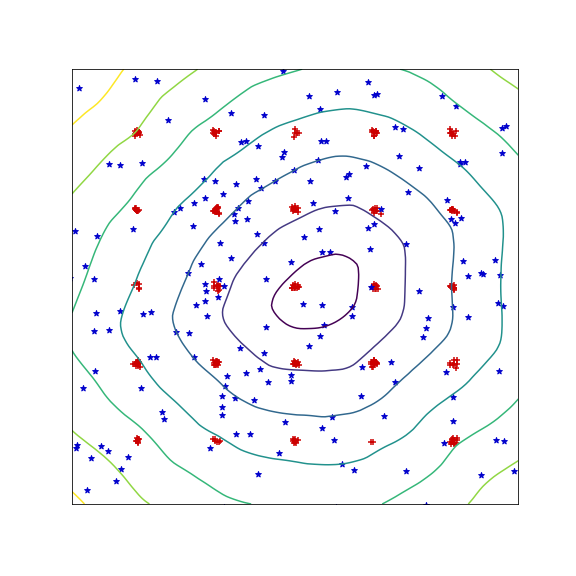}
\includegraphics[trim=50 1 50 50,clip,width=2.5cm]{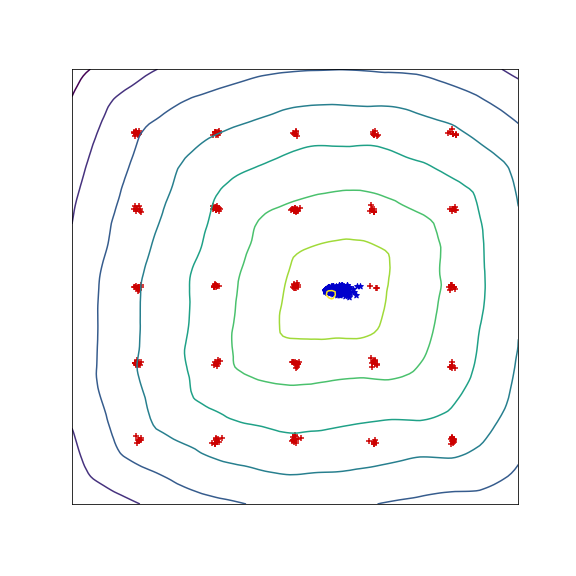}
\includegraphics[trim=50 1 50 50,clip,width=2.5cm]{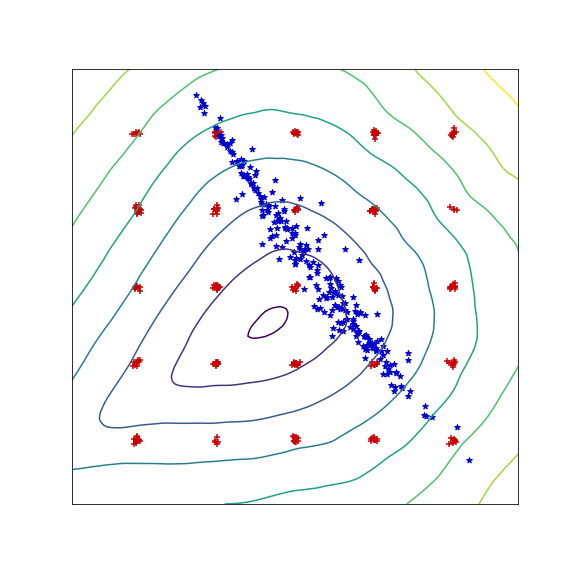}
\includegraphics[trim=50 1 50 50,clip,width=2.5cm]{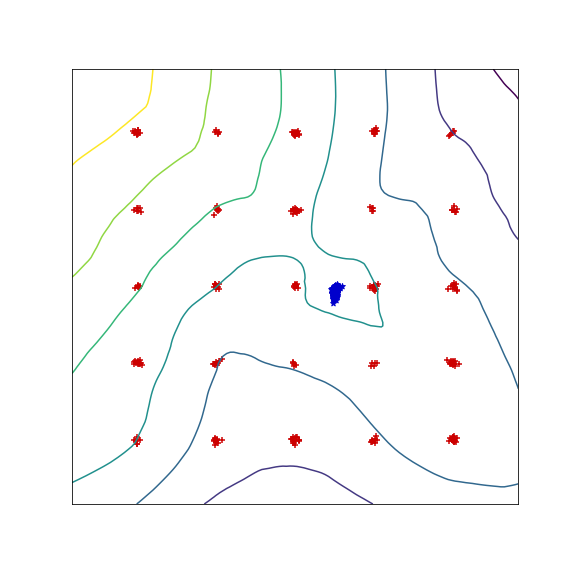}
\includegraphics[trim=50 1 50 50,clip,width=2.5cm]{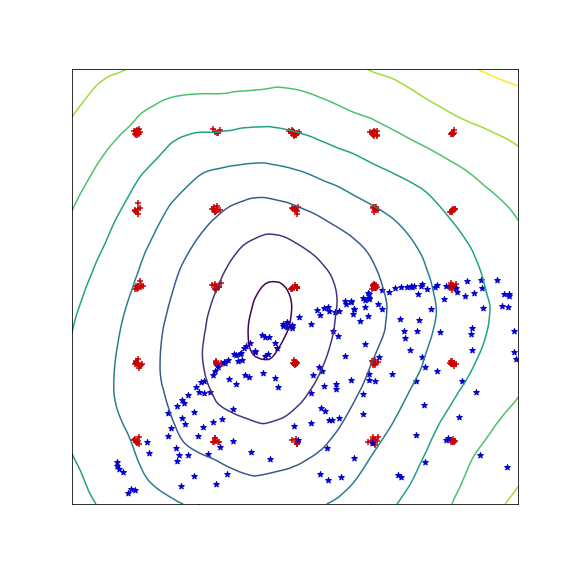}

\includegraphics[trim=50 1 50 50,clip,width=2.5cm]{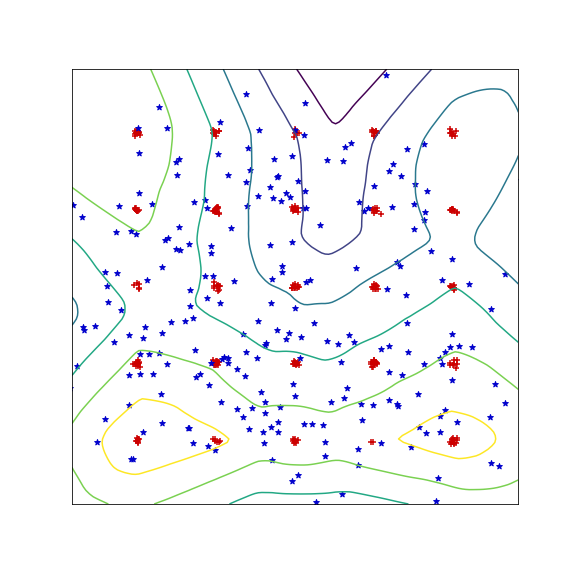}
\includegraphics[trim=50 1 50 50,clip,width=2.5cm]{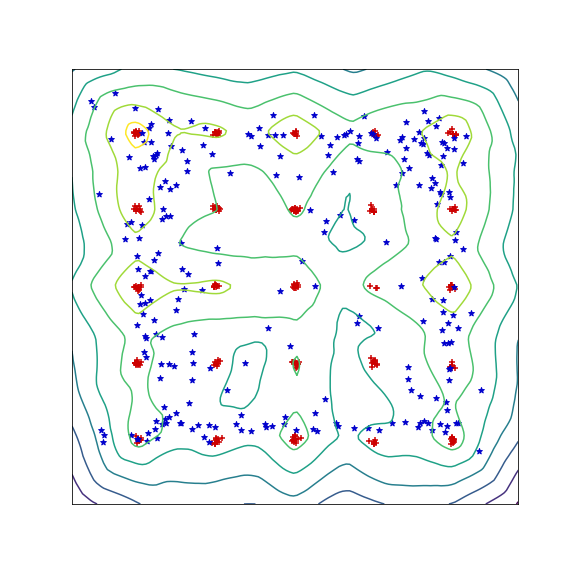}
\includegraphics[trim=50 1 50 50,clip,width=2.5cm]{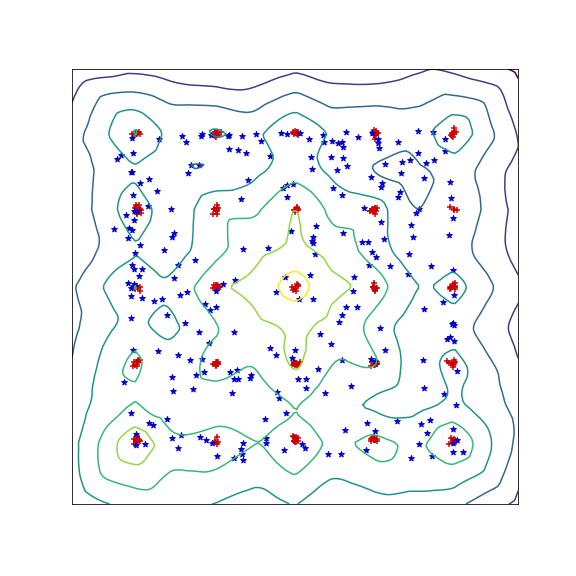}
\includegraphics[trim=50 1 50 50,clip,width=2.5cm]{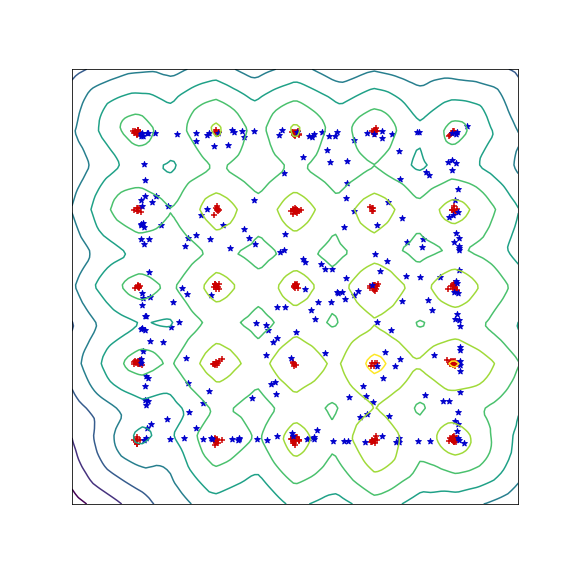}
\includegraphics[trim=50 1 50 50,clip,width=2.5cm]{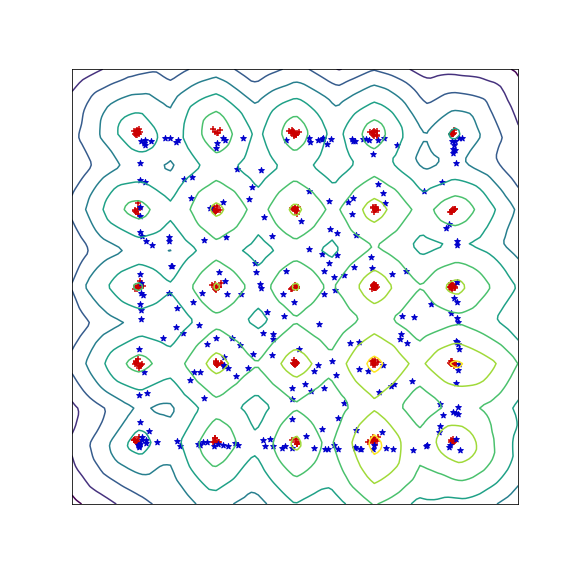}

\includegraphics[trim=50 1 50 50,clip,width=2.5cm]{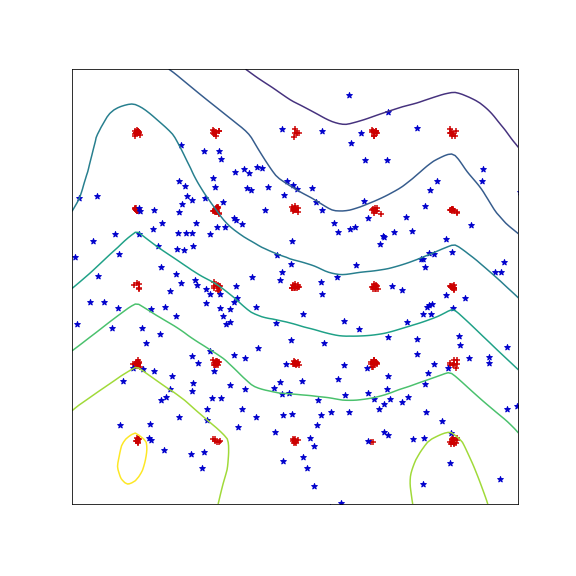}
\includegraphics[trim=50 1 50 50,clip,width=2.5cm]{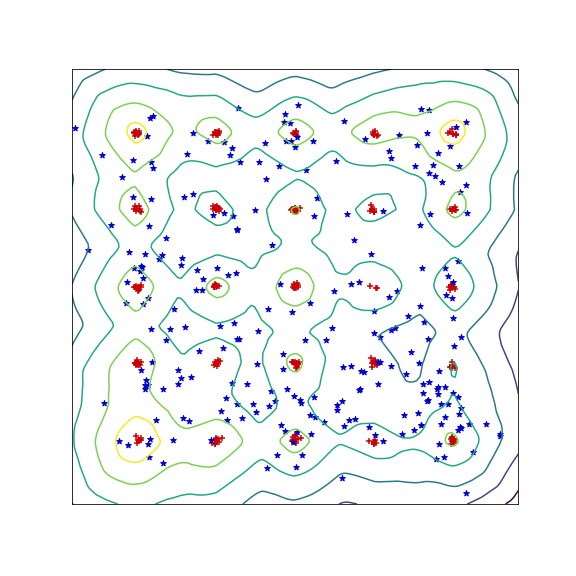}
\includegraphics[trim=50 1 50 50,clip,width=2.5cm]{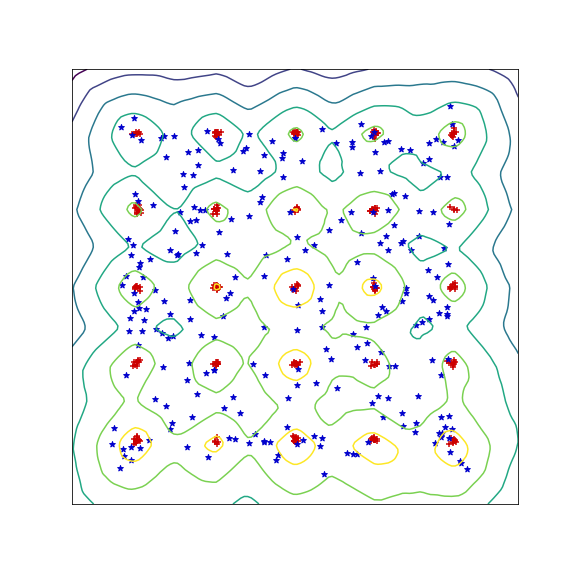}
\includegraphics[trim=50 1 50 50,clip,width=2.5cm]{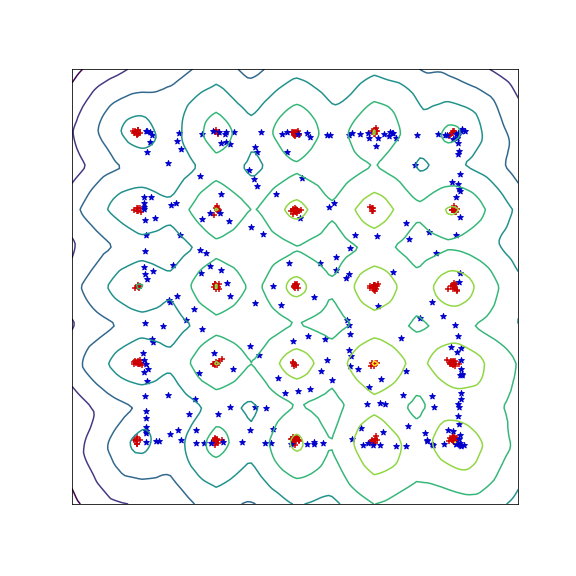}
\includegraphics[trim=50 1 50 50,clip,width=2.5cm]{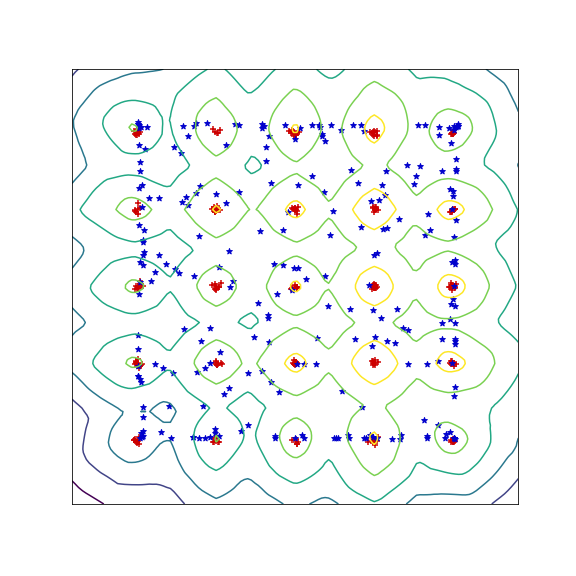}

\caption{Level sets of the critic (yellow corresponds to high, purple to low values) of WGANs during training (after 10, 50, 100, 500, and 1000 iterations) on the 25Gaussian data set. 
Top: GP-penalty ($\lambda=10$).
Middle: GP-penalty ($\lambda=1$).
Bottom: LP-penalty ($\lambda=10$).}
\label{fig:LevelSet25Gaussian}
\end{center}
\end{figure}


%
%

\subsection{Evolution of critics loss}\label{appendix:criticsLoss}

\begin{figure}[H]
\begin{center}
\includegraphics[width=0.35\textwidth]{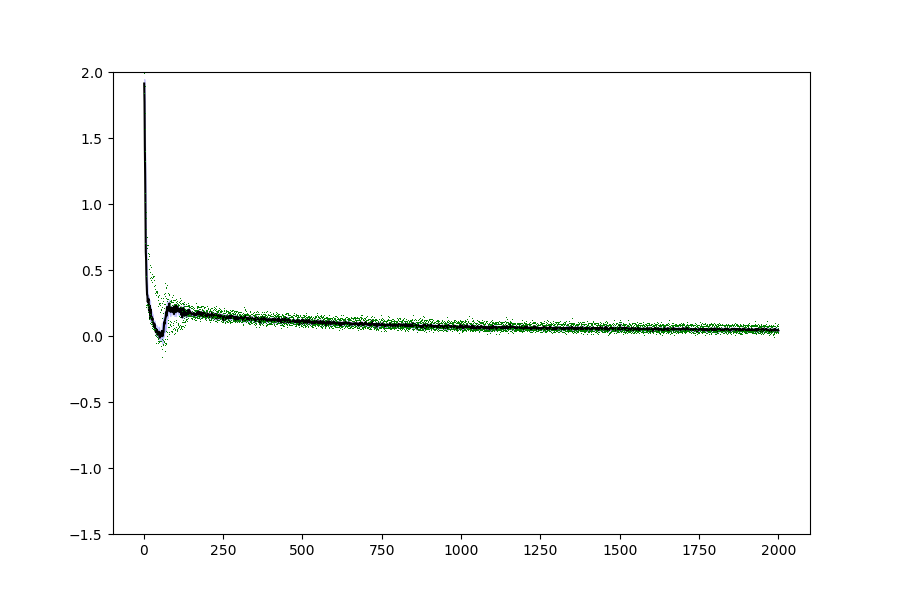} 
\includegraphics[width=0.35\textwidth]{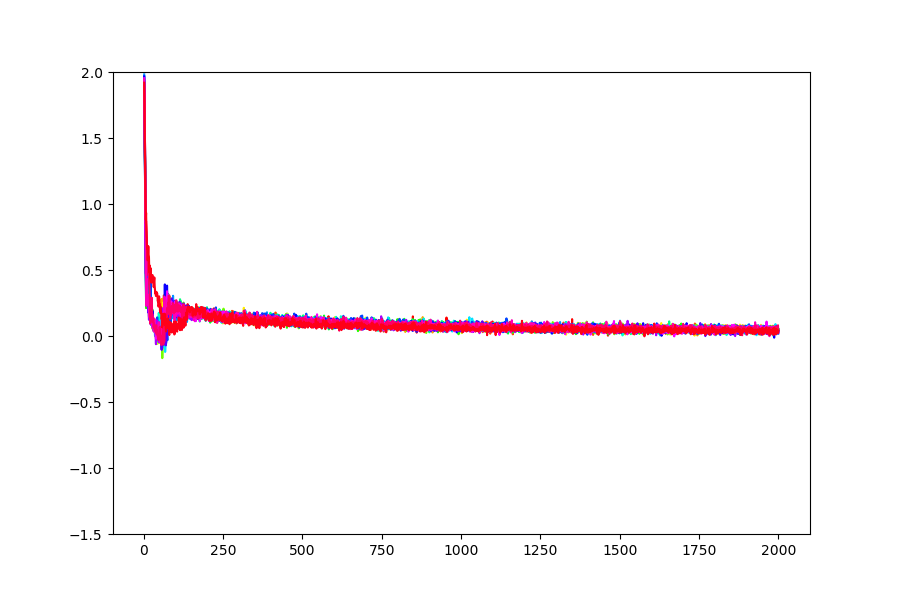}
\caption{Evolution of the WGAN-GP  critics loss without the regularization term ($\lambda=1$). Left: Median results over the 20 runs (blue area indicates quantiles, green dots outliers). Right: Single runs. }
\label{fig:WGANfLoss-GPwp1}
\end{center}
\end{figure}

\subsection{Evolution of the EM distance}\label{appendix:EMdistance}

%


\begin{figure}[h]
\begin{center}
\includegraphics[width=0.35\textwidth]{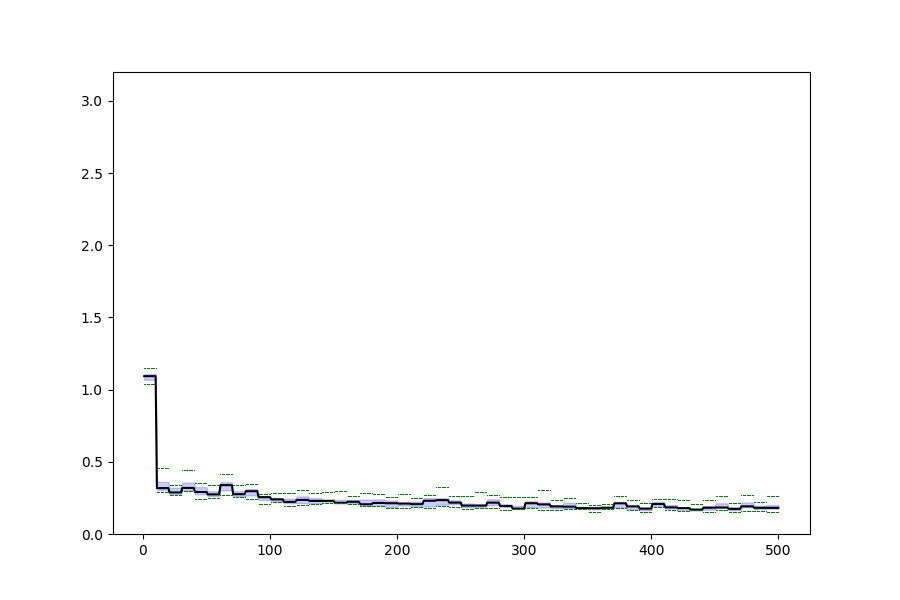}
\includegraphics[width=0.35\textwidth]{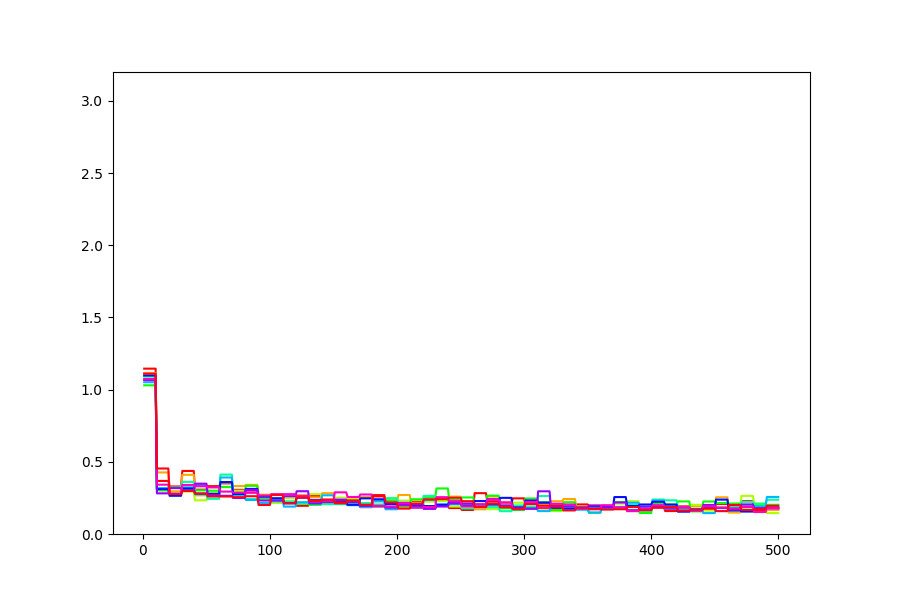} 
\caption{Evolution of the approximated EM distance during training WGAN-GPs with $\lambda=1$. Left: Median results over the 10 runs. Right: Single runs.}
\label{fig:ExpEMDWGAN-GP-wp1}
\end{center}
\end{figure}

\subsection{Different sampling methods}\label{app:differentSampling}
We analyzed the effect of the GP- and the LP-penalty using different sampling procedures. In particular, we compared the sampling procedure proposed by \citet{iWGAN} with variants, which generate the samples used for the regularization term by adding random noise either onto training points or onto both training and generated samples. We refer to this as ``local perturbation'' in the following.\footnote{Note that applying the GP-penalty on samples generated by adding noise to the training examples was also suggested by \citet{DRAGAN}, but using a different (asymmetric) noise distribution and in combination with the vanilla GAN objective.}
The evolution of the critics loss when using this local perturbation
can be seen in Figure~\ref{fig:LossDRAGAN}. Results are qualitatively similar to those when using the sampling procedure proposed by \citet{iWGAN}. Interestingly, WGAN-GP training is stabilized at a later stage if one only adds noise to training examples and not to generated examples. This indicates that enforcing the GP-penalty close to the data manifold is less harmful.   
However, the critic's loss is still much more fluctuating than when training a WGAN-LP.

The evolution of the approximated EM distance when using local perturbation  (by adding noise to the training examples only) is shown in Figure~\ref{fig:ExpEMD-DRAGAN}. Training with the GP-penalty leads to larger fluctuations of the approximated Wasserstein-1 distance than training with the LP-penalty. However, fluctuations are less severe compared to the setting when the GP-penalty is used in combination with the sampling procedure proposed by \citet{iWGAN}. 

\begin{figure}[H]
\begin{center}
\includegraphics[width=5cm]{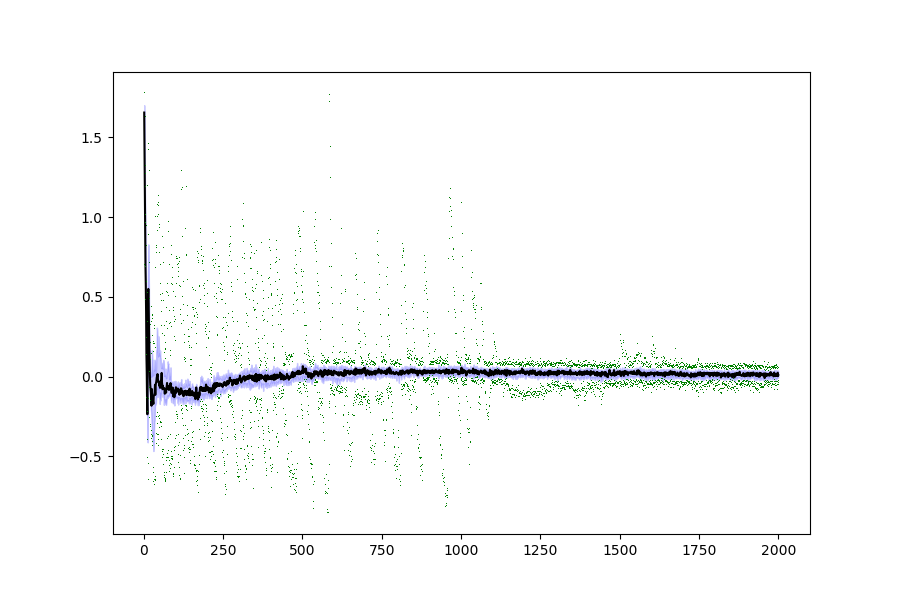} 
\includegraphics[width=5cm]{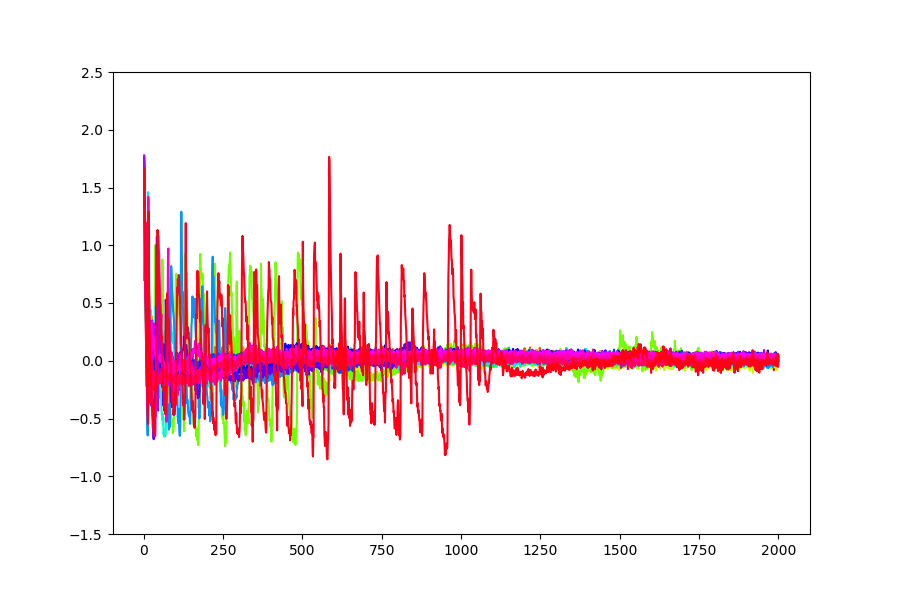}

\includegraphics[width=5cm]{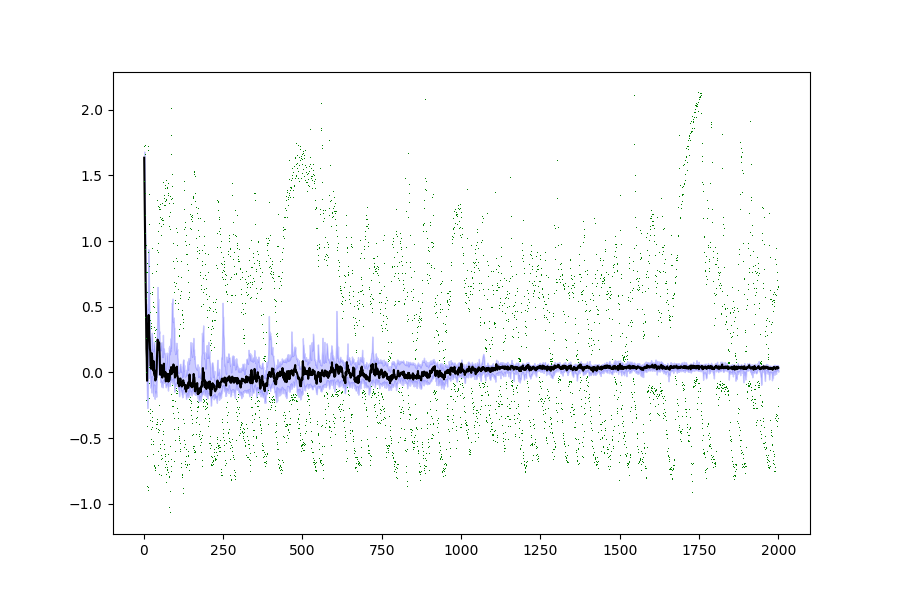}
\includegraphics[width=5cm]{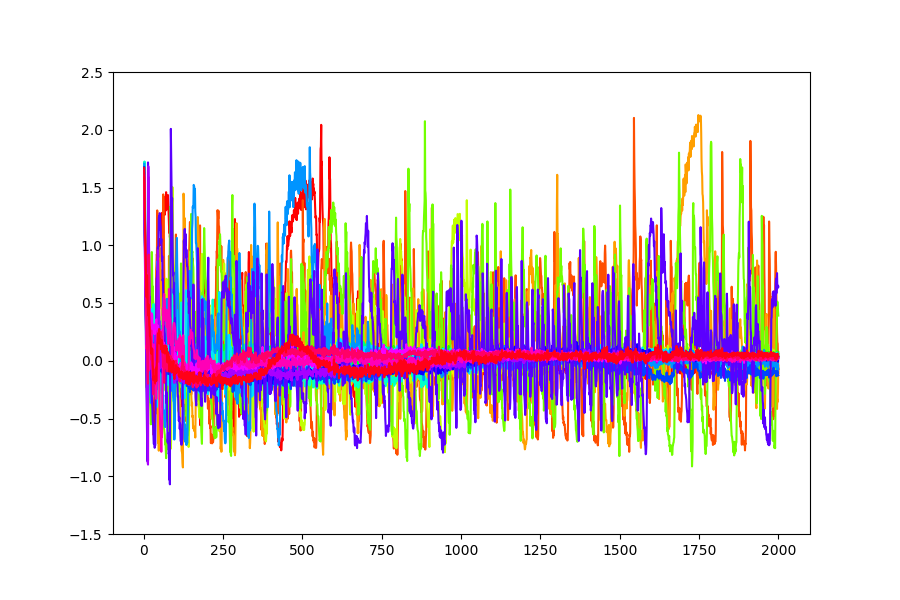}

\includegraphics[width=5cm]{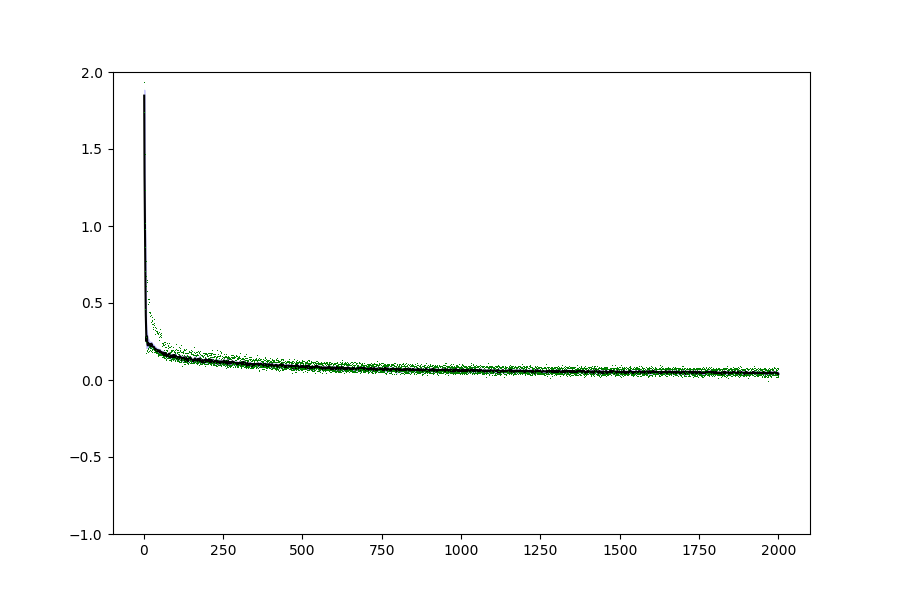}
\includegraphics[width=5cm]{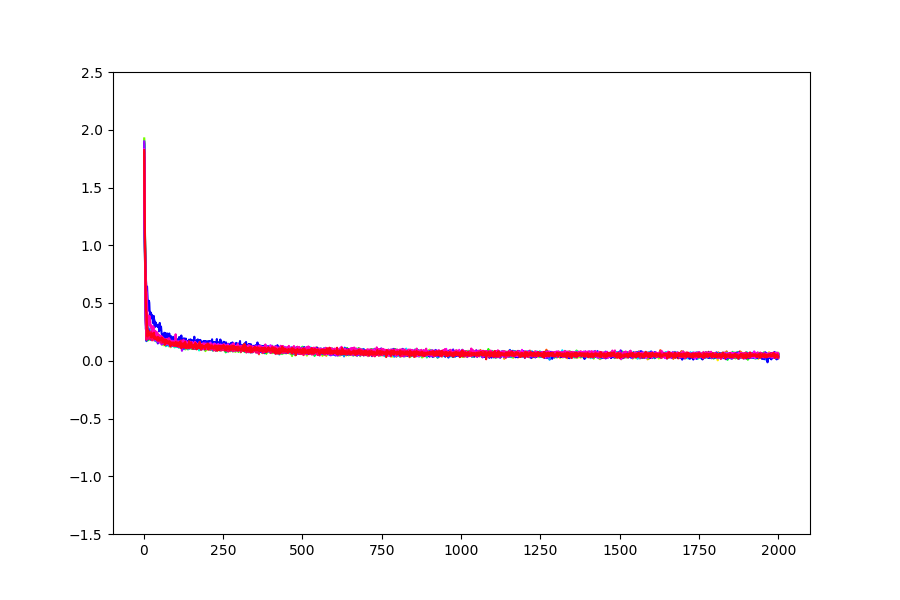}
\caption{Evolution of the WGAN critic's negative loss with local sampling (without the regularization term). Left: Median results over the 20 runs. Right: Single runs.
Top: GP-penalty when generating samples by perturbing training samples only. Middle:  For GP-penalty, perturbing training and generated samples. Bottom: LP-penalty, perturbing training and generated samples (very similar to perturbing only training samples)}
\label{fig:LossDRAGAN}
\end{center}
\end{figure}

\begin{figure}[H]
\begin{center}
\includegraphics[width=0.7\textwidth]{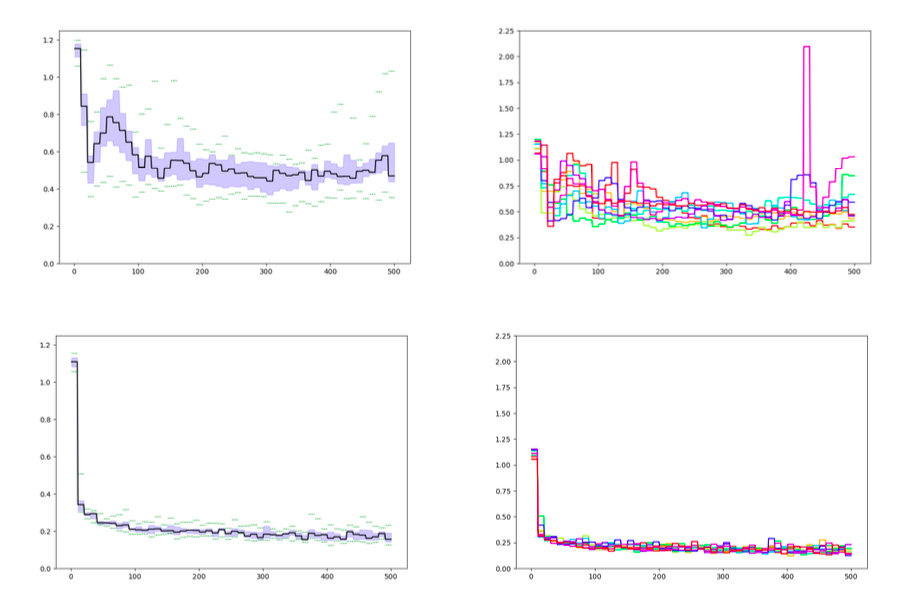} 
\caption{Evolution of the approximated EM distance during training of WGANs with local perturbation ($\lambda=5$). Left: Median results over the 10 runs. Right: Single runs.
Top: For the GP-penalty. Bottom: For the LP-penalty}
\label{fig:ExpEMD-DRAGAN}
\end{center}
\end{figure}


\subsection{Optimizing the Wasserstein-2 distance}\label{app:wasserstein2}

We trained a WGAN with the objective of minimizing the Wasserstein-2 distance\footnote{Choosing $p=2$ has the theoretical advantage that, if the distributions $\mu$ and $\nu$ have finite moment of order 2 and are absolutely continuous with respect to the Lebesgue measure, then there is a unique deterministic optimal coupling.}, that is, with the regularization term given by 
\begin{equation}\label{eq:LP-general} \max \left (\left \{ 0 , \frac{|f(x)-f(y)|}{\norm{x-y}^2} -1 \right \} \right )^2\enspace, \end{equation}
and penalty weight $\lambda=10$. Results for the evolution of the critics loss and the approximated EM distance
during training on the Swiss Roll data set are shown in Figure \ref{fig:Wasserstein2}. Both critic loss and EM reduce smoothly, which makes the Wasserstein-2 distance (in combination with its theoretical properties) an interesting candidate to further investigations.


\begin{figure}[H]
\begin{center}
\includegraphics[width=0.35\textwidth]{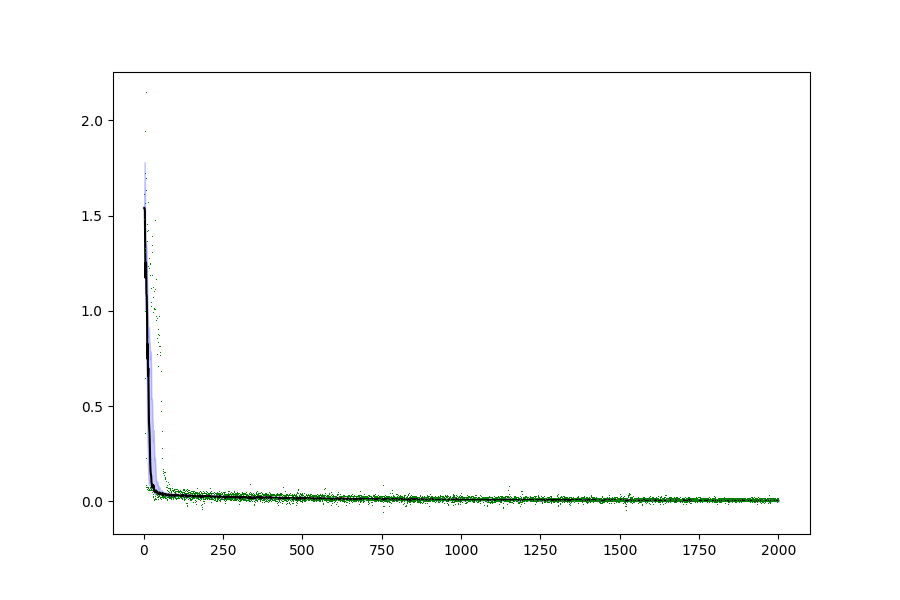} 
\includegraphics[width=0.35\textwidth]{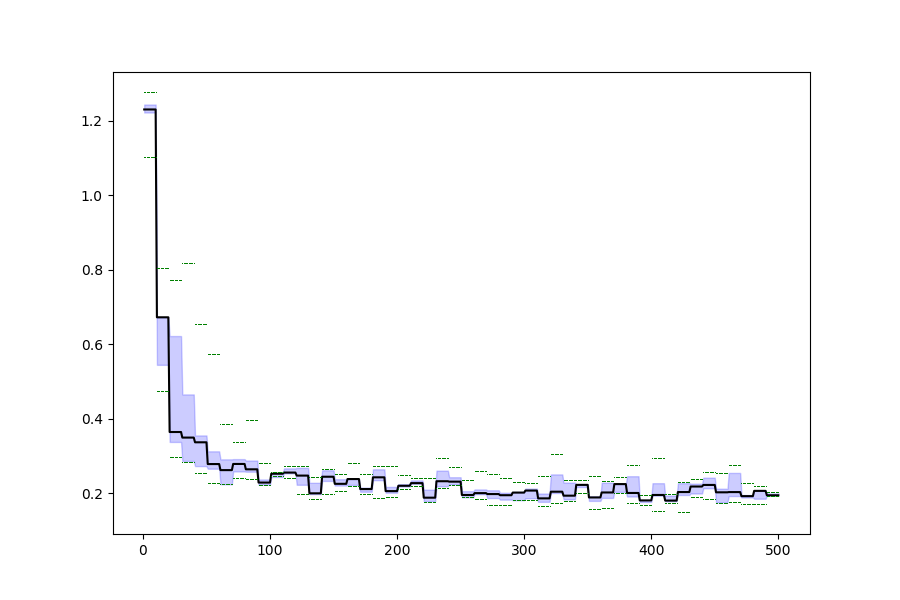}
\caption{Evolution of the WGAN critics loss (Left) and the approximated EM distance (Right)
for a WGAN-LP trained to minimize the Wasserstein-2 distance ($\lambda=10)$. Shown are the medians over 5 runs. 
}
\label{fig:Wasserstein2}
\end{center}
\end{figure}

\subsection{Experimental results on CIFAR}\label{app:realExperiments}

\paragraph{Inception score.}
The inception score was proposed by \citet{iGAN} to evaluate the quality of images $x$ sampled from a generative model $\nu$ based on the Inception model. Let $p(y|x)$ be the conditional probability of label $y$ for image $x$ under the Inception model and $p(y)=\int p(y|x) \nu(x) dx$ the marginal probability of labels  $y$ with respect to samples generated from $\nu$.
Then the Inception score is given by
\begin{equation}
 \exp\left( \mathbb{E}_{x\sim\nu}[KL(p(y|x),p(y)]\right) \enspace.
\end{equation}
Intuitively, a good generative model should produce samples for which the conditional label distribution has low entropy, while the variability over samples and thus the entropy of the marginal label distribution should be high. Therefore, a higher Inception score indicates a better performance of the generative model. 

The maximal Inception scores reported in Table \ref{tab:IS}
are representative for the general evolution of the scores for WGAN-LP and WGAN-GP during training. As an example we show the evolution of the Inception score  for penalty weights of $\lambda=5$ and $\lambda=100$ in Figure~\ref{fig:unconditional}. It becomes clear that WGAN-GP performs similar to WGAN-LP for small values of the regularization parameter but much worse for larger values (this was consistently observed in all experiments). 
In Figure~\ref{fig:dev_set_comparison} we compare the performance of WGAN-LP and WGAN-GP in terms of the critics loss on a separate validation set, which again demonstrates a more stable behavior for WGAN-LP with respect to the choice of lambda.


\begin{figure}[H]
\begin{center}
\includegraphics[width=0.49\textwidth]{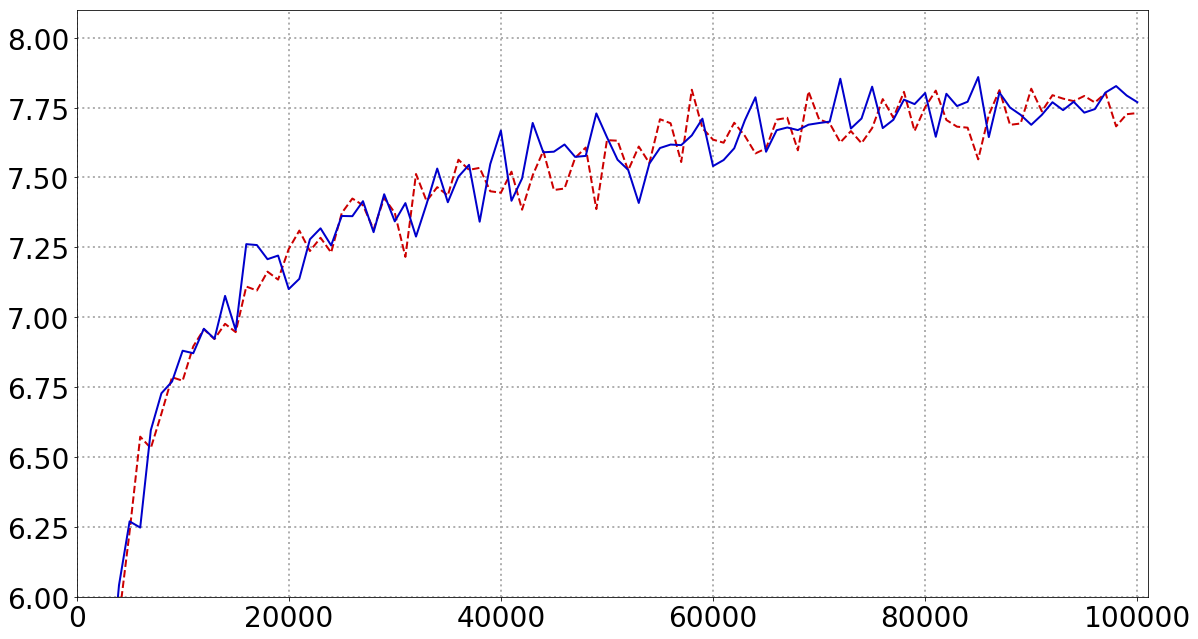}
\includegraphics[width=0.49\textwidth]{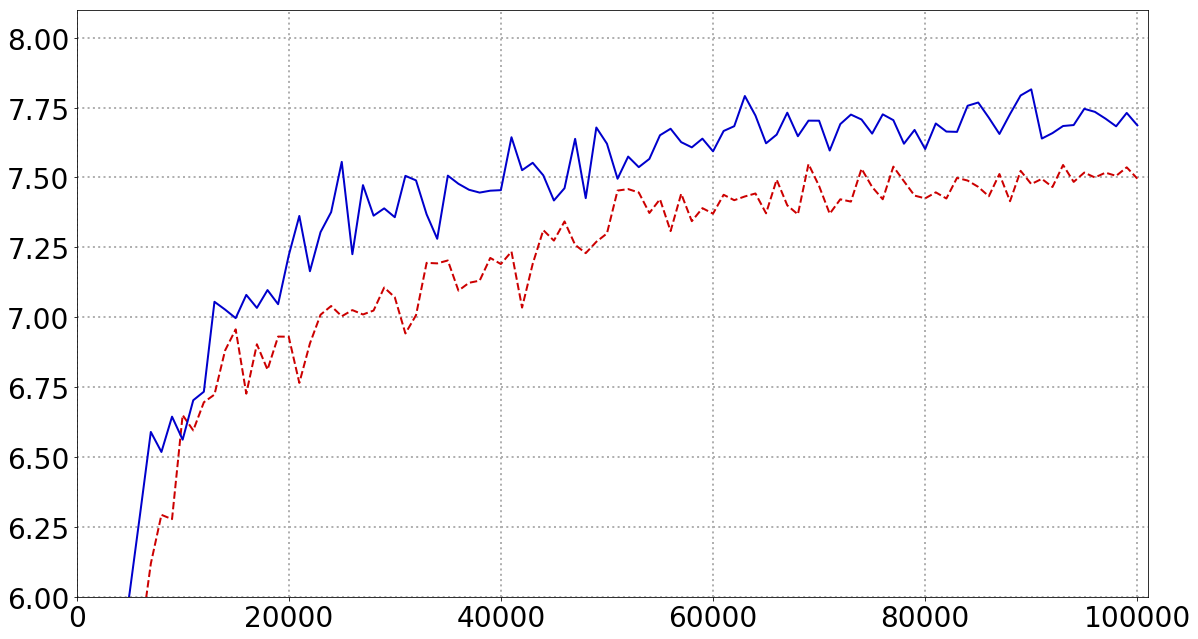}
\caption{Evolution of Inception score on CIFAR for WGAN-LP in blue (solid) and WGAN-GP in red (dotted). Left: for regularization parameter $\lambda=5$. Right: for regularization parameter $\lambda=100$. }
\label{fig:unconditional}
\end{center}
\end{figure}

 We also trained  WGAN-GP and WGAN-LP with a conditional model (making use of the label information of CIFAR10)  with $\lambda=10$ and found a similar performance for both, i.e. 8.537 $\pm$ 0.133 and 8.462 $\pm$ 0.115 for WGAN-GP and WGAN-LP, respectively.

\begin{figure}[H]
\begin{center}
\includegraphics[width=0.49\textwidth]{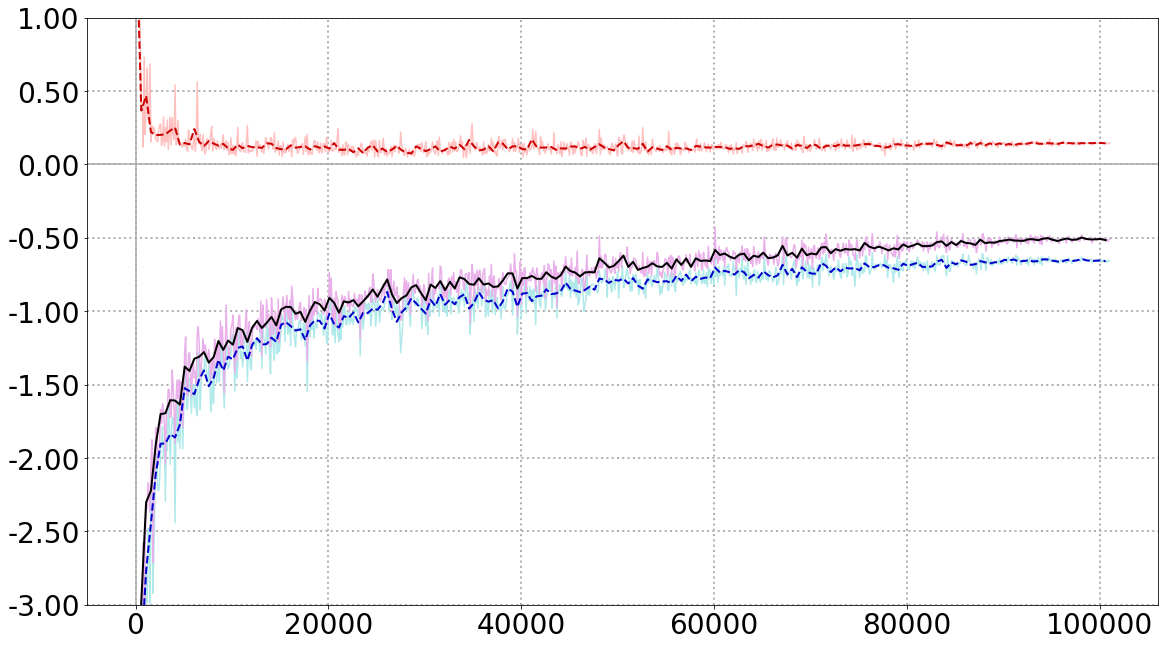}
\includegraphics[width=0.49\textwidth]{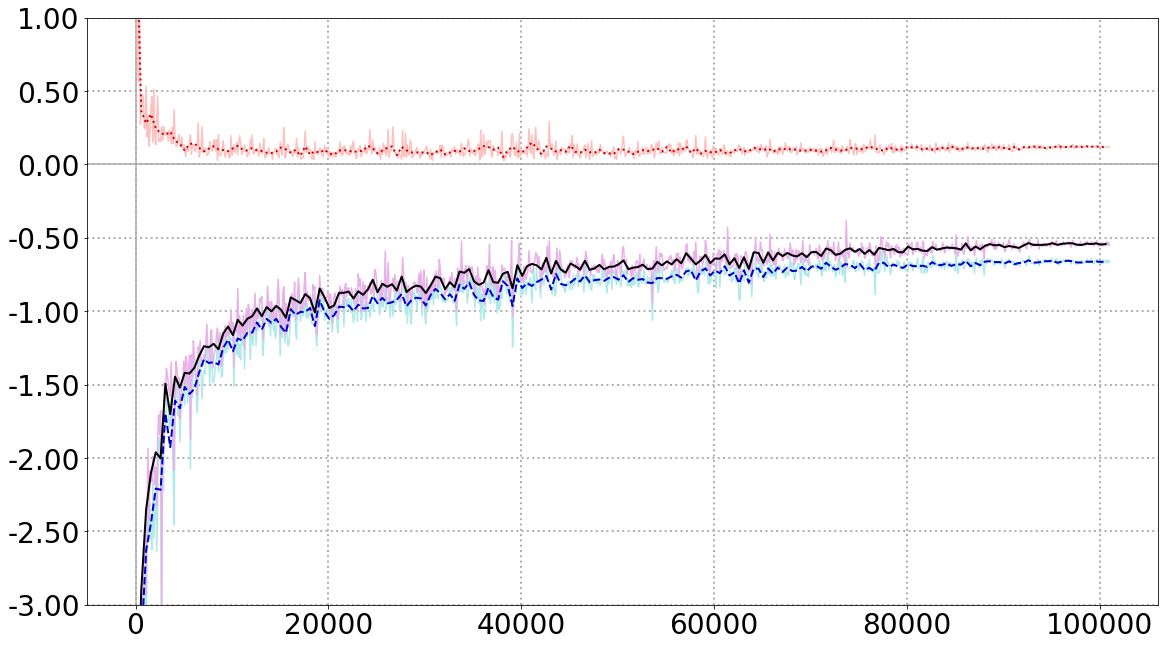} 
\includegraphics[width=0.49\textwidth]{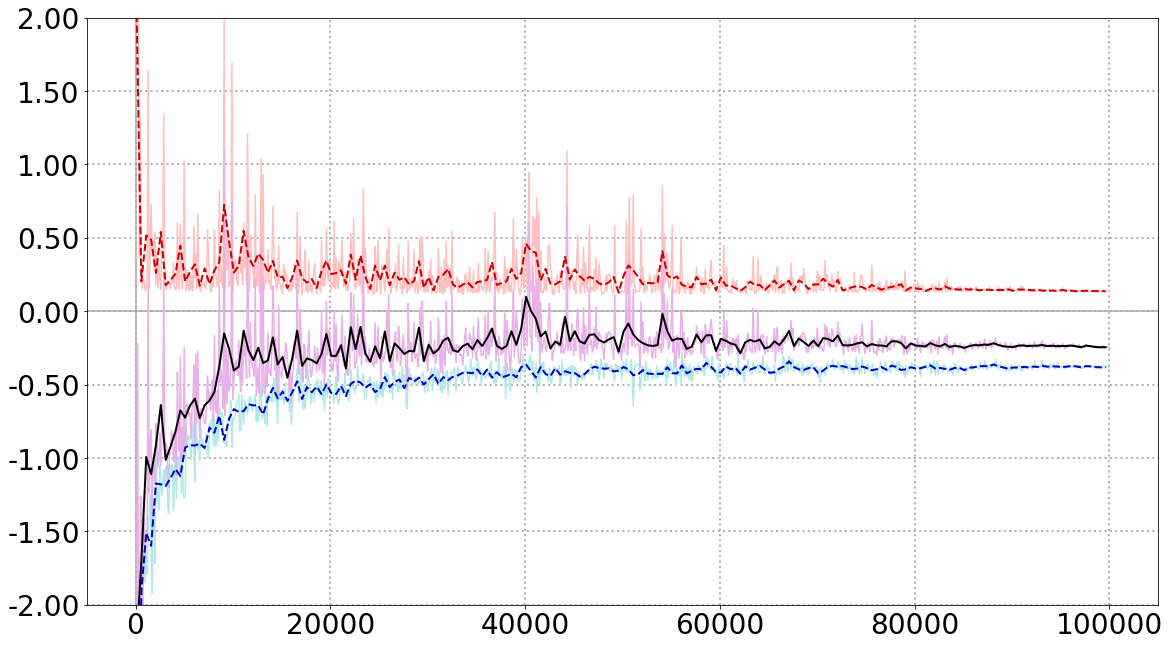}
\includegraphics[width=0.49\textwidth]{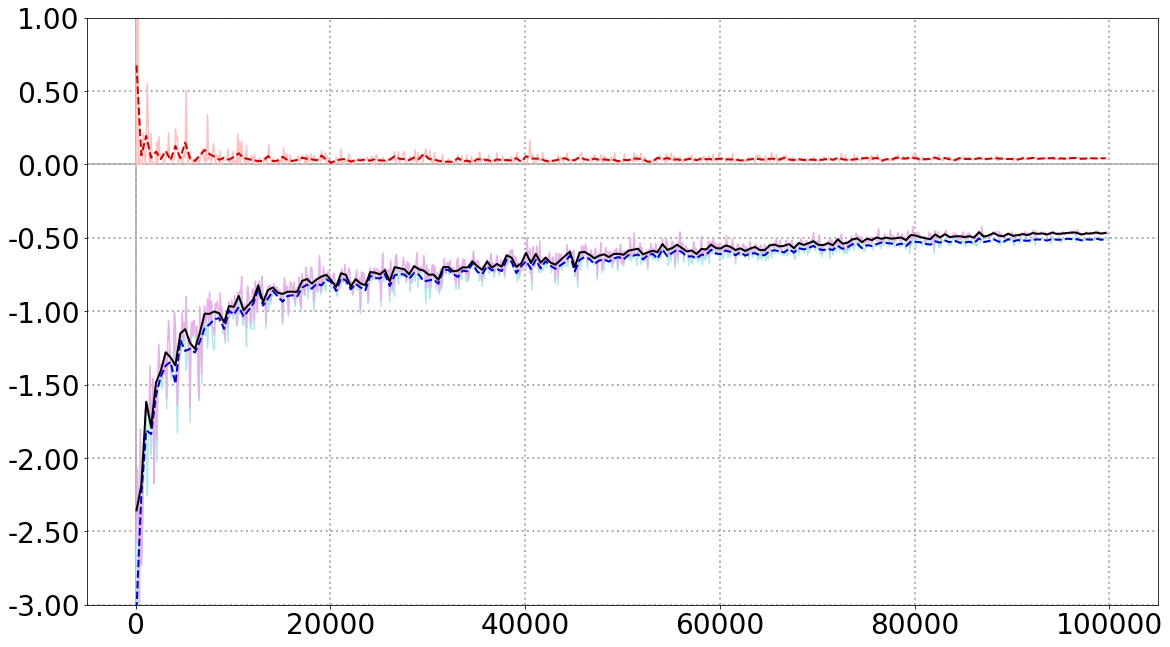} 
\caption{Evolution of validation loss on CIFAR. Black/purple curves indicate the total loss, blue curves the loss without regularization term, and red the regularization term only.
Light colored curves indicate the true values, dark solid lines the average over a window of 5 iterations.
 Left: WGAN-GP. Right: WGAN-LP. Top: with $\lambda=5$. Bottom: with $\lambda=100$.}
\label{fig:dev_set_comparison}
\end{center}
\end{figure}


\subsection{Related penalties}\label{App:Related}

Level sets for WGANs trained with the regularization terms given by Equation~\eqref{eq:LP-simple} and \eqref{eq:OPT-L2} and penalty weight 10 are shown in Figure~\ref{fig:LevelSetSwissRollRelated}. As the evolution of the level sets and the sampled points indicate, training properly converges.
However, on CIFAR-10, the same penalties did not lead to good results. As shown in Figure~\ref{fig:relatedIScompare}, using \eqref{eq:LP-simple} for regularization initially lead to improving Inception scores but then quickly started to diverge, while using \eqref{eq:OPT-L2} lead to even greater instability.

\begin{figure}[h]
\begin{center}
\includegraphics[trim=50 1 50 50,clip,width=2.5cm]{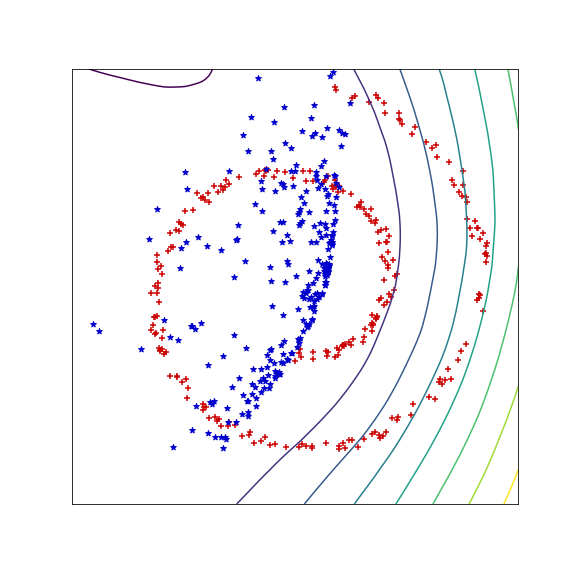}
\includegraphics[trim=50 1 50 50,clip,width=2.5cm]{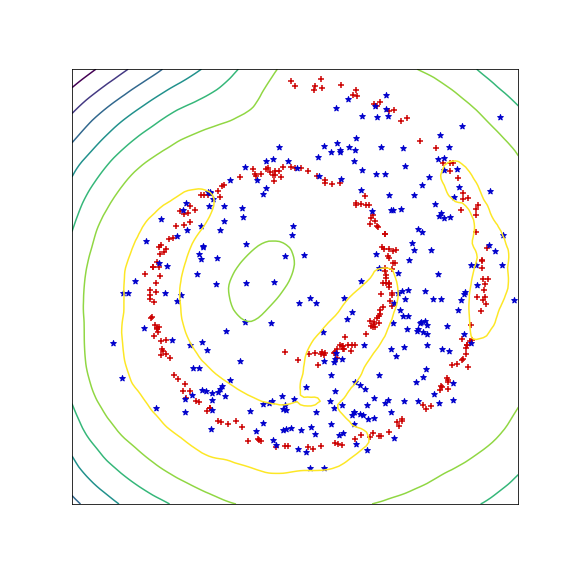}
\includegraphics[trim=50 1 50 50,clip,width=2.5cm]{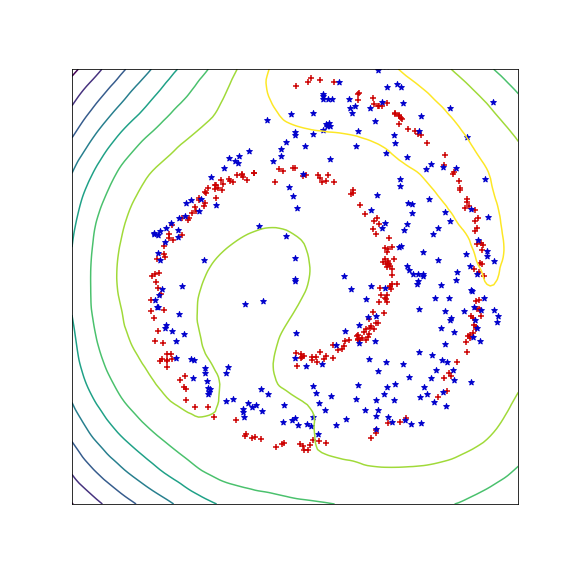}
\includegraphics[trim=50 1 50 50,clip,width=2.5cm]{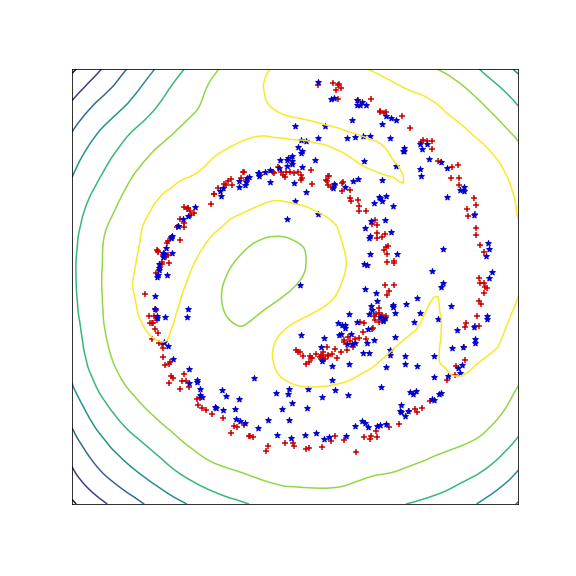}
\includegraphics[trim=50 1 50 50,clip,width=2.5cm]{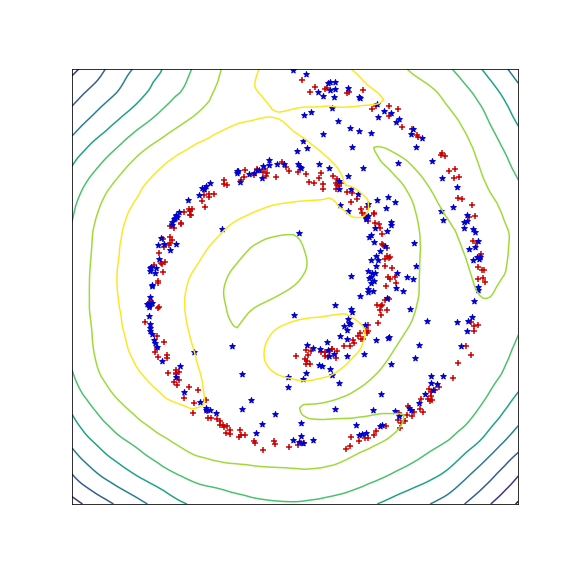}

\includegraphics[trim=50 1 50 50,clip,width=2.5cm]{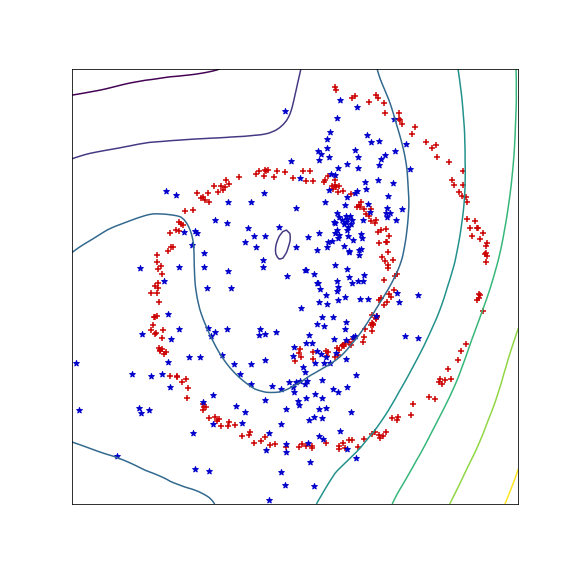}
\includegraphics[trim=50 1 50 50,clip,width=2.5cm]{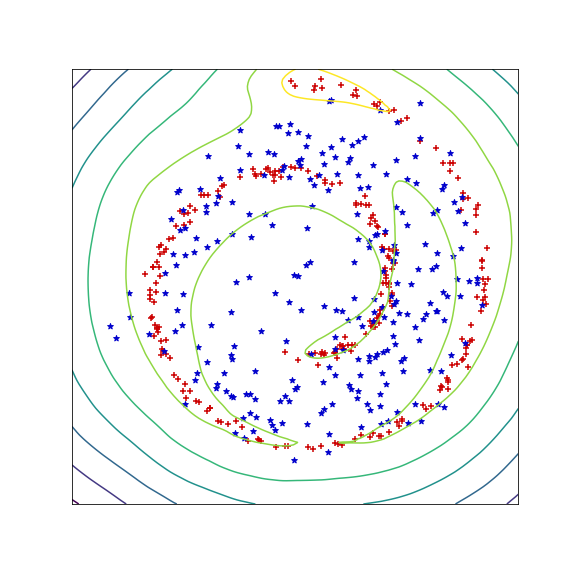}
\includegraphics[trim=50 1 50 50,clip,width=2.5cm]{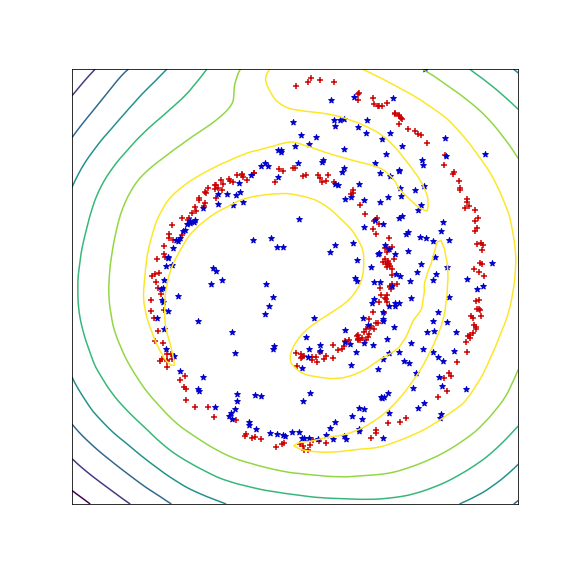}
\includegraphics[trim=50 1 50 50,clip,width=2.5cm]{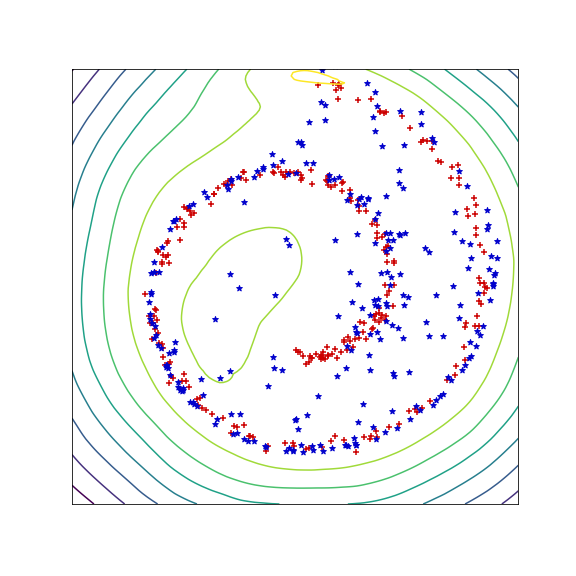}
\includegraphics[trim=50 1 50 50,clip,width=2.5cm]{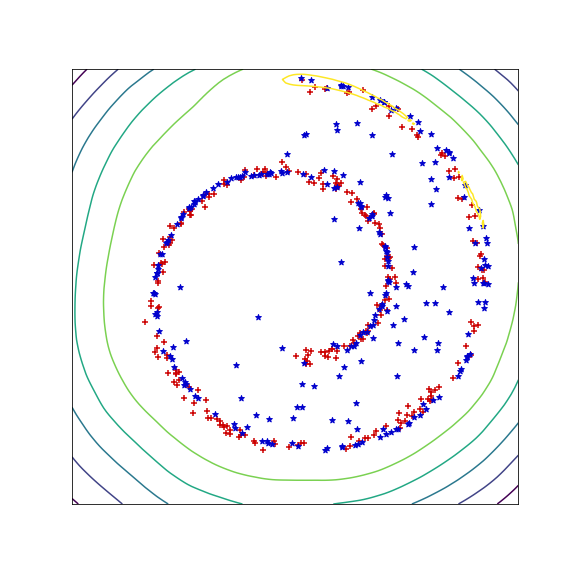}
\caption{Level sets of the critic  $f$ of WGANs during training, after 10, 50, 100, 500, and 1000 iterations.  
Yellow corresponds to high, purple to low values of $f$. Training samples are indicated in red, generated samples in blue.
Top: With the regularization term given in Equation \eqref{eq:LP-simple} and $\lambda=10$.
Bottom:  With the regularization term given in Equation \eqref{eq:OPT-L2} and $\lambda=10$.
}
\label{fig:LevelSetSwissRollRelated}
\end{center}
\end{figure}

\begin{figure}[H]
\begin{center}
\includegraphics[width=0.49\textwidth]{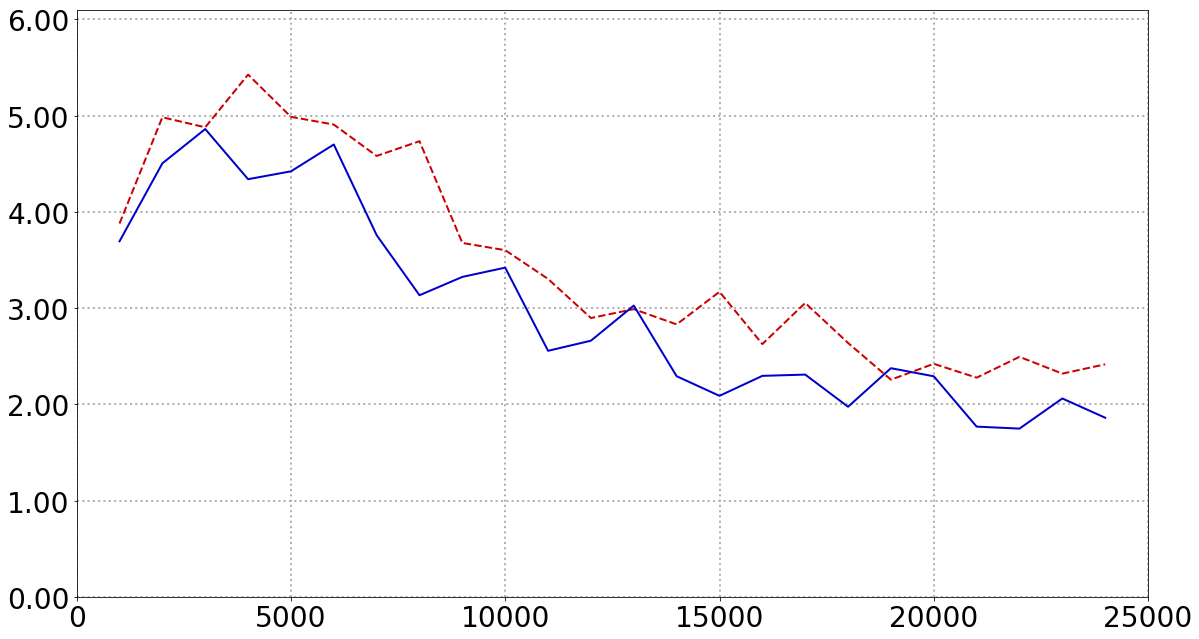}
\includegraphics[width=0.47\textwidth]{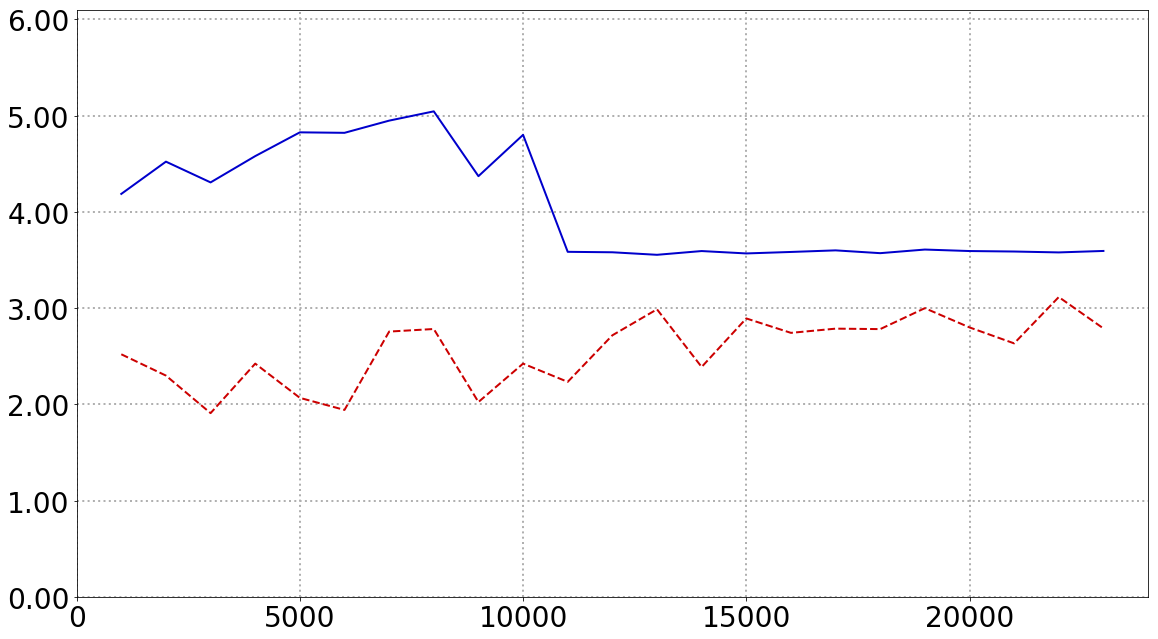}
\caption{Inception scores for regularization Equation~\eqref{eq:LP-simple} for penalty weights 100 (red) and 5 (blue), shown on the left, and Inception scores for training with the regularization Equation~\eqref{eq:OPT-L2} for penalty weights 100 (red) and 5 (blue), shown on the right.}
\label{fig:relatedIScompare}
\end{center}
\end{figure}

\end{document}